\newenvironment{ack}{\subsection*{Acknowledgements}}
\title{Bandit Linear Control}
\author{%
Asaf Cassel%
\thanks{School of Computer Science, Tel Aviv University; \texttt{acassel@mail.tau.ac.il}.}
\and
Tomer Koren%
\thanks{School of Computer Science, Tel Aviv University; \texttt{tkoren@tauex.tau.ac.il}.}
}
\newcommand{\floor}[2][*]{\delim\lfloor\rfloor#1{#2}}
\newcommand{\overbar}[1]{\mkern 2mu\overline{\mkern-2mu#1\mkern-2mu}\mkern 2mu}
\newcommand{\tran}{^{\mkern-1.5mu\mathsf{T}}}
\newcommand{\EE}[2][]{\mathbf{E}_{#1}{#2}}
\newcommand{\EEBrk}[2][*]{\mathbf{E}\delim{[}{]}#1{#2}}
\newcommand{\RR}[1][]{\mathbb{R}^{#1}}
\newcommand{\sphere}[1]{\mathcal{S}^{#1}}
\newcommand{\ball}[1]{\mathcal{B}^{#1}}
\DeclareMathOperator*{\argmin}{arg\,min}
\DeclarePairedDelimiterX\setDef[1]\lbrace\rbrace{#1}
\declaretheoremstyle[
	    spaceabove=\topsep, 
	    spacebelow=\topsep, 
	    bodyfont=\normalfont\itshape,
    ]{theorem}
\declaretheorem[style=theorem,name=Theorem]{theorem}
\declaretheoremstyle[
	    spaceabove=\topsep, 
	    spacebelow=\topsep, 
	    bodyfont=\normalfont,
    ]{definition}
\declaretheoremstyle[
        spaceabove=\topsep, 
        spacebelow=\topsep, 
        bodyfont=\normalfont,
        notefont=\normalfont\bfseries,
        notebraces={}{},
        qed=$\blacksquare$, 
    ]{proofstyle}
\declaretheorem[style=proofstyle,numbered=no,name=Proof]{proof}
\declaretheorem[style=theorem,sibling=theorem,name=Lemma]{lemma}
\declaretheorem[style=theorem,numbered=no,name=Theorem]{theorem*}
\declaretheorem[style=theorem,numbered=no,name=Lemma]{lemma*}
\declaretheorem[style=theorem,numbered=no,name=Corollary]{corollary*}
\declaretheorem[style=theorem,numbered=no,name=Proposition]{proposition*}
\declaretheorem[style=theorem,numbered=no,name=Claim]{claim*}
\declaretheorem[style=theorem,numbered=no,name=Fact]{fact*}
\declaretheorem[style=theorem,numbered=no,name=Observation]{observation*}
\declaretheorem[style=theorem,numbered=no,name=Conjecture]{conjecture*}
\declaretheorem[style=definition,sibling=theorem,name=Definition]{definition}
\declaretheorem[style=definition,numbered=no,name=Definition]{definition*}
\declaretheorem[style=definition,numbered=no,name=Remark]{remark*}
\declaretheorem[style=definition,numbered=no,name=Example]{example*}
\declaretheorem[style=definition,numbered=no,name=Question]{question*}
\declaretheorem[style=definition,numbered=no,name=Assumption]{assumption*}
\newcommand{\pSmooth}{\beta}
\newcommand{\pConvex}{\alpha}
\newcommand{\pLip}{L}
\newcommand{\noiseStd}{\sigma}
\newcommand{\Astar}{A_{\star}}
\newcommand{\Bstar}{B_{\star}}
\newcommand{\memLen}{H}
\newcommand{\rt}[1][t]{r_{#1}}
\newcommand{\rti}[1][t]{r_{#1}^{\brk
[s]{i}}}
\newcommand{\rtOf}[2][t]{r_{#1}^{\brk
[s]{#2}}}
\newcommand{\Mt}[1][t]{M_{#1}}
\newcommand{\Mti}[1][t]{M_{#1}^{\brk[s]{i}}}
\newcommand{\barMt}[1][\tau]{\overbar{M}_{#1}}
\newcommand{\barMti}[1][\tau]{\overbar{M}_{#1}^{\brk[s]{i}}}
\newcommand{\barMtOf}[2][\tau]{\overbar{M}_{#1}^{\brk[s]{#2}}}
\newcommand{\Ut}[1][\tau]{U_{#1}}
\newcommand{\Uti}[1][\tau]{U_{#1}^{\brk[s]{i}}}
\newcommand{\wt}[1][t]{w_{#1}}
\newcommand{\xt}[1][t]{x_{#1}}
\newcommand{\ut}[1][t]{u_{#1}}
\newcommand{\ct}[3][t]{c_{#1}\brk*{#2, #3}}
\newcommand{\bt}[1][t]{b_{#1}}
\newcommand{\isUpRound}[1][t]{\chi_{#1}}
\newcommand{\gHat}[1][t]{\hat{g}_{#1}}
\newcommand{\constraintD}{D}
\newcommand{\constraintDp}{D_{P_0}}
\newcommand{\bcoFeedbackBound}{\hat{C}}
\newcommand{\pMeanShift}[1][t]{\delta_{#1}}
\newcommand{\pPredVar}[1][t]{\vartheta_{#1}}
\newcommand{\bcoMemRegret}[2][H]{\mathcal{R}_{#1}\brk*{#2}}
\newcommand{\algRegretBound}[2][\mathcal{A}]{R_{#1}\brk*{#2}}
\newcommand{\fRange}{\mathcal{K}_{+}}
\newcommand{\fConstraint}{\mathcal{K}}
\newcommand{\bcoFeedback}[1][t]{\hat{f}_{#1}}
\newcommand{\funcSet}{\mathcal{F}}
\newcommand{\baseBCO}{\mathcal{A}}
\newcommand{\baseBCOof}[1]{\mathcal{A}\brk{#1}}
\newcommand{\controlRegret}{\mathcal{R}_{\mathcal{A}}(T)}
\newcommand{\controlPseudoRegret}{\overbar{\mathcal{R}}_{\mathcal{A}}(T)}
\newcommand{\cD}{D}
\newcommand{\cC}{C}
\newcommand{\cG}{G}
\newcommand{\MRange}{\mathcal{M}_{+}}
\newcommand{\MConstraint}{\mathcal{M}}
\newcommand{\MiConstraint}[1][i]{\mathcal{M}^{\brk[s]{#1}}}
\newcommand{\Dxu}{D_{x,u}}
\newcommand{\pSmoothF}{\pSmooth_f}
\newcommand{\pConvexF}{\pConvex_f}
\newcommand{\pLipF}{\pLip_f}
\newcommand{\vStack}[1]{\brk[s]{#1}_{\mathrm{vec}}}
\newcommand{\dM}{d_{\MConstraint}} 
\algnewcommand{\IfThenElse}[3]{%
  \State \algorithmicif\ #1\ \algorithmicthen\ #2\ \algorithmicelse\ #3}
\begin{document}
\maketitle

\begin{abstract}
We consider the problem of controlling a known linear dynamical system under stochastic noise, adversarially chosen costs, and bandit feedback. Unlike the full feedback setting where the entire cost function is revealed after each decision, here only the cost incurred by the learner is observed. We present a new and efficient algorithm that, for strongly convex and smooth costs, obtains regret that grows with the square root of the time horizon $T$. We also give extensions of this result to general convex, possibly non-smooth costs, and to non-stochastic system noise. A key component of our algorithm is a new technique for addressing bandit optimization of loss functions with memory.
\end{abstract}

\section{Introduction}

Reinforcement learning studies sequential decision making problems where a
learning agent repeatedly interacts with an environment and aims to improve her
strategy over time based on the received feedback. One of the most fundamental
tradeoffs in reinforcement learning theory is the \emph{exploration
vs.~exploitation tradeoff}, that arises whenever the learner observes only
partial feedback after each of her decisions, thus having to balance between
exploring new strategies and exploiting those that are already known to perform
well. The most basic and well-studied form of partial feedback is the so-called
``bandit'' feedback, where the learner only observes the cost of her chosen
action on each decision round, while obtaining no information about the
performance of other actions.

Traditionally, the environment dynamics in reinforcement learning are modeled as
a Markov Decision Process (MDP) with a finite number of possible states and
actions. The MDP model has been studied and analyzed in numerous different
settings and under various assumptions on the transition parameters, the nature
of the reward functions, and the feedback model.
Recently, a particular focus has been given to continuous state-action MDPs, and
in particular, to a specific family of models in classic control where the state
transition function is linear. 
Concretely, in \emph{linear control} the state evolution follows the linear dynamics:
\begin{align} \label{eq:linDynamics}
    x_{t+1}
    =
    \Astar x_t
    +
    \Bstar u_t
    +
    w_t,
\end{align}
where $x_t \in \RR[d], u_t \in \RR[k], w_t \in \RR[d]$ are respectively the
system state, action (control), and noise at round $t$, and $\Astar \in \RR[d
\times d], \Bstar \in \RR[d \times k]$ are the system parameters. The goal is to
minimize the total control costs with respect to cost function $c_t(x,u) :
\RR[d] \times \RR[k] \to \RR$ put forward on round $t$.

However, in contrast to the reinforcement learning literature, existing work on
learning in linear control largely assumes the full feedback model, where after
each decision round the learning agent observes the entire cost function $c_t$
used to assign costs on the same round. 
In fact, to the best of our knowledge, thus far linear control has not been
studied in the bandit setting, even in the special case where the costs are
generated stochastically over time.

\paragraph{Contributions.}

In this paper, we introduce and study the \emph{bandit linear control} problem,
where a learning agent has to control a known linear dynamical system (as in
\cref{eq:linDynamics}) under stochastic noise, adversarially chosen convex cost
functions, and bandit feedback. Namely, after each decision round the learner
only observes the incurred cost $c_t(x_t,u_t)$ as feedback. (We still assume,
however, that the state evolution is fully observable.)
For strongly convex and smooth cost functions, we present an efficient bandit
algorithm that achieves $\smash{ \tilde{O}(\sqrt{T}) }$ regret over $T$ decision
rounds, with a polynomial dependence on the natural problem parameters. This
result is optimal up to polylogarithmic factors as it matches the optimal regret
rate in the easier stationary (i.e., stateless) strongly convex and smooth
bandit optimization setting~\citep{shamir2013complexity,hazan2014bandit}.

The starting point of our algorithmic approach is an approximate
reparameterization of the online control problem due to
\cite{agarwal2019online,agarwal2019logarithmic}, called the Disturbance-Action
Policy. In this new parameterization, the control problem is cast in terms of
bounded memory convex loss functions, under which the cost of the learner on
each round depends explicitly only on her last few decisions rather than on the
entire history (this is thanks to strong stability conditions of the learned
policies~\cite{cohen2018online}).

As a key technical tool, we develop a new reduction technique for addressing
bandit convex optimization with bounded memory. While an analogous reduction has
been well established in the full feedback model
\cite{anava2015online,agarwal2019online,agarwal2019logarithmic}, its adaptation
to bandit feedback is far from straightforward. Indeed, loss functions with
memory in the bandit setting were
previously studied by \cite{arora2012online}, that showed a black-box reduction
via a mini-batching approach that, for an algorithm achieving $\smash{
O(T^{1/2}) }$ regret in the no-memory bandit setting, achieves $\smash{
O(T^{2/3}) }$ regret with memory. While this technique imposes very few
restrictions on the adversary, it degrades performance significantly even for
adversaries with bounded memory. In contrast, \cite{anava2015online} show that
the full-feedback setting enjoys nearly no degradation when the adversary's
memory is fixed and bounded. Our new technique establishes a similar lossless
reduction for bandit feedback with adversaries restricted to choosing
\emph{smooth} cost functions. 
Combining these ideas with standard techniques in (no-memory) bandit convex
optimization \cite{flaxman2004online,saha2011improved,hazan2014bandit} gives our
main result.

Our techniques readily extend to weakly convex costs with regret scaling as $\smash{ \tilde{O}(T^{2/3}) }$ in the smooth case and $\smash{ \tilde{O}(T^{3/4}) }$ without smoothness. Moreover, these hold even without the stochastic assumptions on the system noise $w_t$, which were only required in our analysis for preserving the strong convexity of the costs through the reduction to loss functions with memory. We defer further details on these extensions to later sections and choose to focus first on the more challenging case---demonstrating how both the strong convexity and smoothness of the costs are exploited---and where $\smash{\tilde{O}(\sqrt{T})}$ rates are possible.

\paragraph{Related work.}

The study of linear control has seen renewed interest in recent years.
Most closely related to our work are
\cite{cohen2018online,agarwal2019online,agarwal2019logarithmic}, that study
online linear control in the full-information setting. The latter paper
establishes logarithmic regret bounds for the case where the costs are strongly
convex and smooth and the noise is i.i.d.~stochastic. Subsequently,
\cite{foster2020logarithmic} established a similar result for fixed and known
quadratic losses and adversarial disturbances.
Thus, we exhibit a gap between the achievable regret
rates in the full- and bandit-feedback cases of our problem. (A similar gap
exists in standard online optimization with strongly convex and smooth losses
\citep{hazan2007logarithmic,shamir2013complexity}.)

A related yet crucially different setting of partial observability was recently
studied in \cite{simchowitz2020improper}, that considered the case where the
state evolution is not revealed in full to the learner and only a
low-dimensional projection of the state vector is observed instead. However,
this model assumes full observability of the (convex) loss function following
each round, and is therefore not directly comparable to ours.
When the underlying linear system is initially unknown (this is the so-called
adaptive control setting), regret of order $\smash{ \sqrt{T} }$ was recently
shown to be optimal for online linear control even with full feedback and
quadratic (strongly convex)
costs~\citep{cassel2020logarithmic,simchowitz2020naive}. Optimal and efficient
algorithms matching these lower bounds were developed earlier
in~\citep{cohen2019learning,mania2019certainty,agarwal2019online}.

In the reinforcement learning literature on finite Markov Decision Processes
(MDPs), regret minimization with bandit feedback was studied extensively (e.g., \cite{neu2010online,jaksch2010near,dekel2013better,dick2014online,azar2017minimax,rosenberg2019online,jin2019learning}). Our study can thus be viewed as a first step in an analogous treatment of bandit learning in continuous linear control.

\section{Preliminaries}

\subsection{Problem Setup: Bandit Linear Control}
\label{sec:setup}

We consider the setting of controlling a known linear dynamical system with
unknown (strongly) convex losses and bandit feedback. The linear system is an
instance of the process 
described in \cref{eq:linDynamics} where $\Astar$ and $\Bstar$ are known,
initialized for simplicity and without loss of generality at $x_0 = 0$.
(Our assumptions on the nature of the various parameters are specified below.)
Our goal is to minimize the total control cost in the following online setting where an \emph{oblivious} adversary chooses cost functions $c_t : \RR[d] \times \RR[k] \to \RR$ for $t \ge 1$. At round $t$:
\begin{enumerate}[nosep,label=(\arabic*)]
    \item The player chooses control $u_t$;
    \item The system transitions to $x_{t+1}$ according to \cref{eq:linDynamics};
    \item The player observes the new state $x_{t+1}$ and the incurred cost $c_t(x_t, u_t)$.
\end{enumerate}
The overall cost incurred is
$
    J(T)
    =
    \sum_{t=1}^{T} c_t(x_t, u_t)
    .
$
We denote by $J_K(T)$ the overall cost of a linear controller $K \in \RR[d \times k]$, which chooses its actions as $u_t = - K x_t$. For such controllers, it is useful to define the notion of strong stability \cite{cohen2018online} (and its refinement due to \cite{agarwal2019logarithmic}), which is essentially a quantitative version of classic stability notions in linear control.
\begin{definition}[strong stability]
    A controller $K$ for the system $(\Astar, \Bstar)$ is $(\kappa, \gamma)-$strongly stable ($\kappa \ge 1$, $0 < \gamma \le 1$) if there exist matrices $Q,L$ such that
    $\Astar +\Bstar K = Q L Q^{-1}$,
    $\norm{L} \le 1 - \gamma$,
    and
    $\norm{K}, \norm{Q},\norm{Q^{-1}} \le \kappa$.
    If additionally $L$ is complex and diagonal and $Q$ is complex, then $K$ is diagonal $(\kappa, \gamma)$-strongly stable.
\end{definition}
For fixed $\kappa, \gamma$, we define regret with respect to the class $\mathcal{K}$ of $(\kappa,\gamma)-$diagonal strongly stable policies
\begin{align}
    \mathcal{K}
    =
    \brk[c]!{
        K \in \RR[d \times k] :\;
        \text{$K$ is $(\kappa,\gamma)$-diagonal strongly stable w.r.t.~$(\Astar, \Bstar)$}
        }.
\end{align}
Beyond its relative simplicity, this class is interesting as it contains an asymptotic global optimum (with respect to \emph{all} policies) when the costs are constrained to a fixed quadratic function, as in classical control. 
The regret compared to $K \in \mathcal{K}$ is given by
$
    R(T, K)
    =
    J(T) - J_K(T)
$.
The pseudo regret is then defined as
\begin{align*}
    \controlPseudoRegret
    =
    \max_{K \in \mathcal{K}}\EEBrk{R_{\mathcal{A}}(T, K)},
\end{align*}
where the expectation is taken with respect to the randomness of the algorithm, and the system noise.

\paragraph{Assumptions.}
Throughout we assume the following.
There are known constants $\kappa_B \ge 1$, and $W, \cG, \cC, \allowbreak \pConvex, \pSmooth, \noiseStd > 0$ such that:
\begin{enumerate}[nosep]
	\item \label{a:1}(System bound) $\norm{\Bstar} \le \kappa_B$;
	\item \label{a:2} (Noise bound) $\norm{\wt} \le W$ $\forall t \ge 1$;
	\item \label{a:3} (Cost bounds) If $\norm{x},\norm{u} \le \cD$ for large enough $\cD$,%
	\footnote{The precise $\cD$ for which this holds will be specified later as an explicit polynomial in the problem parameters.}
	then
	\begin{align*}
		\abs{\ct{x}{u}} \le \cC \cD^2, \;\;\;
		\norm{\nabla_x \ct{x}{u}},
		\norm{\nabla_u \ct{x}{u}} \le \cG \cD;
	\end{align*}
	\item \label{a:4} (Curvature bounds) The costs $\ct{x}{u}$ are $\pConvex$-strongly convex and $\pSmooth$-smooth;
	\item \label{a:5} (Noise) The disturbances $\wt$ are independent random variables satisfying $\EEBrk[0]{\wt \wt\tran} \succeq \noiseStd^2 I$.
\end{enumerate}
The above assumptions are fairly standard in recent literature (e.g., \cite{agarwal2019online,agarwal2019logarithmic}).

\subsection{Online Optimization with Memory}
\label{sec:OOmem}

We describe the setting of online optimization with memory~\cite{arora2012online,anava2015online}, which will serve as an intermediate framework for our algorithmic development.
In this setting, an oblivious adversary chooses loss functions $f_t : \fRange^{\memLen} \to \RR$ over a domain $\fRange \subseteq \RR[d]$, where $H \ge 1$ is the length of the adversary's memory. The game proceeds in rounds, where in round $t$, the player chooses $x_t \in \fRange$ and observes some form of feedback $\bcoFeedback$.
Performance is evaluated using the expected \emph{policy regret},
\begin{equation}
\label{eq:bcoMemRegretDef}
    \bcoMemRegret{T}
    =
    \EEBrk{\sum_{t=\memLen}^{T} f_t(x_{t+1-H},\ldots,x_t)}
    -
    \min_{x \in \fConstraint} \sum_{t=\memLen}^{T} f_t(x, \ldots, x),
\end{equation}
where $\fConstraint \subseteq \fRange$ is the comparator set,
which may differ from the domain $\fRange$ where the loss functions are defined (and are well behaved).
Notice that for $H=1$, the quantity $\bcoMemRegret[1]{T}$ refers to the regret of standard online optimization, with no memory.

We will rely on the following conditions for the loss functions.
The first is a coordinate-wise Lipschitz property, while the second is standard smoothness, stated explicitly for an $\memLen$-coordinate setup.
\begin{definition} \label{def:coordLipschitz}
    $f : \fRange^{\memLen} \to \RR$ is coordinate-wise $\pLip-$Lipschitz if $\forall i \in \brk[s]{\memLen}$, $x_1, \ldots, x_{\memLen}, y_i \in \fRange$:
    \begin{equation*}
        \abs{
            f(x_1, \ldots, x_i, \ldots, x_H)
            -
            f(x_1, \ldots, y_i, \ldots, x_H)
        }
        \le
        \pLip \norm{x_i - y_i}
        .
    \end{equation*}
\end{definition}
\begin{definition} \label{def:fSmooth}
    $f : \fRange^{\memLen} \to \RR$ is $\pSmooth-$smooth if for any
    $
        x=(x_1, \ldots, x_{\memLen}),
        y=(y_1,\ldots,y_{\memLen}) 
        \in
        \fRange^{\memLen}
        :
    $
    \begin{equation*}
        f(y) - f(x)
        \le
        \sum_{i=1}^{\memLen} \nabla_i f(x)\tran (y_i - x_i) + \frac{\pSmooth}{2} \norm{y_i - x_i}^2
        ,
    \end{equation*}
    where $\nabla_i$ is the gradient with respect to $x_i$. When $x_1 = \ldots = x_H = z$, we compress notation to $\nabla_i f(z)$.
\end{definition}

\subsection{Disturbance-Action Policies}
\label{sec:dap}

Online linear control may be approximated by certain loss functions with memory, via a reparameterization suggested in \cite{agarwal2019online,agarwal2019logarithmic} named the Disturbance Action Policy (DAP).
For completeness, we state the parameterization here even though our technical development will be mostly orthogonal.

\begin{definition}[Disturbance-Action Policy]
    For a fixed linear controller $K_0 \in \mathcal{K}$ and parameters $M = (M^{\brk[s]{1}}, \ldots, M^{\brk[s]{H}})$ with $M^{\brk[s]{i}} \in \RR[k \times d]$, a \emph{disturbance-action policy} chooses an action at time $t$ as
    \begin{align*}
        u_t(M)
        =
        - K_0 x_t
        +
        \sum_{i=1}^{H} M^{\brk[s]{i}} w_{t-i},
    \end{align*}
    where for notational convenience we say $w_i = 0$ for $i \le 0$.
\end{definition}
This parameterization reduces the decision of the player at time $t$ to choosing
$
    M_t
    =
    (M_t^{\brk[s]{1}}, \ldots, M_t^{\brk[s]{H}})
    .
$
The comparator set is given by
$
    \MConstraint
    =
    \MiConstraint[1] \times \cdots \times \MiConstraint[\memLen]
    ,
$
where
\begin{align*}
    \MiConstraint
    =
    \brk[c]*{M \in \RR[k \times d] \;:\; \norm{M} \le 2 \kappa_B \kappa^3 (1 - \gamma)^i},
\end{align*}
however, the player may choose $M_t$ from the slightly larger
$
    \MRange
    =
    \brk[c]{2M \;:\; M \in \MConstraint}
    .
$
The following defines the adversary's cost functions, referred to as surrogate or ideal cost functions. It is a summary of Definitions 4.2, 4.4, and 4.5 of \cite{agarwal2019online}, as well as 3.4 of \cite{agarwal2019logarithmic}, and while we do not use it explicitly, we give it here for the sake of concreteness.
\begin{definition} \label{def:surrogateCosts}
    Denote by $M_{0:H}$ a sequence of policies $M_0, \ldots, M_H \in \MRange$. For a controller $K$, let $\tilde{A}_K = \Astar + K \Bstar$, and define:
    \begin{enumerate}[nosep,label=(\arabic*)]
        \item (disturbance-state transfer matrix)
            $
                \Psi_{i}^{K} (M_{0:H-1})
                =
                \tilde{A}_K^i \mathbb{1}_{i \le H}
                +
                \sum_{j=1}^{H} \tilde{A}_K^j \Bstar M_{H-j}^{\brk[s]{i-j}} \mathbb{1}_{i-j \in \brk[s]{1, H}}
                ;
            $
        \item (ideal state)
            $
                y_{t+1}^{K}(M_{0:H-1})
                =
                \sum_{i=0}^{2H} \Psi_{i}^{K}(M_{0:H-1}) \wt[t-i]
                ;
            $
        \item (ideal action)
            $
                \upsilon_{t}^{K}(M_{0:H})
                =
                - K y_{t}^{K}(M_{0:H-1})
                +
                \sum_{i=1}^{H} M_{H}^{\brk[s]{i}} \wt[t-i]
                .
            $
    \end{enumerate}
    The surrogate or ideal costs and their expected value are respectively defined as:
    \begin{align*}
        f_t(M_{0:H})
        =
        \ct{y_t^K(M_{0:H-1})}{\upsilon_t^K(M_{0:H})}
        ,\qquad
        F_t(M_{0:H})
        =
        \EE[w]{\brk[s]{\ct{y_t^K(M_{0:H-1})}{\upsilon_t^K(M_{0:H})}}}
        .
    \end{align*}
\end{definition}
The following are statements of the reduction's key results due to \cite{agarwal2019logarithmic}. Denote:
\begin{align}
\label{eq:def1}
    H
    =
    \gamma^{-1} \log 2\kappa^3 T
    ,\qquad
    \Dxu
    =
    8 \gamma^{-1} \kappa_B \kappa^3 W (H \kappa_B + 1)
    .
\end{align}
The first result relates the costs $f_t, F_t$ and the associated regret to the original losses and regret.
\begin{lemma}%
\label{lemma:controlToOCO}
    For any algorithm $\mathcal{A}$ that plays policies $M_1, \ldots, M_T \in \MRange$ we have:
    \begin{enumerate}[nosep,label=(\roman*)]
        \item $\norm{x_t},\norm{u_t} \le \Dxu$, and thus $\abs{\ct{x_t}{u_t}} \le \cC \Dxu^2$;
        \item 
            $
                \abs{\ct{x_t}{u_t} - f_t(M_{t-H:t})}
                \le
                \cG \Dxu^2 / T
                ;
            $
        \item
            $
                \controlPseudoRegret
                \le
                \EE{}\brk[s]*{
                    \sum_{t=H+1}^{T} F_t(M_{t-H:t})
                    -
                    \min_{M_* \in \MConstraint}
                    \sum_{t=H+1}^{T} F_t(M_*,\ldots,M_*)
                }
                +
                2 \Dxu^2 (\cG + H \cC)
                .
            $
    \end{enumerate}
\end{lemma}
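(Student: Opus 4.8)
The plan is to prove the three items in sequence; (i) and (ii) are fairly routine consequences of the disturbance-action parameterization, while the real accounting lives in (iii).

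\emph{Item (i).} I would unroll the closed loop. Under a disturbance-action policy with base controller $K_0$, the realized dynamics read $\xt[t+1] = \tilde{A}_0 \xt + \Bstar \sum_{i=1}^{\memLen} \Mti[t]\wt[t-i] + \wt$, where $\tilde{A}_0$ is the closed-loop transition matrix of $K_0$; iterating from $\xt[0]=0$ writes $\xt$ as a linear combination of the noises $\wt[s]$, $s<t$, whose coefficients are products of powers $\tilde{A}_0^j$ with the $\Bstar\Mti[\cdot]$ terms. Strong stability of $K_0$ gives $\norm{\tilde{A}_0^j}\le\kappa^2(1-\gamma)^j$, and $\Mt[t]\in\MRange$ gives $\norm{\Mti[t]}\le 4\kappa_B\kappa^3(1-\gamma)^i$; plugging in $\norm{\wt[s]}\le W$ (Assumption~\ref{a:2}) and summing the two nested geometric series yields $\norm{\xt}\le\Dxu$ with $\Dxu$ exactly as in \cref{eq:def1} (the $H\kappa_B+1$ factor being the contribution of the $H$ memory terms plus the direct noise term). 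The bound $\norm{\ut}\le\Dxu$ follows from the policy formula with the same estimates, and $\abs{\ct{\xt}{\ut}}\le\cC\Dxu^2$ is then immediate from Assumption~\ref{a:3}, provided the still-unspecified threshold $\cD$ is chosen $\le\Dxu$ --- which it will be, as both are polynomials in the same quantities.

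\emph{Item (ii).} I would compare $(\xt,\ut)$ with the ideal pair $(y_t^K(M_{t-H:t-1}),\upsilon_t^K(M_{t-H:t}))$ of \cref{def:surrogateCosts}. By construction the ideal state is the truncation of $\xt$ to the contribution of the last $2H$ noises under the last $H{+}1$ played policies, so the discarded tail is a geometric remainder of order $\kappa^2(1-\gamma)^{H}\cdot\mathrm{poly}(\kappa,\kappa_B,\gamma^{-1},W,H)$. Since \cref{eq:def1} sets $H=\gamma^{-1}\log 2\kappa^3 T$, we have $(1-\gamma)^{H}\le e^{-\gamma H}\le(2\kappa^3 T)^{-1}$, giving $\norm{\xt-y_t^K},\norm{\ut-\upsilon_t^K}=O(\Dxu/T)$. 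As both pairs lie in the radius-$\Dxu$ ball by (i), the gradient bound $\norm{\nabla_x\ct{x}{u}},\norm{\nabla_u\ct{x}{u}}\le\cG\Dxu$ of Assumption~\ref{a:3} turns this into $\abs{\ct{\xt}{\ut}-f_t(M_{t-H:t})}\le\cG\Dxu^2/T$. This is an adaptation of the counterfactual-cost estimates of \cite{agarwal2019online,agarwal2019logarithmic}.

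\emph{Item (iii) and the main obstacle.} Split $J(T)=\sum_{t=1}^{H}\ct{\xt}{\ut}+\sum_{t=H+1}^{T}\ct{\xt}{\ut}$; by (i) the first sum is at most $H\cC\Dxu^2$ in magnitude, and by (ii) the second is at most $\sum_{t=H+1}^{T}f_t(M_{t-H:t})+(T-H)\cG\Dxu^2/T\le\sum_{t=H+1}^{T}f_t(M_{t-H:t})+\cG\Dxu^2$. Taking expectations and passing from $f_t$ to $F_t$ --- legitimate because for the played policies $M_{t-H:t}$ the noise realizations entering $f_t$ are independent of the information determining those policies, so $\EEBrk{f_t(M_{t-H:t})}=\EEBrk{F_t(M_{t-H:t})}$ --- gives $\EEBrk{J(T)}\le H\cC\Dxu^2+\cG\Dxu^2+\EEBrk{\sum_{t=H+1}^{T}F_t(M_{t-H:t})}$. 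On the comparator side, the DAP machinery of \cite{agarwal2019logarithmic} supplies, for each benchmark controller $K$, parameters $M(K)\in\MConstraint$ whose induced trajectory tracks that of $K$ up to the same exponentially small per-round error; running the identical two-term decomposition for $K$'s trajectory (which also lies in the radius-$\Dxu$ ball, as $K$ is $(\kappa,\gamma)$-strongly stable), and noting that here $M(K)$ is deterministic so the $f_t\to F_t$ step is exact, yields $\EEBrk{J_K(T)}\ge\min_{M_*\in\MConstraint}\sum_{t=H+1}^{T}F_t(M_*,\ldots,M_*)-H\cC\Dxu^2-\cG\Dxu^2$. Subtracting and maximizing over $K$ collects the four error terms into $2\Dxu^2(\cG+H\cC)$, as claimed. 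The delicate points --- and the main obstacle --- are the two ``$f_t\to F_t$'' passages: justifying that replacing the random surrogate costs by their noise-expectations does not change the expectation for the \emph{played}, history-dependent policies (this is precisely where the independent-noise assumption is used), and that the benchmark linear controllers are representable by comparator parameters $M(K)\in\MConstraint$ with only $O(\Dxu^2/T)$ per-round tracking error. By contrast, (i) is a nested geometric-series estimate and (ii) a truncation-plus-Lipschitz estimate.
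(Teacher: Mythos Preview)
The paper does not give its own proof of this lemma; it is imported from \cite{agarwal2019online,agarwal2019logarithmic} as a summary of the disturbance-action reduction. So there is no in-paper argument to compare against directly. Your sketches for (i) and (ii) follow the standard route in those references (geometric tail from strong stability; truncation error plus the gradient bound in Assumption~\ref{a:3}), and are fine.

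For (iii), however, the step you single out as ``delicate'' is in fact broken as written. You pass from $f_t$ to $F_t$ on the algorithm's side by asserting that ``the noise realizations entering $f_t$ are independent of the information determining those policies $M_{t-\memLen:t}$''. This is false for a general algorithm covered by the lemma: by \cref{def:surrogateCosts}, $f_t$ depends on the disturbances $w_{t-1-2\memLen},\ldots,w_{t-1}$, while $M_t$ may depend (through the bandit feedback) on all of $w_1,\ldots,w_{t-1}$ --- in particular on $w_{t-1}$. Hence the tower-of-expectations identity $\EEBrk{f_t(M_{t-\memLen:t})}=\EEBrk{F_t(M_{t-\memLen:t})}$ does not follow. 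This is not a cosmetic point: it is exactly the obstruction that drives the paper to inflate the effective memory to $2(\memLen+1)$ in the proof of \cref{thm:main} and to design the randomized delayed-update schedule of \cref{alg:bcoReduction} so as to \emph{manufacture} this independence. The paper itself flags the issue in the proof of \cref{thm:mainExtended}, noting that item (iii) ``is, in fact, proven for $f_t$ rather than $F_t$'' in the cited source; the $f_t$ version needs no such independence (your comparator argument via a deterministic $M(K)\in\MConstraint$ is correct for that version), and the $F_t$ statement is then a corollary whose justification you have not supplied. As written, your (iii) establishes the $f_t$ version of the bound but not the $F_t$ version claimed.
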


The second result establishes certain desirable properties of the cost functions $f_t$ and $F_t$.

\begin{lemma}%
\label{lemma:SurrogateProperties}
    Let
    $
        \tilde{F}_t : M \mapsto F_t(M,\ldots,M)
        .
    $
    Then:
    \begin{enumerate}[nosep,label=(\roman*)]
        \item $f_t, F_t$ are coordinate-wise $\pLipF$-Lipschitz over $\MRange$, with $\pLipF = 2 \kappa_B \gamma^{-1} \kappa^3 G \Dxu W$;
        \item \label{item:smoothSurrogate} if $\ct{\cdot}{\cdot}$ are $\pSmooth$-smooth then $f_t, F_t, \tilde{F}_t$ are $\pSmoothF$-smooth over $\MRange$ with
        $
            \pSmoothF
            =
            25 \pSmooth \kappa_B^2 \kappa^6 W^2 H / \gamma^2
        $;
        \item If $\ct{\cdot}{\cdot}$ are $\pConvex$-strongly convex and $\EEBrk[0]{\wt\wt\tran} \succeq \noiseStd^2 I$ then $\tilde{F}_t$ are $\pConvexF-$strongly convex over $\MConstraint$ with 
        $
            \pConvexF
            =
            \tfrac{1}{36} \pConvex \noiseStd^2 \gamma^2 / \kappa^{10}
            .
        $
    \end{enumerate}
\end{lemma}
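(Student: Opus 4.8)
\emph{Proof plan.} The plan is to exploit the structural fact, immediate from \cref{def:surrogateCosts}, that on the relevant domains the ideal state $y_t^K$ and ideal action $\upsilon_t^K$ are \emph{affine} in the policy parameters, with coefficients built from the (fixed) disturbances and powers of $\tilde A_K=\Astar+K\Bstar$: each disturbance-state transfer matrix $\Psi_i^K(M_{0:H-1})$ is affine in $M_{0:H-1}$, hence so is $y_{t+1}^K=\sum_i\Psi_i^K\wt[t-i]$, and then $\upsilon_t^K=-Ky_t^K+\sum_{i=1}^H M_H^{\brk[s]{i}}\wt[t-i]$ is affine in $M_{0:H}$. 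Writing $g=g_w$ for this random affine map $M_{0:H}\mapsto(y_t^K,\upsilon_t^K)$ and $\EEBrk{\cdot}$ for expectation over the disturbances, we have $f_t=c_t\circ g$, $F_t=\EEBrk{c_t\circ g}$, and $\tilde F_t=\EEBrk{c_t\circ g}$ with $g$ further restricted to the diagonal $M_0=\dots=M_H=M$ (still affine, now in $M$ alone). All three items then reduce to: (a) an operator-norm bound on the linear part $Dg$, controlled by a geometric series $\sum_{j\ge1}\norm{\tilde A_K^j}\norm{\Bstar}\norm{\wt}\le\kappa^2\kappa_B W\sum_{j\ge1}(1-\gamma)^j\le\kappa^2\kappa_B W/\gamma$ (using diagonal strong stability of $K$ and $\norm{\wt}\le W$); and, for item (iii) only, (b) a matching \emph{lower} bound on $\EEBrk{(Dg)\tran Dg}$. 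Throughout I use $\norm{y_t^K},\norm{\upsilon_t^K}\le\Dxu$, so the cost bounds of assumption~\ref{a:3} apply at the relevant arguments (part of the reduction; cf.\ \cref{lemma:controlToOCO}(i)).

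Items (i) and (ii) are then bookkeeping. For (i): by the chain rule a perturbation of a single policy-slot $M_j$ changes $f_t$ (or $F_t$) by $(\nabla_x c_t)\tran\partial_{M_j}y_t^K+(\nabla_u c_t)\tran\partial_{M_j}\upsilon_t^K$; bounding $\norm{\nabla_x c_t},\norm{\nabla_u c_t}\le\cG\Dxu$ and the operator norms of $\partial_{M_j}y_t^K,\partial_{M_j}\upsilon_t^K$ by the geometric series of (a) (plus the direct term from $\sum_i M_H^{\brk[s]{i}}\wt[t-i]$) gives the stated $\pLipF$ up to constants. For (ii): since $c_t$ is $\pSmooth$-smooth and $g$ is affine, $\nabla^2 f_t=(Dg)\tran\nabla^2 c_t\,Dg\preceq\pSmooth\norm{Dg}^2I$; bounding $\norm{Dg}$ by the same geometric series summed over all $H{+}1$ slots (paying a factor of order $H$ for the number of disturbance terms in $y_t^K$) yields $\pSmoothF\lesssim\pSmooth\kappa_B^2\kappa^6W^2H/\gamma^2$, and this transfers verbatim to $F_t$ and $\tilde F_t$ since expectation and restriction to the diagonal preserve smoothness; \cref{def:fSmooth} then follows from $\nabla^2\preceq\pSmoothF I$ by a Taylor expansion.

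Item (iii) is where the real work lies. With $g$ restricted to the diagonal (affine in $M$), $\nabla^2(c_t\circ g)(M)=(Dg)\tran\nabla^2 c_t\,Dg\succeq\pConvex(Dg)\tran Dg$ by $\pConvex$-strong convexity of the costs, so $\nabla^2\tilde F_t(M)\succeq\pConvex\EEBrk{(Dg)\tran Dg}$ uniformly in $M$; it thus suffices to show $\EEBrk{\norm{\delta y_t^K}^2+\norm{\delta\upsilon_t^K}^2}\gtrsim(\noiseStd^2/\kappa^2)\norm{\Delta}^2$ for every direction $\Delta=(\Delta^{\brk[s]{1}},\dots,\Delta^{\brk[s]{H}})$, where $\delta y_t^K,\delta\upsilon_t^K$ are the induced first-order changes. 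The observation that makes this clean, and free of any factor growing with $H=\gamma^{-1}\log 2\kappa^3 T$, is the cancellation identity read straight off the form of $\upsilon_t^K$,
\begin{equation*}
    \delta\upsilon_t^K+K\,\delta y_t^K
    =
    \sum_{i=1}^H\Delta^{\brk[s]{i}}\wt[t-i]
    ,
\end{equation*}
i.e.\ the policy enters $(y_t^K,\upsilon_t^K)$ only through this explicit combination. Hence $\norm{\delta y_t^K}^2+\norm{\delta\upsilon_t^K}^2\ge\tfrac{1}{2\kappa^2}\norm{\delta\upsilon_t^K+K\delta y_t^K}^2=\tfrac{1}{2\kappa^2}\norm{\sum_{i=1}^H\Delta^{\brk[s]{i}}\wt[t-i]}^2$ (using $\norm{K}\le\kappa$), and since $\wt[t-1],\dots,\wt[t-H]$ are independent and zero-mean with $\EEBrk{\wt[t-i]\wt[t-i]\tran}\succeq\noiseStd^2 I$ (assumption~\ref{a:5}), the cross terms vanish in expectation and $\EEBrk{\norm{\sum_i\Delta^{\brk[s]{i}}\wt[t-i]}^2}\ge\noiseStd^2\norm{\Delta}^2$. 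This gives $\nabla^2\tilde F_t\succeq\tfrac{1}{2\kappa^2}\pConvex\noiseStd^2 I$ on $\MConstraint$ (indeed everywhere), which in particular implies the stated $\pConvexF=\tfrac{1}{36}\pConvex\noiseStd^2\gamma^2/\kappa^{10}$.

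The hard part, as anticipated, is step (b). The tempting alternative, projecting onto $\upsilon_t^K$ alone and reading $\Delta^{\brk[s]{i}}$ off the coefficient of $\wt[t-i]$ there, which is $\Delta^{\brk[s]{i}}$ plus a triangular combination $-\sum_{\ell\le i-2}K\tilde A_K^{\,i-1-\ell}\Bstar\Delta^{\brk[s]{\ell}}$ of earlier directions, forces inverting a unitriangular operator $I-N$ whose off-diagonal part satisfies $\norm{N}=O(\kappa^3\kappa_B/\gamma)$, a quantity that exceeds $1$ in general; the only available bound on $(I-N)^{-1}=\sum_{k<H}N^k$ is then of order $\norm{N}^H$, exponential in $H=\tilde\Theta(1/\gamma)$, which destroys the polynomial dependence on the parameters. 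The cancellation identity sidesteps the triangular inversion entirely, at the cost of a single factor $\kappa^2$, by bringing the ideal \emph{state} $y_t^K$ into play alongside the ideal action. This is also the one and only place where the noise non-degeneracy of assumption~\ref{a:5} enters, which is precisely why stochastic disturbances are needed to preserve strong convexity through the reduction, but, as remarked in the introduction, not for the weakly convex variants.
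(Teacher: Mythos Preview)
Your approach to item~(ii) is essentially the same as the paper's: write $f_t=c_t\circ z_t$ with $z_t(M_{0:H})=(y_t^K,\upsilon_t^K)$ affine, apply the chain rule to get $\nabla^2 f_t=(J_{z_t})\tran\nabla^2 c_t\,J_{z_t}\preceq\pSmooth\norm{J_{z_t}}^2I$, and bound $\norm{J_{z_t}}$ via the Lipschitz constant of $z_t$. The paper obtains that Lipschitz constant by citing an intermediate inequality from \cite{agarwal2019online} and then a Cauchy--Schwarz/geometric-series argument, arriving at $\norm{J_{z_t}}\le 5\kappa_B\kappa^3 W\sqrt{H/\gamma}$, exactly the $\sqrt{\pSmoothF/\pSmooth}$ you sketch; the treatment of $\tilde F_t$ is identical.

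For items~(i) and~(iii) the paper gives no proof at all---it defers to \cite{agarwal2019logarithmic}---so your sketches go beyond what the paper does. Your argument for~(iii) via the cancellation identity $\delta\upsilon_t^K+K\,\delta y_t^K=\sum_i\Delta^{\brk[s]{i}}\wt[t-i]$ is cleaner than the cited approach and yields the much stronger constant $\tfrac{1}{2\kappa^2}\pConvex\noiseStd^2$ rather than $\tfrac{1}{36}\pConvex\noiseStd^2\gamma^2/\kappa^{10}$; the cited proof does not exploit this identity and instead controls the triangular structure directly, paying extra $\kappa$ and $\gamma$ factors. One caveat: your vanishing-cross-terms step requires $\EEBrk{\wt}=0$, which the paper's Assumption~\ref{a:5} does not state explicitly (only independence and $\EEBrk{\wt\wt\tran}\succeq\noiseStd^2 I$); the cited work does include zero mean, so this is harmless in context, but you should flag the additional hypothesis.
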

We note that the second claim of \cref{lemma:SurrogateProperties} was not previously established. We prove it in \cref{sec:fSmoothProof} using similar techniques to those used for the first claim.

\section{Algorithm and Main Results}
\label{sec:mainAlg}

We present a new algorithm for the bandit linear control problem, detailed in \cref{alg:main}, for which we prove:
\begin{theorem}
\label{thm:main}
    Let $\memLen, \Dxu, \pConvexF, \pSmoothF$ be as in \cref{eq:def1,lemma:SurrogateProperties}, $K_0$ be a $(\kappa,\gamma)-$diagonal strongly stable controller, and $n = \min\brk[c]{d,k}$.
    Suppose \cref{alg:main} is run with the above parameters, and
    \begin{align*}
        \eta
        =
        \sqrt{\frac{3 n^2 + (15 \pSmoothF / \pConvexF) \log T}{T d^2 k^2 \cC^2 \Dxu^4}}
        ,
        \qquad
        \rti
        =
        \brk[s]*{(2\kappa_B \kappa^3(1-\gamma)^i)^{-2} + \tfrac12 \pConvex_f \eta t}^{-1/2}
        .
    \end{align*}
    Then the expected pseudo-regret is bounded as
    \begin{align*}
        \controlPseudoRegret
        \le
        4 d k \cC \Dxu^2 (\memLen+1)^2
        \sqrt{T \brk{3 n^2 + (15 \pSmoothF / \pConvexF) \log T}}
        +
        \tilde{O}(T^{1/4})
        .
    \end{align*}
\end{theorem}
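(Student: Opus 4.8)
The plan is to prove the bound by composing two reductions --- first from bandit linear control to bandit online optimization with $\memLen$-step memory, then from there to memoryless strongly convex bandit convex optimization --- while tracking the regret overhead introduced at each stage. The starting point is \cref{lemma:controlToOCO}: part~(iii) bounds $\controlPseudoRegret$ by the expected policy regret of \cref{alg:main} on the surrogate costs $F_t$ over the comparator class $\MConstraint$, up to the fixed additive term $2\Dxu^2(\cG+\memLen\cC)$, and part~(ii) is the one place the gap between full and bandit feedback enters the bound explicitly: since the learner observes $\ct{\xt}{\ut}$ rather than $f_t(M_{t-\memLen:t})$, feeding the received scalars to the gradient estimator in place of the surrogate values costs at most $\cG\Dxu^2$ over the whole run. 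Together with \cref{lemma:SurrogateProperties}, the task reduces to the following: \cref{alg:main} performs bandit online convex optimization on cost functions $F_t:\MRange^{\memLen+1}\to\RR$ that are coordinate-wise $\pLipF$-Lipschitz and $\pSmoothF$-smooth, whose diagonalizations $\tilde F_t(M)=F_t(M,\ldots,M)$ are $\pConvexF$-strongly convex over $\MConstraint$, observing only $f_t(M_{t-\memLen:t})$, and we must bound $\EEBrk{\sum_{t\ge\memLen+1}F_t(M_{t-\memLen:t})}-\min_{M_*\in\MConstraint}\sum_{t\ge\memLen+1}\tilde F_t(M_*)$.

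Write $\barMt[t]$ for the descent centers maintained by \cref{alg:main}, $M_t=\barMt[t]+(\rti\Uti[t])_{i}$ for the perturbed policies it actually plays, and $\hat g_t$ for its one-point (Flaxman--Kalai--McMahan style) gradient estimates. Splitting the target quantity as $\sum_t\bigl[F_t(M_{t-\memLen:t})-\tilde F_t(\barMt[t])\bigr]+\sum_t\bigl[\tilde F_t(\barMt[t])-\tilde F_t(M_*)\bigr]$, the second sum is the memoryless part and I would handle it with the standard machinery for strongly convex and smooth bandit convex optimization \citep{flaxman2004online,saha2011improved,hazan2014bandit}: on a spherical smoothing of $\tilde F_t$ the conditional expectation of $\hat g_t$ is, up to a smoothing bias $O(\pSmoothF\,\rti)$ and the discrepancy between $\barMt[t]$ and the recent centers, a last-slot gradient of $F_t$ evaluated at the diagonal point $(\barMt[t],\ldots,\barMt[t])$, which by the Lipschitz property in \cref{lemma:SurrogateProperties} has norm at most $\pLipF$; invoking the $\pConvexF$-strong convexity of $\tilde F_t$ then turns the descent inequality into a regret bound in which the estimator's second moment $\EEBrk{\norm{\hat g_t}^2}$ is kept under control by the shrinking schedule $\rti=[(2\kappa_B\kappa^3(1-\gamma)^i)^{-2}+\tfrac12\pConvexF\eta t]^{-1/2}$, which is designed precisely so that the $\sum_t(\rti)^2$-type smoothing and perturbation biases balance against $\eta\sum_t\EEBrk{\norm{\hat g_t}^2}$. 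Optimizing over $\eta$ as in the statement produces the leading term $O\bigl(dk\cC\Dxu^2(\memLen+1)^2\sqrt{T\,(n^2+(\pSmoothF/\pConvexF)\log T)}\bigr)$, the $n^2$ arising from the second-moment analysis of the estimator over the matrix blocks $\MiConstraint$ and the $(\memLen+1)^2$ from the memory factors produced below.

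The first sum, $\sum_t\bigl[F_t(M_{t-\memLen:t})-\tilde F_t(\barMt[t])\bigr]$, is where the genuinely new work lies, and I expect this to be the main obstacle. Expanding $F_t$ about the diagonal point $(\barMt[t],\ldots,\barMt[t])$ with $\pSmoothF$-smoothness, the second-order remainder is $O\bigl(\pSmoothF\sum_t\sum_{j}(\norm{\barMt[t-j]-\barMt[t]}^2+(\rti[t-j])^2)\bigr)$, which is $\tilde{O}(\sqrt{T})$ once the schedules are substituted, and the first-order terms in the perturbations $\rti[t-j]\Uti[t-j]$ vanish in expectation by isotropy up to $O(\pSmoothF(\rti)^2)$ curvature corrections. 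What remains is the first-order movement term $\sum_t\sum_{i=1}^{\memLen}\innProd{\nabla_{\memLen-i}F_t(\barMt[t],\ldots,\barMt[t]),\barMt[t-i]-\barMt[t]}$. Bounding it crudely via $\norm{\barMt[t-i]-\barMt[t]}\le\eta\sum_s\norm{\hat g_s}$ together with $\norm{\hat g_s}\lesssim 1/\rti[s]$ yields only $\tilde{O}(T^{3/4})$ --- exactly the degradation suffered by the black-box mini-batch reduction of \cite{arora2012online}. The lossless reduction instead substitutes the descent recursion $\barMt[s+1]-\barMt[s]=-\eta\hat g_s$ (up to projection) and decomposes $\hat g_s$ into its conditional mean --- a smoothed last-slot gradient, hence of norm at most $\pLipF$, whose contribution is $O(\eta\pLipF^2\memLen^2 T)=\tilde{O}(\sqrt{T})$, mirroring the full-information memory reduction of \cite{anava2015online} --- plus a martingale-difference noise $\xi_s$. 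The residual noise term $\EEBrk{\sum_t\sum_i\eta\innProd{\nabla_{\memLen-i}F_t(M_t,\ldots,M_t),\sum_s\xi_s}}$ is then shown to almost cancel: one replaces each gradient $\nabla_{\memLen-i}F_t(M_t,\ldots,M_t)$ by its value at the earlier center $\barMt[s]$, which is determined before round $s$ and is therefore orthogonal to $\xi_s$ in conditional expectation, while the replacement error is absorbed using $\pSmoothF$-smoothness times the $O(\eta)$-small displacement $\norm{M_t-\barMt[s]}$. This last step is exactly where smoothness of the adversary's costs is indispensable; all the remaining residuals are $\tilde{O}(\sqrt{T})$, and the crudest of them --- of the form $\eta\pLipF\sum_t\rti[t-\memLen]$ --- account for the $\tilde{O}(T^{1/4})$ correction in the statement.

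Finally, summing the contributions of the three stages --- the additive terms of \cref{lemma:controlToOCO}, the memoryless regret, and the memory and movement terms --- and collecting the explicit constants yields the stated bound. The crux, and the place I expect to spend the most effort, is the movement term: showing that the interaction between the bounded-memory costs and the high-variance bandit gradient estimates does not worsen the rate beyond $\tilde{O}(\sqrt{T})$, which rests entirely on the smoothness guaranteed by the second claim of \cref{lemma:SurrogateProperties}.
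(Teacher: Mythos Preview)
Your decomposition and the martingale treatment of the movement term are in the right spirit, but the proposal overlooks the single mechanism on which the whole argument rests: the randomized update schedule built into \cref{alg:main}. You write as though the center $\barMt[t]$ is updated every round via $\barMt[s+1]-\barMt[s]=-\eta\hat g_s$, but the algorithm draws Bernoulli variables $b_t$ and updates only when $b_t\prod_{i=1}^{2\memLen+1}(1-b_{t-i})=1$, so consecutive updates are at least $2(\memLen+1)$ rounds apart. This is not cosmetic; it is the crux of the proof, and your analysis breaks without it in two separate places.

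First, without the schedule the one-point estimate $\hat g_t$ is, as you correctly note, an estimate of only the \emph{last-slot} gradient $\nabla_{\memLen} f_t$. But the ``standard machinery'' for strongly convex BCO on $\tilde F_t$ requires an approximately unbiased estimate of the \emph{full} diagonal gradient $\nabla\tilde F_t=\sum_{j=0}^{\memLen}\nabla_j F_t$; there is no index-shifting trick that turns a sequence of last-slot gradients into full gradients when the adversary's $F_t$ vary with $t$, so the second sum in your split cannot be bounded as you claim. With the schedule, at every update time the last $2(\memLen+1)$ played policies all equal $M_t$, so the feedback is $\tilde f_t(M_t)$ up to $O(1/T)$ and the one-point estimate targets $\nabla\tilde F_t$ directly.

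Second, you silently pass from the observed feedback $\ct{\xt}{\ut}\approx f_t(M_{t-\memLen:t})$ to the expected surrogate $F_t$ by averaging over the system noise $w$. But $f_t$ depends on $w_{t-2\memLen-1:t-1}$, and without the schedule the policies $M_{t-\memLen:t}$ depend on these same noises through earlier feedback; the expectation does not factor, and $\EEBrk{\hat g_t}$ is not a gradient of $F_t$ at any deterministic point. The schedule is exactly what decouples the played policy from the noise in the current feedback window at update times --- this is the ``extended feedback model'' of \cref{sec:bcoMem} and the reason the effective memory is $2(\memLen+1)$ rather than $\memLen+1$.

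The paper's argument instead views \cref{alg:main} as an instance of the reduction \cref{alg:bcoReduction} with the base procedure of \cref{lemma:baseBCO}, uses \cref{lemma:BCOmemNoMem} to transfer the no-memory regret to the full time axis via the random schedule, and bounds the movement cost by the smooth case of \cref{thm:BCOreduction} (where smoothness is what replaces $\pPredVar$ by $\pPredVar^2$). Your martingale idea for the movement term is a plausible \emph{alternative} to that last step, but it cannot substitute for the schedule: you still need it both to make the gradient estimate hit $\tilde F_t$ and to restore independence from the system noise.
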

The big-$\tilde{O}$ notation in the theorem hides polynomial dependence in the problem parameters and poly-log dependence on $T$. We prove \cref{thm:main} later in \cref{sec:mainAlgProof} after discussing our reduction technique.

There are three main components to the algorithm:
\begin{enumerate}[leftmargin=!,label=(\arabic*)]
    \item A randomized schedule to determine the times of parameter updates, which ensures that these are at least $2(H+1)$ apart and $O(H)$ in expectation. This is part of our new reduction scheme, which is presented and discussed in \cref{sec:bcoMem}.%
    \item A standard one-point gradient estimate that gives a (nearly-)unbiased estimate for the gradient of a function based on bandit feedback, by perturbing the policy using uniform samples from the unit Euclidean sphere of $\RR[(d \times k) \times H]$; this is denoted as 
    $U \sim \sphere{(k \times d) \times \memLen}$.
    \item A preconditioned (online) gradient update rule that uses mixed regularization and standard Euclidean projections
    $
        \Pi_{\MiConstraint}\brk[s]{M}
        =
        \argmin_{M' \in \MiConstraint} \norm{M - M'}_F
        .
    $
\end{enumerate}

The mixed regularization, inspired by \cite{hazan2014bandit}, is comprised of two terms (see $\smash{ \rti }$ in \cref{thm:main}):
the first exploits the strong convexity of the (expected) loss functions, while the second accounts for the small diameter of~$\MiConstraint$, which might be significantly smaller than the magnitude of the perturbations required for the gradient estimates (this is particularly problematic for large $i$).
To avoid sampling too far away from the ``admissable'' set, where the cost functions are well-behaved, we cap the perturbations of the one-point estimate according to the radii $\smash{ \brk[c]{\rti}_{i \in \brk[s]{\memLen}} }$ and increase the regularization term to account for the higher variance of the resulting gradient estimate.

\begin{algorithm}[t]
	\caption{Bandit Linear Control} \label{alg:main}
	\begin{algorithmic}[1]
		\State \textbf{input:}
		    controller $K_0$,
		    memory length $\memLen$,
	        step size $\eta$,
	        and coordinate-wise sampling radii $\rti$
        \State 
            \textbf{Draw}
		        $\Ut[1] \sim \sphere{(k \times d) \times \memLen}$.
	    \State \textbf{Initialize}
	        $\tau = 1$,
		    $\barMt[1] = 0$,
		    $\Mti[1] = \rti[1]\Uti[1] \quad (\forall i \in \brk[s]{\memLen}) .$
	        
        \For{$t = 1, \ldots, T$}
            \State \textbf{Play}
                $
                    \ut
                    =
                    - K_0 \xt
                    +
                    \sum_{i=1}^{\memLen} \Mti \wt[t-i],
                $
            \State \textbf{Observe}
                $\xt[t+1]$ and $\ct{\xt}{\ut}$;
                update
                $\wt = \xt[t+1] - \Astar \xt - \Bstar \ut$.
            \State \textbf{Draw} $\bt \sim $ Bernoulli($1/(2\memLen+2)$)
            \If{
                $t \ge 2\memLen+2$
                and
                $\bt \prod_{i=1}^{2\memLen + 1} (1-\bt[t-i]) = 1$
                }
                \State \textbf{Update}
                    $
                        \barMti[\tau+1]
                        =
                        \Pi_{\MiConstraint}\brk[s]!{\barMti - \eta  d k \memLen \ct{\xt}{\ut} \rti[\tau]\Uti}
                    $
                    \quad
                    $(\forall i \in \brk[s]{\memLen}).$
                \State
                    $\tau \gets \tau + 1$
                \State
                    \textbf{Draw}
        		        $\Ut \sim \sphere{(k \times d) \times \memLen}$.
                \State 
                    $\Mti[t+1]
                    =
                    \barMti + \rti[\tau]\Uti$
                    \quad
                    $(\forall i \in \brk[s]{\memLen}).$
            \Else
                \State $\Mt[t+1] = \Mt$
            \EndIf
        \EndFor
	\end{algorithmic}
\end{algorithm}

\section{Bandit Convex Optimization with Memory}
\label{sec:bcoMem}

In this section we give the details of our reduction from BCO with memory to standard BCO, that constitutes a key element of our main algorithm.
The application to bandit linear control, however, will require a slightly more general feedback model than the usual notion of bandit feedback. %

\subsection{Setup}

We consider the online optimization with memory setting described in \cref{sec:OOmem}, with feedback model such that on round $t$:
\begin{enumerate}[nosep,label=(\arabic*),leftmargin=!]
    \item The player chooses $x_t \in \fRange$, and independently, the adversary draws a random $\xi_t$;
    \item The player observes feedback
    $
        \bcoFeedback
        =
        \bcoFeedback(x_{t+1-\memLen:t}; \xi_{t+1-\memLen:t})
    $
    such that, if $x_{t+1-\memLen:t}$ are jointly independent of $\xi_{t+1-\memLen:t}$, then
    $%
        \abs{\EE[\xi_{t+1-\memLen:t}]{\brk[s]{\bcoFeedback}} - f_t(x_{t+1-\memLen:t})}
        \le
        \varepsilon.
    $%
\end{enumerate}
The above expectation is only with respect to the variables $\xi_{t+1-\memLen:t}$, and $\varepsilon \ge 0$ is a fixed parameter (possibly unknown to the player).
Our feedback model, which may potentially seem non-standard, encompasses the following ideas, both of which are necessary for the application to linear control: 
\begin{itemize}[nosep,leftmargin=!]
    \item In the standard no-memory setting ($\memLen = 1$), standard arguments apply even if the feedback received by the learner is randomized, as long as it is independent of the learner's decision on the same round.
    In the memory setting, the analogous condition is that the last $\memLen$ decisions do not depend on the adversary's randomness during this time. 
    \item We allow feedback of the form $\bcoFeedback = f_t(x_t) + \varepsilon_t$, where $\varepsilon_t$ is a small \emph{adaptive} adversarial disturbance that can depend on all past history
    (yet is at most $\varepsilon$ in absolute value).
\end{itemize}

\subsection{Base BCO Algorithm}
\label{sec:baseBCO}

The reduction relies on the following properties of the base algorithm $\baseBCOof{T}$ for BCO with no memory, that can be used against an adversary that chooses loss function from $\funcSet \subseteq \brk[c]{f : \fRange \to \RR}$, and observing feedback satisfying
$
    \abs{\EE[\xi_t]{\brk[s]{\bcoFeedback[t]}} - f_t(x_t)}
    \le
    \varepsilon
    :
$
\begin{enumerate}[nosep,leftmargin=!,label=(\roman*)]
    \item Its regret at times $t \le T$ is bounded as $\bcoMemRegret[1]{t} \le \algRegretBound{T}$ where $\algRegretBound{T} \ge 0$;
    \item Its predictions $x_1, \ldots, x_T$ satisfy
        $
            \norm{\bar{x}_{t+1} - \bar{x}_t}
            \le
            \pMeanShift
        $
        , and
        $
            \norm{x_{t} - \bar{x}_t}
            \le
            \pPredVar
        $
        almost surely, where $\bar{x}_t$ is the expected value of $x_t$ conditioned on all past history up to (not including) the player's decision at time $t$, and $\pMeanShift,\pPredVar$ are decreasing sequences.
\end{enumerate}
The above properties are satisfied by standard BCO algorithms, often without any modification. 
In particular, these algorithms are amenable to our randomized and perturbed feedback model and often require only a slight modification in their regret analyses to account for the additive disturbances.
(In \cref{sec:baseBCOproofs} we give an analysis of a concrete BCO algorithm in this setting.)

Notice that, crucially, $\pMeanShift$ bounds the change in the algorithm's \emph{expected} predictions as opposed to their actual movement. This is crucial as typical BCO algorithms add large perturbations to their predictions (as part of their gradient estimation procedure), with magnitude often significantly larger than the change in the underlying expected prediction; i.e., it holds that $\pPredVar \gg \pMeanShift$.
Our reduction procedure is able to exploit this observation to improve performance for smooth functions.

\subsection{The Reduction}

We can now present our reduction from BCO with memory to BCO with no memory ($\memLen = 1$); see \cref{alg:bcoReduction}.
The idea is simple: we use a base algorithm for standard BCO, denoted here by $\baseBCO$, using the observed feedback, but make sure that $\baseBCO$ is updated at most once in every $H$ rounds. Since the setup is adversarial, we cannot impose a deterministic update schedule; instead, we employ a randomized schedule in which $\baseBCO$ is invoked with probability $1/H$ on each round,
but constrained~so that this does not happen too frequently.
(A similar technique was used in a different context in~\cite{dekel2014blinded,cesa2018nonstochastic}.)

The induced spacing between consecutive updates of $\baseBCO$ serves two purposes at once: first, it reduces the $H$-memory loss functions to functions of a single argument, amenable to optimization using $\baseBCO$; second, it facilitates (conditional) probabilistic independence between consecutive updates which is crucial for dealing with the extended feedback model as required by the application to linear control.
(We note that these conditions are not satisfied by existing techniques~\cite{arora2012online,anava2015online,agarwal2019online}.)

\begin{algorithm}[ht]
	\caption{BCO Reduction} \label{alg:bcoReduction}
	\begin{algorithmic}[1]
		\State \textbf{input:} memory length $\memLen$, BCO algorithm $\baseBCOof{T / \memLen}$.
		\State \textbf{set:} $x_1 \gets$ $\mathcal{A}$.initialize()
        
        \For{$t = 1, \ldots, T$}
            \State Play $x_t$ (independently, adversary draws $\xi_t$)
            \State Observe feedback $\bcoFeedback(x_{t+1-\memLen:t}; \xi_{t+1-\memLen:t})$
            \State Draw $\bt \sim $ Bernoulli($1 / \memLen$)
            \If{$t \ge \memLen$ and $\bt \prod_{i=1}^{\memLen - 1} (1-\bt[t-i]) = 1$}
                \State $x_{t+1} \gets$ $\mathcal{A}$.update($\bcoFeedback$)
            \Else
                \State $x_{t+1} \gets x_t$
            \EndIf
        \EndFor
	\end{algorithmic}
\end{algorithm}

The following is the main result of the reduction.
\begin{theorem} \label{thm:BCOreduction}
    Suppose \cref{alg:bcoReduction} is run using $\mathcal{A}(T / \memLen)$ satisfying the above properties:
    \begin{enumerate}[nosep,label=(\roman*)]
        \item  If $\tilde{f}_t : x \mapsto f_t(x, \ldots, x)$ satisfy $\tilde{f}_t \in \funcSet$, and $f_t$ are coordinate-wise $\pLip-$Lipschitz then
        \begin{align*}
            \bcoMemRegret{T}
            \le
            3 \memLen \algRegretBound{\frac{T}{\memLen}}
            +
            \frac12 \pLip \memLen^2 \sum_{t = 1}^{\floor{T / \memLen}} \brk{ \pMeanShift + 2\pPredVar };
        \end{align*}
        \item If additionally $\tilde{f}_t$ are convex and $f_t$ are $\pSmooth-$smooth, then
        \begin{align*}
            \bcoMemRegret{T}
            \le
            3 \memLen \algRegretBound{\frac{T}{\memLen}}
            +
            \frac12 \memLen^2 \sum_{t=1}^{\floor{T / \memLen}+1} \brk{\pLip \pMeanShift + \pSmooth \pMeanShift^2 + 6 \pSmooth \pPredVar^2}.
        \end{align*}
    \end{enumerate}
\end{theorem}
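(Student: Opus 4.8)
The plan is to split the policy regret $\bcoMemRegret{T}$ into (a) a suitably rescaled copy of the regret of the base algorithm $\baseBCO$ run only on the rounds where it is actually updated, plus (b) a ``boundary'' error accounting for the $O(\memLen)$ rounds after each update on which the memory window $x_{t+1-\memLen:t}$ straddles two distinct iterates of $\baseBCO$; both parts of the theorem share the bound on (a) and differ only in how (b) is controlled (coordinate-Lipschitzness for (i), smoothness for (ii)). Write $\tau_1<\dots<\tau_N$ for the update rounds, $p_s$ for the point $\baseBCO$ plays throughout the block $B_s=\brk[s]{\tau_{s-1}+1,\tau_s}$ (with $\tau_0=0$), and $Z_t:=x_{t+1-\memLen}$. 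From the trigger $\bt\prod_{i=1}^{\memLen-1}(1-\bt[t-i])=1$ I would record: \emph{(1)} an update at $\tau$ forces $\bt[\tau+1]=\dots=\bt[\tau+\memLen-1]=0$, so consecutive updates are $\ge\memLen$ apart, hence $N\le\floor{T/\memLen}$ and $\baseBCO(T/\memLen)$ is never over-run; \emph{(2)} for $t\ge\memLen$ the event ``update at $t$'' depends only on the fresh coins $\bt[t+1-\memLen],\dots,\bt$ and has probability exactly $\rho:=(1-1/\memLen)^{\memLen-1}/\memLen\ge1/(e\memLen)$, irrespective of everything else; \emph{(3)} that event implies $\bt[t+1-\memLen]=\dots=\bt[t-1]=0$, so there was no update in $\brk[s]{t+1-\memLen,t-1}$ and hence $x_t=Z_t$; moreover $p_s$ is determined by round $\tau_{s-1}+1\le\tau_s-\memLen+1$, hence independent of the fresh randomness $\xi_{\tau_s+1-\memLen:\tau_s}$ — precisely what makes $\baseBCO$'s $\varepsilon$-accurate feedback model apply at each update (the window at $\tau_s$ being $(p_s,\dots,p_s)$ by (1), so that feedback concerns $\tilde{f}_{\tau_s}(p_s)$).

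On any round with no update in $\brk[s]{t+1-\memLen,t-1}$ the window equals $(Z_t,\dots,Z_t)$, so $f_t(x_{t+1-\memLen:t})=\tilde{f}_t(Z_t)$; the remaining rounds are the $\le\memLen-1$ ``boundary'' rounds $t\in\brk[s]{\tau_s+1,\tau_s+\memLen-1}$ after each update, where the window has $t-\tau_s$ coordinates equal to $p_{s+1}$ and the rest equal to $Z_t=p_s$. Thus $\bcoMemRegret{T}=\mathrm{(a)}+\mathrm{(b)}$ with $\mathrm{(a)}=\EEBrk{\sum_t\tilde{f}_t(Z_t)}-\sum_t\tilde{f}_t(x^*)$ and $\mathrm{(b)}=\EEBrk{\sum_t\brk*{f_t(x_{t+1-\memLen:t})-\tilde{f}_t(Z_t)}}$, where $x^*\in\fConstraint$ attains the comparator minimum. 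For $\mathrm{(a)}$, conditioning on the history through round $t-\memLen$ (which fixes $Z_t$ and leaves $\bt[t+1-\memLen:t]$ fresh), facts (2)--(3) give the identity $\EEBrk{\mathbb{1}\{\text{upd at }t\}\,\tilde{f}_t(x_t)}=\rho\,\EEBrk{\tilde{f}_t(Z_t)}$, while obliviousness gives $\EEBrk{\mathbb{1}\{\text{upd at }t\}\,\tilde{f}_t(x^*)}=\rho\,\tilde{f}_t(x^*)$. Summing over $t$, the two left-hand sides are exactly the cumulative loss of $\baseBCO$ and of the fixed comparator $x^*$ on $\baseBCO$'s reduced horizon (losses $\tilde{f}_{\tau_s}\in\funcSet$, feedback $\varepsilon$-accurate), so the regret guarantee of $\baseBCO$ yields $\rho\cdot\mathrm{(a)}\le\algRegretBound{T/\memLen}$, i.e.\ $\mathrm{(a)}\le(1/\rho)\algRegretBound{T/\memLen}\le3\memLen\,\algRegretBound{T/\memLen}$.

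Only boundary rounds contribute to $\mathrm{(b)}$, so I would bound the contribution of each update $s$, using $\norm{p_{s+1}-p_s}\le\pMeanShift[s]+2\pPredVar[s]$ from $p_{s+1}-p_s=(p_{s+1}-\bar{p}_{s+1})+(\bar{p}_{s+1}-\bar{p}_s)+(\bar{p}_s-p_s)$ and the base-algorithm properties ($\bar{p}_s$ the conditional mean iterate). For (i), coordinate-Lipschitzness gives $\abs{f_t(x_{t+1-\memLen:t})-\tilde{f}_t(Z_t)}\le\pLip(t-\tau_s)\norm{p_{s+1}-p_s}$, and summing the $\memLen-1$ boundary rounds of each of the $\le\floor{T/\memLen}$ updates reproduces $\tfrac12\pLip\memLen^2\sum_s\brk*{\pMeanShift[s]+2\pPredVar[s]}$. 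For (ii), apply the smoothness inequality of \cref{def:fSmooth} around $Z_t=p_s$: the quadratic part contributes $O(\pSmooth\memLen^2\norm{p_{s+1}-p_s}^2)$ per update, and in the linear part $\sum_i\nabla_i f_t(p_s)\tran(p_{s+1}-p_s)$ over the changed coordinates I use the same three-term split — the $(p_{s+1}-\bar{p}_{s+1})$ term vanishes in expectation, since $\nabla_i f_t(p_s)$ is measurable before $\baseBCO$'s $(s{+}1)$-th move while $p_{s+1}-\bar{p}_{s+1}$ is conditionally mean-zero; the $(\bar{p}_{s+1}-\bar{p}_s)$ term gives an $O(\pLip\pMeanShift[s])$ contribution; and the $(\bar{p}_s-p_s)$ term, although correlated with $\nabla_i f_t(p_s)$, is tamed by writing $\nabla_i f_t(p_s)=\nabla_i f_t(\bar{p}_s)+\brk*{\nabla_i f_t(p_s)-\nabla_i f_t(\bar{p}_s)}$ (the first piece contributing $0$, the bracket being $O(\pSmooth\pPredVar[s])$ by smoothness), so it costs only $O(\pSmooth\pPredVar[s]^2)$ rather than an unaffordable $\pLip\pPredVar[s]$. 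Grouping the per-coordinate linear terms over the at most $\memLen$ changed coordinates and applying Cauchy--Schwarz (to get the correct power of $\memLen$), then summing over boundary rounds and the $\le\floor{T/\memLen}+1$ updates and using $(\pMeanShift[s]+2\pPredVar[s])^2\le2\pMeanShift[s]^2+8\pPredVar[s]^2$, yields $\tfrac12\memLen^2\sum_s\brk*{\pLip\pMeanShift[s]+\pSmooth\pMeanShift[s]^2+6\pSmooth\pPredVar[s]^2}$; convexity of $\tilde{f}_t$ enters here and, implicitly, in the validity of $\algRegretBound{\cdot}$.

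The step I expect to be the main obstacle is $\mathrm{(a)}$ — establishing $\EEBrk{\mathbb{1}\{\text{upd at }t\}\,\tilde{f}_t(x_t)}=\rho\,\EEBrk{\tilde{f}_t(Z_t)}$ with full rigor: one must carefully disentangle the dependence between the coins that decide whether $t$ is an update and the iterate $x_t$ in force there, by conditioning on the history through round $t-\memLen$, using that $x_{t+1-\memLen}$ is already determined by then, that the intervening coins are fresh and independent of that history, and that an update at $t$ forces $x_t=Z_t$, so that $\mathbb{1}\{\text{upd at }t\}\,\tilde{f}_t(x_t)=\mathbb{1}\{\bt[t+1-\memLen:t-1]=0\}\,\bt\,\tilde{f}_t(Z_t)$ factorizes into mutually independent pieces. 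A secondary delicate point is the linear-term cancellation in (ii), where the measurability bookkeeping relative to $\baseBCO$'s internal step counter (as opposed to actual rounds) has to be tracked with care.
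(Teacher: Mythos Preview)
Your plan is essentially correct and, for part (a) and part (i), coincides with the paper's argument almost verbatim: the paper packages your step (a) as a separate lemma (your identity $\EEBrk{\mathbb{1}\{\text{upd at }t\}\tilde f_t(x_t)}=\rho\,\EEBrk{\tilde f_t(Z_t)}$ is exactly what it proves, with the same conditioning), and for (i) it bounds the boundary error by coordinate-Lipschitzness and a telescoping count, just as you do. One small slip in your fact (1): an update at $\tau$ does not force the \emph{future} coins $b_{\tau+1},\dots,b_{\tau+H-1}$ to be zero; rather, a hypothetical update at $\tau'<\tau+H$ would require $b_\tau=0$, contradicting $b_\tau=1$. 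The conclusion (gaps $\ge H$, hence $N\le\lfloor T/H\rfloor$) is unaffected.

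For part (ii) your route differs from the paper's. You expand $f_t(x_{t+1-H:t})-\tilde f_t(p_s)$ around the \emph{actual} iterate $p_s$ and then neutralize the linear term via the three-way split of $p_{s+1}-p_s$, using that $\nabla_i f_t(\bar p_s)^{\mathsf T}(\bar p_s-p_s)$ averages to zero and bounding the residual $[\nabla_i f_t(p_s)-\nabla_i f_t(\bar p_s)]^{\mathsf T}(\bar p_s-p_s)$ by gradient-Lipschitzness. The paper instead expands around the \emph{conditional mean} $\bar x_{t+1-H}$: it first uses convexity of $\tilde f_t$ (Jensen) to pass from $\tilde f_t(x_{t+1-H})$ to $\tilde f_t(\bar x_{t+1-H})$, then applies the one-sided smoothness inequality of \cref{def:fSmooth} at $\bar x_{t+1-H}$, after which the entire linear term reduces---by a single total-expectation step---to $L\,\EEBrk{\norm{\bar x_{t+i-H}-\bar x_{t+1-H}}}$, with no need to split or to invoke gradient-Lipschitzness. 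This is shorter and uses only the one-sided inequality actually stated in \cref{def:fSmooth}; your bracket step $\norm{\nabla_i f_t(p_s)-\nabla_i f_t(\bar p_s)}\le\beta\norm{p_s-\bar p_s}$ needs the two-sided (gradient-Lipschitz) version, which is how smoothness is established in the application but is formally stronger than the definition. Conversely, your expansion point makes the $\tilde f_t$ term match exactly, so---contrary to your remark---your boundary bound does not use the convexity of $\tilde f_t$ at all; in your argument convexity enters only through the validity of $R_{\mathcal A}(\cdot)$.
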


\subsection{Proof Ideas}

We provide some of the main ideas for proving \cref{thm:BCOreduction}. We start with the following technical lemma that quantifies the duration between updates of the base BCO algorithm.
\begin{lemma} \label{lemma:intervalLengths}
    Suppose $\bt$ in \cref{alg:bcoReduction} are drawn in advance for all $t \ge 1$. Let $t_0 = 0$ and for $i \ge 1$ let
    \begin{equation*}\textstyle
        t_i
        =
        \min\brk[c]*{t \ge t_{i-1} + \memLen \;\mid\; \bt \prod_{i=1}^{H-1} (1-\bt[t-i]) = 1}.
    \end{equation*}
    Then denoting $S = \brk[c]*{t_i \;|\; \memLen \le t_i < T}$, the times \cref{alg:bcoReduction} updates $\mathcal{A}$, we have that (i) $\abs{S} \le \floor{T / H},$ and (ii) $\EEBrk{t_i - t_{i-1}} = \EE{t_1} \le 3H$ for all $i$.
\end{lemma}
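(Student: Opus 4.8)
The plan is to treat (i) by an elementary deterministic argument and (ii) by a renewal-type decomposition of $t_1$ together with Wald's identity. For (i), I would first record the (already built-in) fact that consecutive update times are at least $H$ apart: for $t\in(t_{i-1},t_{i-1}+H)$ the coordinate $b_{t_{i-1}}=1$ lies inside the window $\{b_{t-1},\dots,b_{t-H+1}\}$, so $b_t\prod_{j=1}^{H-1}(1-b_{t-j})=0$, which is exactly what the constraint $t\ge t_{i-1}+H$ encodes. A one-line induction then gives $t_i\ge iH$ for all $i\ge1$ (base case $t_1\ge H$). Hence $t_i<T$ forces $i<T/H$, so $i\le\lfloor T/H\rfloor$, and since the $t_i$ are strictly increasing, $|S|\le\lfloor T/H\rfloor$.

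For the first half of (ii) (stationarity), I would show $\mathbf{E}[t_i-t_{i-1}]=\mathbf{E}[t_1]$ for every $i$. Since $t_{i-1}$ is a stopping time of the i.i.d.\ sequence $(b_t)$, the event $\{t_{i-1}=m\}$ lies in $\sigma(b_1,\dots,b_m)$, which is independent of $(b_{m+1},b_{m+2},\dots)$; thus, conditionally on $\{t_{i-1}=m\}$, the tail $(b_{m+1},b_{m+2},\dots)$ is again i.i.d.\ $\mathrm{Bernoulli}(1/H)$. Because an update at a time $t\ge t_{i-1}+H$ involves only coordinates with index $>t_{i-1}$, the increment $t_i-t_{i-1}$ is the \emph{same} measurable functional of $(b_{m+1},b_{m+2},\dots)$ that $t_1$ is of $(b_1,b_2,\dots)$, namely $\Phi(\beta)=\min\{s\ge H:\beta_{s-H+1}=\dots=\beta_{s-1}=0,\ \beta_s=1\}$. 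Hence $t_i-t_{i-1}\overset{d}{=}t_1$, and in particular the expectations coincide.

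For the second half of (ii) (bounding $\mathbf{E}[t_1]$), note $t_1$ is the first time the pattern ``$H-1$ zeros followed by a one'' is completed. Let $T_1<T_2<\cdots$ be the positions of the ones and set $G_1=T_1$, $G_j=T_j-T_{j-1}$; the $G_j$ are i.i.d.\ $\mathrm{Geometric}(1/H)$, and the pattern is completed at $T_j$ exactly when $G_j\ge H$. Therefore $t_1=\sum_{j=1}^{J}G_j$ with $J=\min\{j:G_j\ge H\}$ a stopping time having $\mathbf{E}[J]=1/q$, where $q=\mathbb{P}(G_1\ge H)=(1-1/H)^{H-1}$. Wald's identity then gives $\mathbf{E}[t_1]=\mathbf{E}[J]\,\mathbf{E}[G_1]=H/q$, and it remains to check $q\ge e^{-1}$, i.e.\ $(H-1)\log\tfrac{H}{H-1}\le1$, which follows from $\log(1+x)\le x$ with $x=\tfrac{1}{H-1}$ (the case $H=1$ being trivial, $q=1$). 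Hence $\mathbf{E}[t_1]=H/q\le eH\le 3H$.

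The main obstacle is less in any single calculation and more in being disciplined about two points. First, the stationarity step needs the functional $\Phi$ identified correctly and the verification that the update window at times $t\ge t_{i-1}+H$ touches no coordinate $\le t_{i-1}$. Second, and more subtly, one must resist settling for a crude estimate of $\mathbf{E}[t_1]$: bounding each $G_j$ ($j<J$) by $H-1$, or only counting pattern completions at multiples of $H$, both inflate the constant beyond $3$, so the exact identity $\mathbf{E}[t_1]=H/q$ — hence the decomposition $t_1=\sum_{j\le J}G_j$ and the use of Wald's identity — is essentially forced.
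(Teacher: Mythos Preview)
Your argument is correct. Part (i) matches the paper's one-line induction exactly. For part (ii), however, you take a genuinely different route.

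The paper handles (ii) by setting up a Markov chain on $H$-tuples of bits tracking $(b_{t-H+1},\dots,b_t)$, identifying $t_i-t_{i-1}$ as the return time to the state $s=(0,\dots,0,1)$, and invoking the classical identity that in an irreducible chain the mean return time to $s$ equals $1/\pi^*(s)$; since the $b_t$ are i.i.d., $\pi^*$ is a product measure and $\pi^*(s)=(1/H)(1-1/H)^{H-1}$, giving the same value $H/q$ you obtain. Your renewal decomposition via the inter-arrival gaps $G_j$ and Wald's identity is a more elementary and self-contained computation that avoids quoting the return-time formula, and your explicit strong-Markov/stopping-time argument for $t_i-t_{i-1}\stackrel{d}{=}t_1$ fills in a point the paper passes over rather quickly (namely, why the hitting time $t_1$ from time $0$ coincides in law with the return time from $s$ --- your observation that the window at times $t\ge t_{i-1}+H$ touches only indices $>t_{i-1}$ is exactly what makes this work). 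The tradeoff is that the Markov-chain argument is shorter once the cited fact is granted, whereas your approach is longer but requires no external machinery beyond Wald's identity.
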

See proof in \cref{sec:proofsOfBCOMemLemmas}.
The next lemma considers \cref{alg:bcoReduction} as a BCO without memory algorithm that incurs loss $\tilde{f}_t(x_{t+1-\memLen})$ at each round and relates its regret to that of the base algorithm.
\begin{lemma} \label{lemma:BCOmemNoMem}
    Suppose \cref{alg:bcoReduction} is run with $\baseBCOof{T / \memLen}$ as in \cref{thm:BCOreduction}. If $\tilde{f}_t \in \funcSet$ then we have that
    \begin{align*}
        \EEBrk{\sum_{t=\memLen}^{T} \tilde{f}_t(x_{t+1-\memLen})} - \sum_{t=\memLen}^{T} \tilde{f}_t(x)
        \le
        3\memLen \algRegretBound{\frac{T}{\memLen}}
        ,\;\; \forall x \in \fConstraint.
    \end{align*}
\end{lemma}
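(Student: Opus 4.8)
The plan is to view \cref{alg:bcoReduction} as feeding the base algorithm $\baseBCO$ a \emph{subsampled} loss stream --- one loss per epoch between consecutive updates --- and to show that, in expectation, this subsample is an exact $p$-fraction of the full stream, where $p=\tfrac1\memLen(1-\tfrac1\memLen)^{\memLen-1}$. Set up notation: let $\bt[t]\sim\mathrm{Bernoulli}(1/\memLen)$ be the coins of \cref{alg:bcoReduction}; let $0=t_0<t_1<t_2<\cdots$ be the rounds at which $\baseBCO$ is updated (equivalently, the $r\ge\memLen$ with $\bt[r]\prod_{i=1}^{\memLen-1}(1-\bt[r-i])=1$; by construction $t_m\ge t_{m-1}+\memLen$); and let $y_m$ be the point $\baseBCO$ outputs at its $m$-th step, so that $x_t=y_m$ for every $t\in\brk[c]{t_{m-1}+1,\ldots,t_m}$, i.e.\ the played point is constant on each epoch. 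Put $N=\abs{\brk[c]{m:t_m\le T}}$; since the update rounds lie in $\brk[s]{\memLen,T}$ and are $\ge\memLen$ apart, $N\le\floor{T/\memLen}$ (cf.\ \cref{lemma:intervalLengths}(i)).

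First I would reindex by $s=t+1-\memLen$, writing $\sum_{t=\memLen}^{T}\tilde{f}_t(x_{t+1-\memLen})=\sum_{s=1}^{T-\memLen+1}\tilde{f}_{s+\memLen-1}(x_s)$. The key observation is that round $s+\memLen-1$ is an update round exactly when $\bt[s]=\cdots=\bt[s+\memLen-2]=0$ and $\bt[s+\memLen-1]=1$ --- an event of probability precisely $p$ that depends only on the coins $\bt[s],\ldots,\bt[s+\memLen-1]$ --- whereas $x_s$ is measurable with respect to the coins $\bt[1],\ldots,\bt[s-1]$ (which already determine the epoch index $m$ of round $s$), the disturbances at rounds $\le s-1$ (these enter only through the at most $m-1$ feedbacks handed to $\baseBCO$ before its $m$-th step, each arising from an update round $t_j\le t_{m-1}\le s-1$ with disturbance window contained in $\brk[s]{1,s-1}$), and $\baseBCO$'s internal coins. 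Hence $x_s$ is independent of $\bt[s],\ldots,\bt[s+\memLen-1]$, so the conditional probability of an update at round $s+\memLen-1$, given everything that determines $x_s$, equals $p$. Multiplying by $\tilde{f}_{s+\memLen-1}(x_s)-\tilde{f}_{s+\memLen-1}(x)$, taking expectations, summing over $s$, and noting that the $s$ with $s+\memLen-1$ an update round are exactly $s=t_m-\memLen+1$ for $m\le N$ --- with $x_{t_m-\memLen+1}=y_m$, since $t_m-\memLen+1$ lies in epoch $m$ --- yields the exact identity
\begin{equation*}
    \EEBrk{\sum_{m=1}^{N}\brk*{\tilde{f}_{t_m}(y_m)-\tilde{f}_{t_m}(x)}}
    =
    p\brk[s]*{\EEBrk{\sum_{t=\memLen}^{T}\tilde{f}_t(x_{t+1-\memLen})}-\sum_{t=\memLen}^{T}\tilde{f}_t(x)}
    ,\qquad
    \forall x\in\fConstraint.
\end{equation*}

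It then remains to bound the left-hand side by $\algRegretBound{T/\memLen}$, which is precisely the base algorithm's regret guarantee. Conditioning on all the coins $\brk[c]{\bt[t]}$ fixes the update schedule, hence $N$ and the (oblivious, and now deterministic) loss sequence $\tilde{f}_{t_1},\ldots,\tilde{f}_{t_N}\in\funcSet$ faced by $\baseBCO$; moreover $\bcoFeedback[t_m]$ is an admissible feedback for $\tilde{f}_{t_m}$ at $y_m$, since the spacing $t_m\ge t_{m-1}+\memLen$ makes $x_{t_m+1-\memLen:t_m}=(y_m,\ldots,y_m)$ determined before round $t_{m-1}+1$, hence jointly independent of $\xi_{t_m+1-\memLen:t_m}$, so that $\abs{\EE[\xi_{t_m+1-\memLen:t_m}]{\brk[s]*{\bcoFeedback[t_m]}}-\tilde{f}_{t_m}(y_m)}\le\varepsilon$ as the base algorithm requires. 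Property~(i) of $\baseBCOof{T/\memLen}$ then gives $\EEBrk{\sum_{m=1}^{N}\tilde{f}_{t_m}(y_m)}-\sum_{m=1}^{N}\tilde{f}_{t_m}(x)\le\algRegretBound{N}\le\algRegretBound{T/\memLen}$, using $N\le\floor{T/\memLen}$ and $\algRegretBound{\cdot}\ge0$; this survives averaging over $\brk[c]{\bt[t]}$. Plugging into the identity above and using $1/p=\memLen(1-1/\memLen)^{-(\memLen-1)}\le e\memLen\le3\memLen$ (together with $\algRegretBound{\cdot}\ge0$) gives the claimed bound $\EEBrk{\sum_{t=\memLen}^{T}\tilde{f}_t(x_{t+1-\memLen})}-\sum_{t=\memLen}^{T}\tilde{f}_t(x)\le3\memLen\,\algRegretBound{T/\memLen}$.

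The crux --- and the only genuinely delicate step --- is the second one: isolating the sampling rate $p$ and, above all, the independence argument that converts the update-round indicator into a clean multiplicative factor. It works precisely because $x_s$ is frozen at the last update \emph{before} round $s$ and therefore cannot be correlated with the fresh coins $\bt[s],\ldots,\bt[s+\memLen-1]$ that decide whether round $s+\memLen-1$ triggers an update; without a randomized, $\memLen$-spaced schedule (and the oblivious adversary, which lets us freeze the subsampled loss sequence once the coins are fixed) the identity would fail. A secondary point requiring care is checking that the $\memLen$-spacing renders $\bcoFeedback[t_m]$ a valid ($\varepsilon$-approximately unbiased) feedback for $\tilde{f}_{t_m}$ at $y_m$ --- which is exactly the conditional independence that spacing was designed to provide.
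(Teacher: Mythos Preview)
Your proposal is correct and follows essentially the same approach as the paper: both exploit the independence of $x_{t+1-\memLen}$ from the update indicator $\chi_t=b_t\prod_{i=1}^{\memLen-1}(1-b_{t-i})$ to extract the sampling factor $p=(1/\memLen)(1-1/\memLen)^{\memLen-1}\ge 1/(3\memLen)$, and then invoke the base algorithm's regret bound on the subsampled sequence after conditioning on the schedule. Your reindexing by $s=t+1-\memLen$ and phrasing the relation as an exact identity is a cosmetic variation on the paper's direct computation with $\chi_t$, but the argument is the same.
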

\begin{proof}
    Let $S$ be the times \cref{alg:bcoReduction} updates $\mathcal{A}$ as defined in \cref{lemma:intervalLengths}.
    Denote $\tilde{\xi}_t = \xi_{t+1-\memLen:t}$, and notice that $\brk[c]{\tilde{\xi}_t}_{t \in S}$ are mutually independent since \cref{alg:bcoReduction} ensures there are at least $\memLen$ rounds between updates of $\baseBCO$. Moreover, this implies that for any $t \in S$, $x_{t+1-\memLen} = \ldots = x_t$, and these are also independent of $\tilde{\xi}_{t}$. Our $\memLen$-memory feedback model thus implies that
    \begin{align*}
        \abs{\tilde{f}_t(x_t) - \EE[\tilde{\xi}_{t}]{\brk[s]{\bcoFeedback}}}
        \le
        \varepsilon
        , \qquad
        \forall t \in S,
    \end{align*}
    and since $\tilde{f}_t \in \funcSet$, we can use the regret bound of $\baseBCO$ to get that for any $x \in \fConstraint$
    \begin{equation*}
        \EEBrk{\sum_{t \in S} \tilde{f}_t(x_t) - \sum_{t \in S} \tilde{f}_t(x)}
        =
        \EEBrk{\EEBrk{\sum_{t \in S} \tilde{f}_t(x_t) - \sum_{t \in S} \tilde{f}_t(x) \;\bigg|\; S}}
        =
        \EEBrk{\bcoMemRegret[1]{\abs{S}}}
        \le
        \algRegretBound{\frac{T}{H}}
        ,
    \end{equation*}
    where the last transition also used the fact that $\abs{S} \le T / \memLen$ (see \cref{lemma:intervalLengths}).
    Next, denote $\isUpRound = \bt \prod_{i=1}^{\memLen - 1} (1-\bt[t-i])$ and notice that $\EE{\isUpRound} = \EE{\isUpRound[\memLen]}$.
    Then for any fixed $x \in \fConstraint$ we have that
    \begin{align*}
        \EEBrk{\sum_{t \in S} \tilde{f}_t(x)}
        =
        \EEBrk{\sum_{t=\memLen}^{T} \tilde{f}_t(x) \isUpRound}
        =
        \sum_{t=\memLen}^{T} \tilde{f}_t(x) \EEBrk{\isUpRound}
        =
        \EEBrk{\isUpRound[\memLen]} \sum_{t=H}^{T} \tilde{f}_t(x).
    \end{align*}
    Next, notice that $x_{t+1-\memLen}$ is independent of $\isUpRound$ and since $\isUpRound = 1$ implies that $x_t = x_{t+1-\memLen}$, we get that
    \begin{align*}
        \EEBrk{\sum_{t \in S} \tilde{f}_t(x_t)}
        =
        \EEBrk{\sum_{t=\memLen}^{T} \tilde{f}_t(x_t) \isUpRound}
        =
        \sum_{t=\memLen}^{T} \EEBrk{\tilde{f}_t(x_{t+1-\memLen})} \EEBrk{\isUpRound}
        =
        \EEBrk{\isUpRound[\memLen]} \EEBrk{\sum_{t=\memLen}^{T} \tilde{f}_t(x_{t+1-\memLen})}.
    \end{align*}
    Finally, combining the last three equations we get that
    \begin{align*}
        \EEBrk{\sum_{t=\memLen}^{T} \tilde{f}_t(x_{t+1-\memLen})} - \sum_{t=\memLen}^{T} \tilde{f}_t(x)
        \le
        \brk{\EEBrk{\isUpRound[\memLen]}}^{-1} \algRegretBound{\frac{T}{\memLen}}
        \le
        3\memLen \algRegretBound{\frac{T}{\memLen}},
    \end{align*}
    where the last transition used the non-negativity of $\algRegretBound{T / \memLen}$ and that $\EEBrk{\isUpRound[H]} \ge 1 / 3 \memLen$.
\end{proof}

Given the above, completing the proof of \cref{thm:BCOreduction} entails bounding 
$
    \EE{}\big[\sum_{t=\memLen}^{T} f_t(x_{t+1-\memLen}, \ldots, x_t) - \tilde{f}_t(x_{t+1-\memLen})\big]
    .
$
While the smooth case requires some delicate care for achieving the squared dependence on $\pPredVar$, the proof is otherwise quite technical and thus deferred to \cref{sec:reductionProof}.

\section{Analysis}
\label{sec:mainAlgProof}

We first require the following, mostly standard, analysis of the base procedure of \cref{alg:main} for the no-memory setting ($\memLen=1$). See proof in \cref{sec:proofOfBaseBCOlemma}.
\begin{lemma} \label{lemma:baseBCO}
    Consider the setting of \cref{sec:bcoMem} with $H=1$ and $\varepsilon \in \tilde{O}(1/T)$, against an adversary that chooses $f_t: \MRange \to \RR$ that are $\pConvexF$-strongly convex and $\pSmoothF$-smooth. Let $\dM = d k \memLen$ be the dimension of $\MConstraint$, and $\bcoMemRegret[1]{t}$ be the regret of a procedure that at time $t$:
    \begin{enumerate}[nosep,label=(\roman*)]
        \item Draws $\Ut[t] \sim \sphere{(k \times d) \times \memLen}$; and plays $\Mt$ where $\Mti[t] = \barMti[t] + \rti[t]\Uti[t]$ $(\forall i \in \brk[s]{\memLen})$
        \item Observes $\bcoFeedback$; and sets $\gHat^{\brk[s]{i}} = (d_{\MConstraint} / \rti[t]) \bcoFeedback \Uti[t]$ $(\forall i \in \brk[s]{\memLen})$
        \hfill (1-point gradient estimate)
        \item Updates
            $
                \barMti[t+1] 
                =
                \Pi_{\MiConstraint}\brk[s]!{\barMti[t] - \eta  \brk{\rti[t]}^2 \gHat^{\brk[s]{i}}}
                \quad
                (\forall i \in \brk[s]{\memLen})
                .
            $
            \hfill (preconditioned update)
    \end{enumerate}
    If $\abs{\bcoFeedback} \le \bcoFeedbackBound$, $\rti$, $n$ are as in \cref{thm:main}, and $\eta \in \tilde{O}(T^{-1/2})$ then
    $\pMeanShift = d_{\MConstraint} \bcoFeedbackBound \sqrt{2 \eta / \pConvexF t}$
    and
    $\pPredVar^2 = 2 / \pConvexF \eta t$
    satisfy the assumptions of $\baseBCOof{T}$ in \cref{thm:BCOreduction}
    ,
    and for all $t \le T$
    \begin{align*}
        \bcoMemRegret[1]{t}
        \le
        \frac{1}{\eta} \brk*{\memLen n^2 + \frac{2\pSmoothF}{\pConvexF}(1+\log T)}
        +
        \frac{d_{\MConstraint}^2 \bcoFeedbackBound^2}{2} \eta T
        +
        \tilde{O}\brk{T^{1/4}}
        .
    \end{align*}
\end{lemma}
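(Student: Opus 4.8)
The plan is to verify the two assertions separately: that the displayed $\pMeanShift,\pPredVar$ are valid for this procedure, and that the regret bound holds. Throughout I will use the two elementary consequences of the radii definition, $\rti[t]\le\rtOf{1}\le(\tfrac12\pConvexF\eta t)^{-1/2}$ and $\rti[t]\le 2\kappa_B\kappa^3(1-\gamma)^i$. For the first assertion: every $\barMti[t]$ is either the initialization $0$ or a Euclidean projection onto $\MiConstraint$, hence lies in $\MiConstraint$, and since $\rti[t]\Uti[t]$ is mean-zero, $\EEBrk{\Mt[t]\mid\mathcal H_t}=\barMt[t]$, where $\mathcal H_t$ is the history before round $t$'s randomness. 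Using $\sum_i\norm{\Uti[t]}_F^2=\norm{\Ut[t]}_F^2=1$ gives $\norm{\Mt[t]-\barMt[t]}_F^2=\sum_i(\rti[t])^2\norm{\Uti[t]}_F^2\le(\rtOf{1})^2\le 2/\pConvexF\eta t=\pPredVar^2$; and nonexpansiveness of $\Pi_{\MiConstraint}$ with $\barMti[t]\in\MiConstraint$ gives $\norm{\barMti[t+1]-\barMti[t]}_F\le\eta(\rti[t])^2\norm{\gHat^{\brk[s]{i}}}=\eta\rti[t]\dM\abs{\bcoFeedback}\norm{\Uti[t]}_F\le\eta\rti[t]\dM\bcoFeedbackBound\norm{\Uti[t]}_F$, so squaring and summing over $i$ yields $\norm{\barMt[t+1]-\barMt[t]}_F\le\eta\dM\bcoFeedbackBound\,\rtOf{1}\le\dM\bcoFeedbackBound\sqrt{2\eta/\pConvexF t}=\pMeanShift$. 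Both bounds hold almost surely and are decreasing in $t$.

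For the regret, I would first pass to an ellipsoidally smoothed surrogate. Let $\Lambda_t$ be the block-diagonal map scaling block $i$ by $\rti[t]$, so $\Mt[t]=\barMt[t]+\Lambda_t\Ut[t]$, and set $\phi_t(M)=\EE[V]{f_t(M+\Lambda_t V)}$ with $V$ uniform on the unit ball $\ball{\dM}$. Since $\barMt[t]+\Lambda_t V\in\MRange$ for $\norm{V}_F\le 1$ (again by $\rti[t]\le 2\kappa_B\kappa^3(1-\gamma)^i$), $\phi_t$ inherits $\pConvexF$-strong convexity from $f_t$ on $\MConstraint$; moreover $\phi_t\ge f_t$ pointwise by convexity of $f_t$, and $\abs{\phi_t(M)-f_t(M)}\le\tfrac{\pSmoothF}{2}\EE[V]{\norm{\Lambda_t V}_F^2}\le\tfrac{\pSmoothF}{2}(\rtOf{1})^2\le\pSmoothF/\pConvexF\eta t$ by $\pSmoothF$-smoothness of $f_t$. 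The one-point estimate obeys the standard ellipsoidal identity: its $\Ut[t]$-expectation equals $\nabla_i\phi_t(\barMt[t])$ under exact feedback, and under the $\varepsilon$-inexact model $\EEBrk{\gHat^{\brk[s]{i}}\mid\mathcal H_t}=\nabla_i\phi_t(\barMt[t])+\rho_t^{\brk[s]{i}}$ with $\norm{\rho_t^{\brk[s]{i}}}_F\le\dM\varepsilon/\rti[t]$, while $\sum_i(\rti[t])^2\norm{\gHat^{\brk[s]{i}}}_F^2=\dM^2\bcoFeedback^2\le\dM^2\bcoFeedbackBound^2$. For the deterministic comparator $M_*\in\MConstraint$ I then split
\[
f_t(\Mt[t])-f_t(M_*)=\bigl[f_t(\Mt[t])-f_t(\barMt[t])\bigr]+\bigl[f_t(\barMt[t])-\phi_t(\barMt[t])\bigr]+\bigl[\phi_t(\barMt[t])-\phi_t(M_*)\bigr]+\bigl[\phi_t(M_*)-f_t(M_*)\bigr];
\]
the second bracket is $\le 0$, and the first (in $\Ut[t]$-expectation, using smoothness of $f_t$ and $\EEBrk{\Lambda_t\Ut[t]}=0$) and the fourth are each $\le\pSmoothF/\pConvexF\eta t$, so summing over rounds these three contribute at most $\tfrac{2\pSmoothF}{\pConvexF\eta}(1+\log T)$.

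It remains to bound $\EEBrk{\sum_{s\le t}\bigl[\phi_s(\barMt[s])-\phi_s(M_*)\bigr]}$ by a mixed-regularization online-gradient argument, which is the heart of the proof. From the preconditioned update and nonexpansiveness of $\Pi_{\MiConstraint}$, the usual one-step inequality holds per block; I divide it by $(\rti[t])^2$, sum over $i\in\brk[s]{\memLen}$ and over rounds, take conditional expectations (which replaces $\gHat^{\brk[s]{i}}$ by $\nabla_i\phi_t(\barMt[t])+\rho_t^{\brk[s]{i}}$ and annihilates the martingale part, as $M_*$ is fixed), and apply strong convexity of $\phi_t$, $\langle\nabla\phi_t(\barMt[t]),\barMt[t]-M_*\rangle\ge\phi_t(\barMt[t])-\phi_t(M_*)+\tfrac{\pConvexF}{2}\norm{\barMt[t]-M_*}_F^2$. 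This telescopes the potential $\Phi_t=\sum_i(\rti[t])^{-2}\norm{\barMti[t]-M_*^{\brk[s]{i}}}_F^2$: the radii are designed precisely so that $(\rti[t+1])^{-2}-(\rti[t])^{-2}=\tfrac12\pConvexF\eta$, so re-weighting $\Phi_{t+1}$ with the round-$t$ coefficients produces a surplus $\tfrac12\pConvexF\eta\norm{\barMt[t+1]-M_*}_F^2$ that is absorbed by the strong-convexity term, leaving no accumulating distance terms. What survives is $\Phi_1/2\eta$, the second-moment term $\tfrac{\eta}{2}\dM^2\bcoFeedbackBound^2 T$, the bias sum $\sum_{t,i}\abs{\langle\rho_t^{\brk[s]{i}},\barMti[t]-M_*^{\brk[s]{i}}\rangle}$, and lower-order boundary terms. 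Using $\barMti[1]=0$, $\norm{M_*^{\brk[s]{i}}}_F\le\sqrt n\cdot 2\kappa_B\kappa^3(1-\gamma)^i$ (rank at most $n$) and $(\rti[1])^{-2}\le(2\kappa_B\kappa^3(1-\gamma)^i)^{-2}+\tfrac12\pConvexF\eta$ bounds $\Phi_1\le\memLen n+\tilde{O}(\eta)$; and the crude estimate $1/\rti[t]\le(2\kappa_B\kappa^3(1-\gamma)^i)^{-1}+\sqrt{\tfrac12\pConvexF\eta t}$ together with $\varepsilon\in\tilde{O}(1/T)$ and $\eta\in\tilde{O}(T^{-1/2})$ makes the bias sum $\tilde{O}(T^{1/4})$ (the $2\eta$ factor in front of it cancels the $1/2\eta$ in the final division). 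Combining with the preceding paragraph, using $n\ge 1$, and noting the right-hand side is largest at $t=T$, yields the claimed bound.

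I expect the last step to be the main obstacle: arranging the time-varying preconditioner $(\rti[t])^2$ to mesh exactly with the strong-convexity schedule so the telescoping leaves only the initial potential, while simultaneously tracking the block structure (a single shared spherical perturbation but distinct per-block radii), the preconditioning appearing both inside the update and inside the potential, and the adaptive $\varepsilon$-bias in a way that keeps it at $\tilde{O}(T^{1/4})$. The smoothing reduction, the unbiasedness of the ellipsoidal estimator, and the $\pMeanShift,\pPredVar$ verification are comparatively routine.
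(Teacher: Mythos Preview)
Your proposal is correct and follows essentially the same route as the paper. The paper factors the argument through a general preconditioned BCO lemma (its \cref{lemma:generalBaseBCO} for \cref{alg:zeroOMD} with an arbitrary $P_t$), then observes that the procedure here is exactly that algorithm with the block-diagonal choice $P_t=\Lambda_t^{-2}$, so that the projection, update, and gradient estimate decompose blockwise; you instead carry out the same smoothing-plus-preconditioned-OGD analysis inline with the block structure kept explicit. The ingredients---ellipsoidal smoothing (the paper's \cref{lemma:fSmooth}), the time-varying regularizer whose increment $\tfrac12\pConvexF\eta$ is cancelled by strong convexity (the paper's \cref{lemma:OMDcore}), and the $\tilde O(T^{1/4})$ bias bookkeeping---are identical. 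One small bookkeeping point: the surplus you describe is $\tfrac12\pConvexF\eta\norm{\barMt[t+1]-M_*}_F^2$, whereas strong convexity at round $t$ supplies $\tfrac{\pConvexF}{2}\norm{\barMt[t]-M_*}_F^2$; after an index shift these match up to a single boundary term, which is how the paper's \cref{lemma:OMDcore} is stated. Your $\Phi_1\le\memLen n$ is in fact tighter than the paper's $\memLen n^2$, and your remark ``using $n\ge1$'' correctly recovers the stated bound.
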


\begin{proof}[of \cref{thm:main}]

Consider $f_t, F_t$ from \cref{def:surrogateCosts} and notice that $f_t$ depends on the last $\memLen+1$ policies but also the last $2(\memLen+1)$ system noises. This means that the effective memory of the adversary is $2(\memLen+1)$, prompting us to modify the definitions of $f_t,F_t$ to receive $2(\memLen+1)$ policies but ignore the first $\memLen+1$, i.e.,
\begin{align*}
    F_t^{\text{new}}(M_{0:2\memLen+1}) = F_t(M_{\memLen+1:2\memLen+1})
    ,\qquad
    f_t^{\text{new}}(M_{0:2\memLen+1}) = f_t(M_{\memLen+1:2\memLen+1})
    .
\end{align*}
Henceforth, $f_t, F_t$ refer to $f_t^{\text{new}}, F_t^{\text{new}}$. Notice that \cref{lemma:controlToOCO,lemma:SurrogateProperties} are not impacted by this change and hold with the same $\memLen$ as the original functions. We thus have that $F_t$ are $\pLipF$-coordinate-wise Lipschitz and $\pSmoothF$-smooth, and $\tilde{F}_t$ are $\pConvexF$-strongly convex and $\pSmoothF$-smooth. Moreover, $\abs{\ct{x_t}{u_t}} \le \cC \Dxu^2$, and if $M_{t- 1 -2\memLen:t}$ are independent of $\wt[t-1-2\memLen:t]$ then
\begin{align*}
    \abs{
        \EE[{\wt[t-1-2\memLen:t]}]{\brk[s]{\ct{x_t}{u_t} - f_t(M_{t-1-2\memLen:t})}}
    }
    =
    \abs{
        \EE[{\wt[t-1-2\memLen:t]}]{\brk[s]{\ct{x_t}{u_t}} - F_t(M_{t-1-2\memLen:t})}
    }
    \le
    \frac{\cG \Dxu^2}{T}.
\end{align*}
Now, Consider \cref{alg:main} in the context of the BCO with memory setting presented in \cref{sec:bcoMem} with $\varepsilon = {\cG \Dxu^2}/{T}$, feedback bounded by $\cC \Dxu^2$, and let $\bcoMemRegret[2(\memLen+1)]{T}$ be its regret against an adversary that chooses functions $h_t : \MRange^{2(\memLen+1)} \to \RR$ satisfying:
\begin{itemize}[nosep]
    \item $h_t$ are $\pLipF$-coordinate-wise Lipschitz and $\pSmoothF$-smooth;
    \item $\tilde{h}_t : M \mapsto h_t(M,\ldots,M)$ are $\pConvexF$-strongly convex and $\pSmoothF$-smooth.
\end{itemize}
Since $F_t$ satisfy these assumptions, and our choice of $\rti$ ensures that $M_t \in \MRange$, \cref{lemma:controlToOCO} yields that
\begin{align*}
    \controlPseudoRegret
    \le
    \bcoMemRegret[2(\memLen+1)]{T}
    +
    2 \Dxu^2 (\cG + H \cC)
    ,
\end{align*}
and since the second term is at most poly-log in $T$, it remains to bound $\bcoMemRegret[2(\memLen+1)]{T}$.
To that end, notice that \cref{alg:main} fits the mold of our reduction procedure given in \cref{alg:bcoReduction} with base procedure as in \cref{lemma:baseBCO}. Now, invoking \cref{lemma:baseBCO} with $\bcoFeedbackBound = \cC \Dxu^2$ and horizon $T / 2(\memLen+1)$, the second term of \cref{thm:BCOreduction} satisfies that
\begin{align*}
    \sum_{t=1}^{\floor{T / 2(\memLen+1)}+1} \brk{\pLipF \pMeanShift + \pSmoothF \pMeanShift^2 + 6 \pSmoothF \pPredVar^2}
    \le
    \frac{12 \pSmoothF \log T}{\pConvexF \eta}
    +
    \tilde{O}(T^{1/4})
    ,
\end{align*}
and further using \cref{lemma:baseBCO} to bound the first term of \cref{thm:BCOreduction}, and simplifying, we get that
\begin{align*}
    \bcoMemRegret[2(\memLen+1)]{T}
    \le
    2 (\memLen+1)^2 \brk[s]*{
        \frac{1}{\eta} \brk*{3 n^2 + \frac{15\pSmoothF}{\pConvexF}\log T}
        +
        d^2 k^2 \cC^2 \Dxu^4 \eta T
    }
    +
    \tilde{O}(T^{1/4})
    .
\end{align*}
Our choice of $\eta$ yields the final bound.
\end{proof}

\section{Extensions to General Costs and Adversarial Noise}
\label{sec:extension}
In this section we consider the case where the cost functions chosen by the adversary are general, possibly non-smooth (weakly) convex functions. Importantly, we also allow the system noise to be chosen by an oblivious adversary. Formally, the setup is identical to \cref{sec:setup} but with the following modifications:
\begin{enumerate}
    \item Only Assumptions \ref{a:1}-\ref{a:3} are assumed throughout;
    \item The costs $\ct{x}{u}$ are (weakly) convex functions of $(x,u)$;
    \item The disturbances $\wt$ are chosen by an oblivious adversary, i.e., one that has knowledge of the algorithm but must choose all disturbances before the first round.
    (Notice that $\wt$ are still bounded as per Assumption \ref{a:1}.)
\end{enumerate}
To ease notation, recall that $n = \min\brk[c]{d,k}$, and
$
    \constraintD^2
    =
    \max_{M_1, M_2 \in \MConstraint} \norm{M_1 - M_2}_F^2
    \le
    4 n^2 \kappa_B^2 \kappa^6 / \gamma
    .
$
The following extends our main results, given in \cref{thm:main}, to the setting above.

\begin{theorem}
\label{thm:mainExtended}
    Let $\memLen, \Dxu, \pLipF, \pSmoothF$ be as in \cref{eq:def1,lemma:SurrogateProperties}, $K_0$ be a $(\kappa,\gamma)$-strongly stable controller, $\dM = d k \memLen$, $\bcoFeedbackBound = \cC \Dxu^2$, and 
    $
    \rtOf[0]{i}
    =
    2\kappa_B \kappa^3(1-\gamma)^i
    $.
    Then:
    \begin{enumerate}
        \item The regret of running \cref{alg:main} with 
        $
            \eta 
            =
            2\brk[s]*{\frac{(\memLen+1)^3 \pLipF^2 \constraintD^2}{\dM^6 \bcoFeedbackBound^6 T}}^{1/4}
        $,
        and
        $
            \rti 
            =
            \brk[s]*{
            (\rti[0])^{-2}
            +
            \frac{4 \pLipF \sqrt{(\memLen+1)T}}{\dM \bcoFeedbackBound \constraintD}
            }^{-1/2}
        $
        satisfies 
        \begin{align*}
            \controlRegret
            \le
            13 \sqrt{2 d k n \cC \Dxu^2 \kappa_B \kappa^3 \gamma^{-1/2} \pLipF (\memLen+1)^{7/2}} T^{3/4}
            +
            \tilde{O}(T^{1/2})
            ;
        \end{align*}
        \item if $c_t$ are $\pSmooth$-smooth, then the regret of running \cref{alg:main} with
        $
            \rti 
            =
            \brk[s]*{
            (\rti[0])^{-2} 
            +
            \brk{\frac{4 \pSmoothF^2 T}{(\memLen+1)\dM^2 \bcoFeedbackBound^2 \constraintD^2}}^{1/3}
            }^{-1/2}
        $,
        and
        $
            \eta
            =
            \brk[s]*{
            \frac{2 (\memLen+1) \pSmoothF \constraintD^2}{\dM^4 \bcoFeedbackBound^4 T}
            }^{1/3}
        $
        satisfies
        \begin{align*}
            \controlRegret
            \le
            12 \brk*{2 d k n \cC \Dxu^2 \kappa_B \kappa^3 \sqrt{\pSmoothF / \gamma} (\memLen+1)^3 T}^{2/3}
            +
            \tilde{O}(T^{1/2})
            .
        \end{align*}
    \end{enumerate}
\end{theorem}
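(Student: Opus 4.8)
The plan is to reuse the three-stage pipeline behind the proof of \cref{thm:main}, replacing its strongly-convex base analysis (\cref{lemma:baseBCO}) by a purely convex one. \emph{Stage 1 (DAP reduction).} I would run the disturbance-action reparameterization exactly as in the proof of \cref{thm:main}, working with the memory-$2(\memLen+1)$ surrogates $f_t$; no averaging over the noise is needed now, since the disturbances are fixed. \cref{lemma:controlToOCO} holds verbatim per disturbance realization (it uses only the boundedness of $\wt$), so, taking expectation over the algorithm's randomness, $\controlRegret \le \bcoMemRegret[2(\memLen+1)]{T} + 2\Dxu^2(\cG + \memLen\cC)$, and it remains to bound the memory-BCO regret against the $f_t$. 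The only facts about the surrogates I would use are: (a) the $f_t$ are coordinate-wise $\pLipF$-Lipschitz over $\MRange$, and $\pSmoothF$-smooth in the smooth case (\cref{lemma:SurrogateProperties}(i),(ii), neither of which invokes Assumption~\ref{a:5}); (b) $\tilde f_t : M \mapsto f_t(M,\dots,M)$ is convex — immediate, since the ideal state and action of \cref{def:surrogateCosts} are affine in the policy and $c_t$ is convex — which is what now substitutes for strong convexity; and (c) $\abs{\ct{\xt}{\ut} - f_t(M_{t-1-2\memLen:t})} \le \varepsilon := \cG\Dxu^2/T$ by \cref{lemma:controlToOCO}(ii), matching the extended feedback model of \cref{sec:bcoMem} (with degenerate adversary randomness). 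Here $K_0$ need only be plainly strongly stable, as the diagonal refinement entered \cref{thm:main} solely through \cref{lemma:SurrogateProperties}(iii).

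\emph{Stage 2 (a convex base-BCO lemma).} By \cref{thm:BCOreduction} — part (i) for general costs and part (ii) under smoothness — it suffices to supply a no-memory base algorithm for convex (resp.\ convex and smooth) losses over $\MConstraint$, together with its regret bound $\algRegretBound{T'}$, $T' = T/2(\memLen+1)$, and displacement parameters $\pMeanShift,\pPredVar$. The base algorithm is the $\memLen = 1$ instance of \cref{alg:main}, but now with \emph{time-independent} sampling radii capped through $(\rti)^{-2} = (\rti[0])^{-2} + c$ for a tuning constant $c>0$: capping keeps $\rti \le \rti[0]$, so $\Mt \in \MRange$, and fixes the per-block step size $\eta_i := \eta(\rti)^2$. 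A standard projected-OGD computation, carried out blockwise with these step sizes and summed over the $\memLen$ blocks — crucially using $\sum_i \norm{\Uti[t]}^2 = \norm{\Ut[t]}^2 = 1$, so that the $\rti$ cancel in the gradient-variance term — yields a non-decreasing regret bound of the form
\begin{align*}
    \algRegretBound{T'}
    \le
    \sum_{i=1}^{\memLen} \frac{D_i^2}{2\eta(\rti)^2}
    + \frac{\eta}{2}\dM^2\bcoFeedbackBound^2 T'
    + T'\cdot\mathrm{bias}
    + \tilde O(T^{1/2}),
\end{align*}
where $\bcoFeedbackBound = \cC\Dxu^2$ bounds the feedback (\cref{lemma:controlToOCO}(i)), $D_i$ is the Frobenius diameter of $\MiConstraint$, $\mathrm{bias} = O(\pLipF \sum_i \rti)$ in general and $O(\pSmoothF \sum_i (\rti)^2)$ under smoothness (the one-point smoothing error), and the $\tilde O(T^{1/2})$ absorbs the $O(\varepsilon)$ loss-and-gradient bias, which is lower order because $\varepsilon T = \tilde O(1)$. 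Splitting $(\rti)^{-2} = (\rti[0])^{-2} + c$ turns the first sum into $\sum_i \tfrac{D_i^2}{2\eta(\rti[0])^2} + \tfrac{c}{2\eta}\sum_i D_i^2 \le \tfrac{2n\memLen}{\eta} + \tfrac{c\constraintD^2}{2\eta}$, using $D_i \le 2\sqrt n\,\rti[0]$ and $\sum_i D_i^2 = \constraintD^2$. Finally $\norm{\Mt - \bar M_t}_F \le \max_i \rti$ and $\norm{\bar M_{t+1} - \bar M_t}_F \le \eta\dM\bcoFeedbackBound\max_i \rti$ almost surely, so I would take the (constant) parameters $\pPredVar = \max_i \rti \le c^{-1/2}$ and $\pMeanShift = \eta\dM\bcoFeedbackBound\max_i \rti$.

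\emph{Stage 3 (balancing).} Plugging Stage 2 into \cref{thm:BCOreduction} with memory $2(\memLen+1)$, the first summand contributes $6(\memLen+1)\algRegretBound{T'}$, while the Lipschitz/smoothness summand is $O(\pLipF\memLen T c^{-1/2})$ in the general case and $O(\pSmoothF\memLen^2 T c^{-1})$ under smoothness (using $\max_i\rti \le c^{-1/2}$ and $\pMeanShift \ll \pPredVar$). In the general case the three $T$-growing terms are thus $\tfrac{(\memLen+1)c\constraintD^2}{\eta}$, $\eta\dM^2\bcoFeedbackBound^2 T$, and $\pLipF\memLen T c^{-1/2}$; balancing all three over $\eta$ and $c$ (equivalently $\rti$) reproduces, up to universal constants, the values stated in the theorem, and gives $\controlRegret \le \tilde O(T^{3/4})$. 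Under smoothness, balancing $\tfrac{(\memLen+1)c\constraintD^2}{\eta}$, $\eta\dM^2\bcoFeedbackBound^2 T$, and $\pSmoothF\memLen^2 T c^{-1}$ likewise reproduces the stated $\eta,\rti$ and gives $\tilde O(T^{2/3})$. Substituting $\dM = dk\memLen$, $\bcoFeedbackBound = \cC\Dxu^2$, $\constraintD^2 \le 4n^2\kappa_B^2\kappa^6/\gamma$, $n = \min\brk[c]{d,k}$ and $\memLen = \tilde O(1)$ recovers the explicit leading constants.

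I expect the crux to be Stage 2: verifying that the block-dependent, capped radii interact correctly with the preconditioned update — the $\rti$-dependence must cancel in the gradient-variance term, the diameter term must split into a dimension piece $O(n\memLen/\eta)$ plus a $c$-controlled piece $O(c\constraintD^2/\eta)$, and the one-point smoothing bias must come out $O(\pLipF\sum_i\rti)$ / $O(\pSmoothF\sum_i(\rti)^2)$ with no hidden $1/\rti$ factor — all while delivering $\pMeanShift,\pPredVar$ in exactly the form \cref{thm:BCOreduction} consumes. Everything downstream is then a two-parameter ($\eta$ and $c$) optimization.
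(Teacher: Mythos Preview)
Your pipeline is essentially the paper's, and Stage~2 matches the paper's convex base-BCO lemma (their \cref{lemma:baseBCOextended}, itself a specialization of \cref{lemma:generalBaseBCO} with $\pConvex=0$ and constant $P_t=P_0$). The one substantive difference is the memory length: you carry over the $2(\memLen+1)$ extension from the proof of \cref{thm:main}, whereas the paper drops it and works with the unmodified surrogates $f_t$ of \cref{def:surrogateCosts}, which have memory only $\memLen+1$. The $2(\memLen+1)$ extension was introduced in \cref{thm:main} solely to ensure the played policies are independent of the last $2(\memLen+1)$ noise realizations, so that the conditional expectation reproduces $F_t$ (whose strong convexity was needed). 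Here the noise is deterministic---you yourself note the adversary randomness is ``degenerate''---so no expectation over $w$ is taken, strong convexity is not invoked, and \cref{lemma:controlToOCO}(ii) already gives $\abs{\ct{x_t}{u_t} - f_t(M_{t-\memLen:t})} \le \varepsilon$ deterministically with memory $\memLen+1$. Using $2(\memLen+1)$ is not wrong, but it doubles the memory parameter that feeds into \cref{thm:BCOreduction} and into the base-BCO horizon, so your balancing would produce the stated $\tilde O$-rates but not the exact leading constants (e.g., the $(\memLen+1)^{7/2}$ and $(\memLen+1)^3$ factors) recorded in the theorem. Switch to memory $\memLen+1$, invoke the base BCO with horizon $T/(\memLen+1)$ and Lipschitz constant $(\memLen+1)\pLipF$ for $\tilde f_t$, and the constants line up.
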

The proof of \cref{thm:mainExtended} is given in \cref{sec:extensionAnalysis}, and follows the same ideas behind \cref{thm:main} with a few technical adjustments. Notice that \cref{thm:mainExtended} only requires a strongly stable initial controller $K_0$, as opposed to the \emph{diagonal} strongly stable controller needed for \cref{thm:main}. Moreover, since the noise is no longer stochastic, our definition of pseudo regret now coincides with the standard definition of regret given by
$
    \controlRegret
    =
    \EEBrk[!]{\max_{K \in \mathcal{K}} R_{\mathcal{A}}(T, K)}
    .
$

\begin{ack}
This work was partially supported by the Israeli Science Foundation (ISF) grant 2549/19 and by the Yandex Initiative in Machine Learning. 
\end{ack}

\bibliography{bibliography}

\newpage
\appendix
\allowdisplaybreaks

\section{Reduction to no-Memory BCO Proofs}

\subsection{Proof of \cref{thm:BCOreduction}}
\label{sec:reductionProof}
We first need the following lemma, which bounds the prediction shifts and magnitudes of \cref{alg:bcoReduction}.
\begin{lemma} \label{lemma:stablityExpressions}
    Suppose \cref{alg:bcoReduction} is run with $\baseBCOof{T / \memLen}$ as in \cref{thm:BCOreduction}, and let $x_1, \ldots, x_T$ be its predictions. Then we have that for $q > 0$:
    \begin{enumerate}[nosep,label=(\roman*)]
        \item
        $
            \sum_{t=\memLen}^{T} \sum_{i=2}^{\memLen} \norm{x_{t+i-\memLen} - x_{t+1-\memLen}}
            \le
            \sum_{t = 1}^{\floor{T / \memLen}} \brk{\pMeanShift + 2\pPredVar}
            ;
        $
        \item
        $
            \sum_{t=\memLen}^{T} \sum_{i=1}^{\memLen}
                \norm{\bar{x}_{t+i-\memLen} - \bar{x}_{t+1-\memLen}}^q
            \le
            \frac12 \memLen^2 \sum_{t=1}^{\floor{T / \memLen}} \pMeanShift^q
            ;
        $
        \item
        $
            \EEBrk[!]{\sum_{t=\memLen}^{T} \sum_{i=1}^{\memLen}
                \norm{x_{t+i-\memLen} - \bar{x}_{t+i-\memLen}}^2}
            \le 3 \memLen^2 \sum_{t=1}^{\floor{T / \memLen}+1} \pPredVar^2
            .
        $
    \end{enumerate}
\end{lemma}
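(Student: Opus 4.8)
The plan is to exploit the piecewise-constant structure of \cref{alg:bcoReduction}'s predictions. Condition on the Bernoulli schedule $\{\bt\}$ (drawn in advance, as in \cref{lemma:intervalLengths}) and let $0 = t_0 < t_1 < t_2 < \cdots$ be the resulting update times. Write $p_j$ for the $j$-th prediction of the base algorithm $\baseBCO$ ($p_1$ its initialization, $p_{j+1}$ the output of its $j$-th update), so that $x_t = p_j$ for every $t$ in the interval $I_j = \{t_{j-1}+1,\dots,t_j\}$, and --- identifying $\bar x_t$ with $\baseBCO$'s conditional mean $\bar p_{j}$ of the prediction active at round $t$ --- also $\bar x_t = \bar p_j$ on $I_j$. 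Thus both $x_t$ and $\bar x_t$ are constant on each $I_j$, changing only across the boundaries $t_j$, with $\norm{\bar p_{j+1} - \bar p_j} \le \pMeanShift[j]$ by the base-algorithm guarantee, $\norm{p_{j+1} - p_j} \le \pPredVar[j+1] + \pMeanShift[j] + \pPredVar[j] \le \pMeanShift[j] + 2\pPredVar[j]$ by the triangle inequality through the conditional means and monotonicity of the $\pPredVar$, and $\norm{x_t - \bar x_t} \le \pPredVar[j]$ on $I_j$. The structural fact driving everything is that, since consecutive updates are at least $\memLen$ apart (\cref{lemma:intervalLengths}), every window $\{t+1-\memLen,\dots,t\}$ of $\memLen$ consecutive rounds straddles at most one boundary.

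For (i) and (ii), fix a window ending at $t$. If it contains no boundary the inner sum is $0$; otherwise let $t_j$ be the unique boundary with $t_j \in \{t+1-\memLen,\dots,t-1\}$. Then $x_{t+i-\memLen}$ (resp. $\bar x_{t+i-\memLen}$) equals its value at $t+1-\memLen$ exactly for those $i$ with $t+i-\memLen \le t_j$, so the inner sum equals $(t-t_j)$ times the corresponding single-jump bound. Summing over $t$, the boundary $t_j$ is seen only by the windows with $t \in \{t_j+1,\dots,t_j+\memLen-1\}$, contributing total weight $\sum_{\ell=1}^{\memLen-1}\ell = \tfrac12\memLen(\memLen-1)$. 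Finally, the number of boundaries seen by any window with $t \in \{\memLen,\dots,T\}$ is $\abs{S} \le \floor{T/\memLen}$ (\cref{lemma:intervalLengths}), and the $j$-th jump is controlled by $\baseBCO$'s own $\pMeanShift[j]$ (resp.\ $\pMeanShift[j] + 2\pPredVar[j]$), with $j$ matching $\baseBCO$'s round index. Summing over boundaries then gives the claimed bounds (carrying a factor $\tfrac12\memLen^2$, which is precisely what \cref{thm:BCOreduction} needs).

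For (iii) I would instead count multiplicities of the inner index: as $(t,i)$ ranges over $\{\memLen,\dots,T\}\times\{1,\dots,\memLen\}$, each value $s = t+i-\memLen \in \{1,\dots,T\}$ occurs at most $\memLen$ times, so the double sum is at most $\memLen\sum_{s=1}^T \norm{x_s - \bar x_s}^2 \le \memLen\sum_{s=1}^T \pPredVar[j(s)]^2$, where $j(s)$ is the interval index of $s$. Grouping by intervals and using $\abs{S} \le \floor{T/\memLen}$ together with non-negativity gives $\sum_{s=1}^T \pPredVar[j(s)]^2 \le \sum_{j=1}^{\floor{T/\memLen}+1}(t_j - t_{j-1})\pPredVar[j]^2$; taking expectations --- with $\pPredVar[j]$ deterministic and $\EEBrk{t_j - t_{j-1}} = \EE{t_1} \le 3\memLen$ by \cref{lemma:intervalLengths} --- yields $3\memLen^2\sum_{t=1}^{\floor{T/\memLen}+1}\pPredVar[t]^2$.

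The calculations are routine; the part needing care is the indexing. One must tie the reduction's round-$t$ prediction and conditional mean to $\baseBCO$'s $j$-th prediction $p_j$ and conditional mean $\bar p_j$, so that the $j$-th jump is genuinely bounded by $\baseBCO$'s $\pMeanShift[j], \pPredVar[j]$, and one must verify that the $\bar x_t$ appearing in (ii)--(iii) really is this conditional mean --- which rests on the (conditional) independence engineered by the randomized schedule and is exactly where \cref{lemma:intervalLengths} enters. In (iii) one additionally has to pass the expectation through a sum over a random number of intervals, which is handled by the deterministic bound $\abs{S}\le\floor{T/\memLen}$ and the non-negativity of the summands.
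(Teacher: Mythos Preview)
Your proposal is correct and follows essentially the same approach as the paper's proof: both arguments rest on the piecewise-constant structure of the predictions (changes occur only at the $t_j$, which are at least $\memLen$ apart), bound each jump via the triangle inequality through the conditional means, and for (iii) group rounds by intervals and invoke \cref{lemma:intervalLengths} for $\abs{S}\le\floor{T/\memLen}$ and $\EEBrk{t_j-t_{j-1}}\le 3\memLen$. Your ``window/boundary'' counting is just a rephrasing of the paper's telescoping-then-sparsity step (the paper writes $\sum_t\sum_i\norm{x_{t+i-\memLen}-x_{t+1-\memLen}}\le\tfrac12\memLen^2\sum_{t}\norm{x_{t+1}-x_t}$ and then notes only $t\in S$ contribute), and yields the same $\tfrac12\memLen^2$ prefactor that \cref{thm:BCOreduction} uses.
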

The proof is mostly technical, and relies on the fact that \cref{alg:bcoReduction} changes its prediction at most once every $\memLen$ rounds. See proof in \cref{sec:proofsOfBCOMemLemmas}. We are now ready to prove \cref{thm:BCOreduction}.

\begin{proof}[of \cref{thm:BCOreduction}]
    We show that \cref{alg:bcoReduction} achieves the desired regret bound. Given \cref{lemma:BCOmemNoMem}, the proof is concluded by upper bounding
    $
        \EEBrk[!]{\sum_{t=\memLen}^{T} f_t(x_{t+1-\memLen},\ldots, x_t) - \tilde{f}_t(x_{t+1-\memLen})}
    $
    under each set of assumptions. First, using the coordinate-wise Lipschitz property we get that
    \begin{align*}
        \sum_{t=\memLen}^{T} f_t(x_{t+1-\memLen},\ldots, x_t) - \tilde{f}_t(x_{t+1-\memLen})
        \le
        \pLip \sum_{t=\memLen}^{T} \sum_{i=2}^{\memLen} \norm{x_{t+i-\memLen} - x_{t+1-\memLen}}
        \le
        \frac12 \pLip \memLen^2 \sum_{t = 1}^{\floor{T / \memLen}} \brk{ \pMeanShift + 2\pPredVar },
    \end{align*}
    where the last transition follows by \cref{lemma:stablityExpressions}. Taking expectation concludes the first part of the proof.
    Now, notice that by its definition, $\bar{x}_t$ is determined given any history up to (not including) the player's decision at time $s \ge t$. Using total expectation we thus get that
    $%
        \EEBrk[0]{\nabla_i f_t(\bar{x}_{t})\tran x_{s}}
        =
        \EEBrk[0]{\nabla_i f_t(\bar{x}_{t})\tran \bar{x}_{s}}
        ,
    $ %
    and using this equality we get that for all $i \ge 1$,
    \begin{align*}
        \EEBrk{\nabla_i f_t(\bar{x}_{t+1-\memLen})\tran (x_{t+i-\memLen} - \bar{x}_{t+1-\memLen})}
        &=
        \EEBrk{\nabla_i f_t(\bar{x}_{t+1-\memLen})\tran (\bar{x}_{t+i-\memLen} - \bar{x}_{t+1-\memLen})} \\
        \tag{Cauchy-Schwarz}
        &\le
        \EEBrk{\norm{\nabla_i f_t(\bar{x}_{t+1-\memLen})} \norm{\bar{x}_{t+i-\memLen} - \bar{x}_{t+1-\memLen}}} \\
        \tag{$f_t$ Lipschitz}
        &\le
        \pLip \, \EEBrk{\norm{\bar{x}_{t+i-\memLen} - \bar{x}_{t+1-\memLen}}},
    \end{align*}
    where the last transition used the Lipschitz assumption to bound the gradient. Finally, we get that
    \begin{align*}
        &\EEBrk{\sum_{t=\memLen}^{T} f_t(x_{t+1-\memLen},\ldots, x_t) - \tilde{f}_t(x_{t+1-\memLen})} \\
        \tag{$\tilde{f}_t$ convex}
        &\le
        \EEBrk{\sum_{t=\memLen}^{T} f_t(x_{t+1-\memLen},\ldots, x_t) - \tilde{f}_t(\bar{x}_{t+1-\memLen})} \\
        \tag{$f_t$ smooth}
        &\le
        \EEBrk{\sum_{t=\memLen}^{T} \sum_{i=1}^{\memLen}
            \nabla_i f_t(\bar{x}_{t+1-\memLen})\tran (x_{t+i-\memLen} - \bar{x}_{t+1-\memLen})
            +
            \frac{\pSmooth}{2}\norm{x_{t+i-\memLen} - \bar{x}_{t+1-\memLen}}^2} \\
        &\le
        \EEBrk{\sum_{t=\memLen}^{T} \sum_{i=1}^{\memLen}
            \pLip \norm{\bar{x}_{t+i-\memLen} - \bar{x}_{t+1-\memLen}}
            +
            \pSmooth \norm{\bar{x}_{t+i-\memLen} - \bar{x}_{t+1-\memLen}}^2
            +
            \pSmooth \norm{x_{t+i-\memLen} - \bar{x}_{t+i-\memLen}}^2
            }
        ,
    \end{align*}
    where the last transition used the previous equation and the triangle inequality. Plugging in the expressions provided in \cref{lemma:stablityExpressions} concludes the proof.
\end{proof}

\subsection{Proofs of \cref{lemma:intervalLengths,lemma:stablityExpressions}}
\label{sec:proofsOfBCOMemLemmas}
\begin{proof}[of \cref{lemma:intervalLengths}]
    First, notice that by definition $t_i - t_{i-1} \ge H$. Summing over $i$ and recalling $t_0 = 0$ we get that $t_i \ge iH$. By definition of $S$ we then get that $\abs{S}H \le t_{\abs{S}} < T$, and changing sides concludes the first part of the lemma.

    To see the second part of the lemma,
    consider a Markov chain with states corresponding to $H$-tuples of bits that captures the evolution of the sequence $(b_{t-H+1},\ldots,b_t)$ as $t$ increases. Notice that our quantity of interest is the expected return time of the state $s = (0,0,\ldots,1)$. 
    Since the chain is irreducible it admits a stationary distribution $\pi^*$, and by a standard fact about Markov chains (e.g., Proposition 1.14 in \cite{levin2017markov}), the desired expected return time equals $1/\pi^*(s)$. The latter probability equals $(1/H) (1-1/H)^{H-1} \geq 1/(eH) \geq 1/(3H),$ which gives the claim.
\end{proof}

\begin{proof}[of \cref{lemma:stablityExpressions}]
    For the first claim, noticing that the algorithm only changes predictions at times $t \in S$, we have that
    \begin{align*}
        \sum_{t=\memLen}^{T} \sum_{i=2}^{\memLen} \norm{x_{t+i-\memLen} - x_{t+1-\memLen}}
        &\le
        \tag{triangle in.Eq}
        \sum_{i=2}^{\memLen} \sum_{j=1}^{i - 1} \sum_{t=\memLen}^{T} \norm{x_{t+j+1-\memLen} - x_{t+j-\memLen}} \\
        &\le
        \frac12 \memLen^2 \sum_{t=1}^{T-1} \norm{x_{t+1} - x_{t}} \\
        &=
        \frac12 \memLen^2 \sum_{t \in S} \norm{x_{t+1} - x_t} \\
        \tag{triangle in.Eq}
        &\le
        \frac12 \memLen^2 \sum_{t \in S}
            \norm{x_{t+1} - \bar{x}_{t+1}}
            +
            \norm{\bar{x}_{t+1} - \bar{x}_t}
            +
            \norm{\bar{x}_t - x_t} \\
        &\le
        \frac12 \memLen^2 \sum_{t = 1}^{\abs{S}} \pPredVar[t+1] + \pMeanShift + \pPredVar \\
        &\le
        \frac12 \memLen^2 \sum_{t = 1}^{\floor{T / \memLen}} \pMeanShift + 2\pPredVar,
    \end{align*}
    where the last transition also used the decreasing property of $\pPredVar$.
    Next, we have that for any $q > 0$
    \begin{align*}
        \sum_{t=\memLen}^{T} \sum_{i=1}^{\memLen} \norm{\bar{x}_{t+i-\memLen} - \bar{x}_{t+1-\memLen}}^q
        &=
        \sum_{t=\memLen}^{T} \sum_{i=2}^{\memLen} \norm{\sum_{j=1}^{i - 1} \bar{x}_{t+j+1-\memLen} - \bar{x}_{t+j-\memLen}}^q \\
        &=
        \sum_{i=2}^{\memLen} \sum_{j=1}^{i - 1} \sum_{t=\memLen}^{T} \norm{\bar{x}_{t+j+1-\memLen} - \bar{x}_{t+j-\memLen}}^q \\
        &\le
        \frac12 \memLen^2 \sum_{t=1}^{T-1} \norm{\bar{x}_{t+1} - \bar{x}_t}^q \\
        &\le
        \frac12 \memLen^2 \sum_{t=1}^{\abs{S}} \pMeanShift^q \\
        &\le
        \frac12 \memLen^2 \sum_{t=1}^{\floor{T / \memLen}} \pMeanShift^q,
    \end{align*}
    where the second transition follows since predictions change at most once every $\memLen$ rounds and thus there is at most one summand that is non-zero. This concludes the second part of the lemma.
    Next, recall that $t_i$ from \cref{lemma:intervalLengths} are the times \cref{alg:bcoReduction} updates the base BCO $\mathcal{A}$, and subsequently its prediction. Then we get that
    \begin{align*}
        \EEBrk{\sum_{t=\memLen}^{T} \sum_{i=1}^{\memLen} \norm{x_{t+i-\memLen} - \bar{x}_{t+i-\memLen}}^2}
        &\le
        \memLen \EEBrk{\sum_{t=1}^{T} \norm{x_{t} - \bar{x}_t}^2} \\
        &\le
        \memLen \EEBrk{\sum_{s=1}^{\abs{S}+1} \pPredVar[s]^2 \brk*{t_s - t_{s-1}}} \\
        &\le
        \memLen \sum_{s=1}^{\floor{T / \memLen}+1} \pPredVar[s]^2 \EEBrk{t_s - t_{s-1}} \\
        &\le
        3 \memLen^2 \sum_{t=1}^{\floor{T / \memLen}+1} \pPredVar^2.
    \end{align*}
    where the last two transitions used \cref{lemma:intervalLengths}.
\end{proof}

\section{Base BCO Algorithm}
\label{sec:baseBCOproofs}

We give a general example of a BCO algorithm that may be employed in conjunction with our reduction procedure given in \cref{alg:bcoReduction}.
For a positive semi-definite matrix $P \in \RR[d \times d]$ define the projection in $\norm{\cdot}_P$ distance
$
    \Pi_{\fConstraint}^P(x) = \argmin_{y \in \fConstraint} \norm{x - y}_P
    ,
$
where $\norm{x}_P^2 = x\tran P x$.
We analyze \cref{alg:zeroOMD}, a standard BCO procedure that uses a preconditioned gradient update, and a one-point gradient estimate.
\begin{algorithm}
	\caption{Base BCO} \label{alg:zeroOMD}
	\begin{algorithmic}[1]
	    \State \textbf{input:} regularization matrices $P_t \succeq 0$, step size $\eta$
		\State \textbf{set:} $\bar{x}_1 \in \fConstraint$
        
        \For{$t = 1, \ldots, T$}
            \State Draw $u_t \sim \sphere{d}$
            \State Play $x_t = \bar{x}_t + P_t^{-1/2} u_t$
            \State Observe $\bcoFeedback$ and set $\gHat = d \bcoFeedback P_t^{1/2} u_t$
            \State Update
            $
                \bar{x}_{t+1}
                =
                \Pi^{P_t}_{\fConstraint}\brk!{\bar{x}_t - \eta P_t^{-1} \gHat}
            $
        \EndFor
	\end{algorithmic}
\end{algorithm}

Since our setting of BCO with (no) memory $\memLen = 1$ uses a non-standard feedback model, we provide a full analysis of the bounds on the regret, and the prediction shifts and magnitudes. To that end, denote
\begin{align*}
    \constraintD = \max_{x,y \in \fConstraint} \norm{x-y},\qquad
    D_P = \max_{x,y \in \fConstraint} \norm{x-y}_{P},\qquad
    \bcoFeedbackBound = \max_{t \in \brk[s]{T}} \abs{\bcoFeedback}.
\end{align*}

\begin{lemma}
\label{lemma:generalBaseBCO}
    Consider the BCO with no memory ($\memLen=1$) setting described in \cref{sec:baseBCO} against an adversary that chooses $f_t : \fRange \to \RR$ that are $\pConvex-$strongly convex over $\fConstraint$ ($\pConvex = 0$ in the weakly convex case).
    If \cref{alg:zeroOMD} is run with regularization matrices $P_t = P_0 + \frac12 \pConvex \eta t I$ where $P_0 \succeq 0$, then
    \begin{align*}
    \pMeanShift
    =
    d \eta \bcoFeedbackBound \norm{P_t^{-1/2}}
    \qquad
    \pPredVar^2 = \norm{P_t^{-1}}
    \end{align*}
    satisfy the assumptions of $\baseBCOof{T}$ in \cref{thm:BCOreduction}.
    Moreover, for all $t \le T$ we have that
    \begin{enumerate}
        \item if $f_s$ are $\pLip$-Lipschitz then
        $
            \bcoMemRegret[1]{t}
            \le
            \frac{D_{P_1}^{2}}{\eta}
            +
            \frac{\eta d^2 \bcoFeedbackBound^2}{2} t
            +
            d \varepsilon t D_{P_t}
            +
            2\pLip \sum_{s=1}^{t} \norm{P_s^{-1/2}}
            ;
        $
        \item if $f_s$ are $\pSmooth$-smooth then
        $
            \bcoMemRegret[1]{t}
            \le
            \frac{D_{P_1}^{2}}{\eta}
            +
            \frac{\eta d^2 \bcoFeedbackBound^2}{2} t
            +
            d \varepsilon t D_{P_t}
            +
            \pSmooth \sum_{s=1}^{t} \norm{P_s^{-1}}
            .
        $
    \end{enumerate}
\end{lemma}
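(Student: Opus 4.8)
The plan is to recognize \cref{alg:zeroOMD} as preconditioned online mirror descent run on the ball-smoothings of the losses, driven by the one-point gradient estimator, and then to transfer the resulting smoothed-regret bound to the true regret while accounting for the smoothing bias. I would first settle the prediction bounds. Since $u_t\sim\sphere{d}$ is zero-mean and independent of the past $\mathcal{F}_{t-1}$, the conditional mean of $x_t$ is $\bar x_t$, and $x_t-\bar x_t=P_t^{-1/2}u_t$ gives $\norm{x_t-\bar x_t}^2\le\norm{P_t^{-1/2}}^2=\norm{P_t^{-1}}=\pPredVar^2$. For the mean shift, $\bar x_t\in\fConstraint$ and the non-expansiveness of $\Pi_{\fConstraint}^{P_t}$ in $\norm{\cdot}_{P_t}$ give $\norm{\bar x_{t+1}-\bar x_t}_{P_t}\le\eta\norm{P_t^{-1}\gHat}_{P_t}=\eta\norm{\gHat}_{P_t^{-1}}$, and $\gHat=d\bcoFeedback P_t^{1/2}u_t$ yields $\norm{\gHat}_{P_t^{-1}}^2=d^2\bcoFeedback^2\,u_t\tran P_t^{1/2}P_t^{-1}P_t^{1/2}u_t=d^2\bcoFeedback^2\le d^2\bcoFeedbackBound^2$; hence $\norm{\bar x_{t+1}-\bar x_t}\le\norm{P_t^{-1/2}}\,\norm{\bar x_{t+1}-\bar x_t}_{P_t}\le d\eta\bcoFeedbackBound\norm{P_t^{-1/2}}=\pMeanShift$. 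Both sequences decrease because $P_t=P_0+\tfrac12\pConvex\eta tI$ is nondecreasing in the PSD order.

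Next I would set up the smoothing. Let $\phi_t(x)=\EE[v]{\brk[s]*{f_t(x+P_t^{-1/2}v)}}$ with $v$ uniform on $\ball{d}$, which is well defined because the setup keeps the perturbed iterates in $\fRange$. Writing $g_t(y)=f_t(\bar x_t+P_t^{-1/2}y)$ and invoking the standard unpreconditioned spherical-estimator identity for $g_t$, a change of variables and the chain rule give $\nabla\phi_t(\bar x_t)=d\,\EE[u]{\brk[s]*{f_t(x_t)\,P_t^{1/2}u}}$ with $u\sim\sphere{d}$; combined with the feedback guarantee $\abs{\EE[\xi_t]{\brk[s]*{\bcoFeedback}}-f_t(x_t)}\le\varepsilon$, this gives $\EEBrk{\gHat\mid\mathcal{F}_{t-1}}=\nabla\phi_t(\bar x_t)+\zeta_t$ with $\norm{\zeta_t}\le d\varepsilon\norm{P_t^{1/2}}$, and therefore $\abs{\zeta_t\tran(\bar x_t-x)}\le d\varepsilon\,\norm{\bar x_t-x}_{P_t}\le d\varepsilon D_{P_t}$ for every $x\in\fConstraint$. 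Moreover $\phi_t$ inherits convexity, $\pConvex$-strong convexity, $\pLip$-Lipschitzness, and $\pSmooth$-smoothness from $f_t$.

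Then comes the core, which I expect to be the main obstacle: bounding the smoothed regret $\EEBrk{\sum_{s\le t}\brk*{\phi_s(\bar x_s)-\phi_s(x)}}$ under a \emph{time-varying} preconditioner. The update is exactly $\bar x_{t+1}=\argmin_{x\in\fConstraint}\brk[c]*{\gHat\tran x+\tfrac1{2\eta}\norm{x-\bar x_t}_{P_t}^2}$, so the standard proximal inequality gives $\gHat\tran(\bar x_t-x)\le\tfrac1{2\eta}\brk*{\norm{\bar x_t-x}_{P_t}^2-\norm{\bar x_{t+1}-x}_{P_t}^2}+\tfrac\eta2\norm{\gHat}_{P_t^{-1}}^2$ with $\norm{\gHat}_{P_t^{-1}}^2\le d^2\bcoFeedbackBound^2$ as above. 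Summing over $s$ does not telescope since $P_s\neq P_{s-1}$; the mismatch terms equal $\norm{\bar x_s-x}_{P_s}^2-\norm{\bar x_s-x}_{P_{s-1}}^2=\tfrac12\pConvex\eta\norm{\bar x_s-x}^2$ because $P_s-P_{s-1}=\tfrac12\pConvex\eta I$, leaving a drift $\tfrac\pConvex4\sum_s\norm{\bar x_s-x}^2$. Taking conditional expectations, substituting $\EEBrk{\gHat\mid\mathcal{F}_{t-1}}=\nabla\phi_t(\bar x_t)+\zeta_t$, and using $\nabla\phi_s(\bar x_s)\tran(\bar x_s-x)\ge\phi_s(\bar x_s)-\phi_s(x)+\tfrac\pConvex2\norm{\bar x_s-x}^2$ (strong convexity of $\phi_s$), the drift is dominated by the strong-convexity gain and one obtains $\EEBrk{\sum_{s\le t}\brk*{\phi_s(\bar x_s)-\phi_s(x)}}\le\tfrac{D_{P_1}^2}{\eta}+\tfrac{\eta d^2\bcoFeedbackBound^2}{2}t+d\varepsilon tD_{P_t}$, using $D_{P_s}\le D_{P_t}$; when $\pConvex=0$ we have $P_t\equiv P_0$ and the telescoping is immediate. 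The delicate part is checking that this $P_t$-schedule makes the self-induced drift \emph{exactly} dominated by the strong-convexity term, so that no $\pConvex$-dependent penalty survives.

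Finally I would pass to the true regret. Let $x^\star$ attain $\min_{x\in\fConstraint}\sum_{s\le t}f_s(x)$ and decompose $\EEBrk{f_s(x_s)}-f_s(x^\star)=\brk*{\EEBrk{f_s(x_s)}-\phi_s(\bar x_s)}+\brk*{\phi_s(\bar x_s)-\phi_s(x^\star)}+\brk*{\phi_s(x^\star)-f_s(x^\star)}$, where the middle sum is handled above. For the first and third terms, Jensen's inequality---convexity of $f_s$ together with the zero-mean perturbations---gives $\EEBrk{f_s(x_s)\mid\mathcal{F}_{s-1}}=\EE[u]{\brk[s]*{f_s(\bar x_s+P_s^{-1/2}u)}}\ge f_s(\bar x_s)$ and $\phi_s(\bar x_s)\ge f_s(\bar x_s)$, while one-sided smoothing-bias estimates give $\EE[u]{\brk[s]*{f_s(\bar x_s+P_s^{-1/2}u)}}-f_s(\bar x_s)\le\pLip\norm{P_s^{-1/2}}$ and $\phi_s(x^\star)-f_s(x^\star)\le\pLip\norm{P_s^{-1/2}}$ in the Lipschitz case, and $\le\tfrac\pSmooth2\norm{P_s^{-1}}$ in the smooth case (the first-order terms vanishing by symmetry of the perturbation). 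Hence the first plus third sums are at most $2\pLip\sum_{s\le t}\norm{P_s^{-1/2}}$, respectively $\pSmooth\sum_{s\le t}\norm{P_s^{-1}}$, and combining with the smoothed-regret bound yields the two claimed inequalities. The only mild subtlety here is that applying Jensen on both the played iterate \emph{and} the comparator---rather than crudely bounding $\abs{\phi_s-f_s}$ on both sides---is what brings the bias contribution down to the stated constants.
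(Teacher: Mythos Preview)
Your proposal is correct and follows essentially the same approach as the paper: projection non-expansiveness for $\pMeanShift$ and $\pPredVar$, a preconditioned proximal/OMD inequality with time-varying $P_t$ whose drift is absorbed by strong convexity (packaged in the paper as \cref{lemma:OMDcore}), the ball-smoothing and one-point gradient identity (the paper's \cref{lemma:fSmooth}), and the Lipschitz/smooth bias estimates. One cosmetic slip: the implication ``$\norm{\zeta_t}\le d\varepsilon\norm{P_t^{1/2}}$, therefore $\abs{\zeta_t\tran(\bar x_t-x)}\le d\varepsilon\norm{\bar x_t-x}_{P_t}$'' does not follow from the Euclidean bound you stated; what you actually need (and what does hold, since $\zeta_t=dP_t^{1/2}\EE_{u_t}\brk[s]*{(\cdot)u_t}$) is $\norm{\zeta_t}_{P_t^{-1}}\le d\varepsilon$, or equivalently keep the error in its structured form $d(\cdot)P_t^{1/2}u_t$ and pair it with $P_t^{1/2}(\bar x_t-x)$ via Cauchy--Schwarz, as the paper does.
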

The proof of \cref{lemma:generalBaseBCO} relies on a few standard results.
First, we require a standard regret bound for the time-varying preconditioned update rule.
This is stated in the next lemma, which is is a specialization of bounds found in, e.g., \cite{duchi2011adaptive}, to the case of strongly convex quadratic regularizers.
\begin{lemma}%
\label{lemma:OMDcore}
    Let $\gHat[1], \ldots, \gHat[t] \in \RR[d]$, and $P_t \succeq \ldots \succeq P_1 \succ 0$ be arbitrary. For step size $\eta > 0$ define the update rule:
    $
        \bar{x}_{t+1} = \Pi_{\fConstraint}^{P_t} \brk*{\bar{x}_t - \eta P_t^{-1} \gHat[t]}.
    $
    Then we have that
    \begin{align*}
        \sum_{s=1}^{t} \gHat[s]\tran (\bar{x}_s - x^*)
        \le
        \frac{1}{\eta} \norm{\bar{x}_1 - x^*}_{P_1}^2
        +
        \frac{1}{\eta} \sum_{s=2}^{t} \norm{\bar{x}_s - x^*}_{P_s - P_{s-1}}^2
        +
        \frac{\eta}{2} \sum_{s=1}^{t} \norm{\gHat[s]}_{P_s^{-1}}^2
        ,\quad
        \forall x^* \in \fConstraint
        .
    \end{align*}
\end{lemma}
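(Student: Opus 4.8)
The plan is the standard ``projected online gradient step in a time-varying quadratic geometry'' argument: rewrite each update as a nonexpansive projection in the $\norm{\cdot}_{P_t}$-norm, obtain a one-round inequality by expanding a square, and then sum while telescoping the $P_s$-norm potentials --- the only wrinkle being that the norm itself changes with $s$, which is precisely why the increment terms $\norm{\bar x_s - x^*}_{P_s - P_{s-1}}^2$ appear and why the monotonicity $P_{s-1}\preceq P_s$ is assumed.

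Concretely, I would set $y_s := \bar x_s - \eta P_s^{-1}\gHat[s]$, so that $\bar x_{s+1} = \Pi_{\fConstraint}^{P_s}(y_s)$. Since $x^*\in\fConstraint$ and the projection $\Pi_{\fConstraint}^{P_s}$ is nonexpansive in $\norm{\cdot}_{P_s}$ (being the metric projection onto a convex set in the Hilbert norm $\norm{\cdot}_{P_s}$, using $P_s\succ0$), we get $\norm{\bar x_{s+1}-x^*}_{P_s}\le\norm{y_s-x^*}_{P_s}$. Expanding the square on the right in the inner product $\langle a,b\rangle_{P_s}=a\tran P_s b$, and using $\langle P_s^{-1}\gHat[s],v\rangle_{P_s}=\gHat[s]\tran v$ together with $\norm{P_s^{-1}\gHat[s]}_{P_s}^2=\norm{\gHat[s]}_{P_s^{-1}}^2$, yields the one-round bound $2\eta\,\gHat[s]\tran(\bar x_s-x^*)\le\norm{\bar x_s-x^*}_{P_s}^2-\norm{\bar x_{s+1}-x^*}_{P_s}^2+\eta^2\norm{\gHat[s]}_{P_s^{-1}}^2$. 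Summing over $s=1,\dots,t$ and shifting the index in the negative potential terms (so that $\norm{\bar x_{s+1}-x^*}_{P_s}^2$ becomes $\norm{\bar x_s-x^*}_{P_{s-1}}^2$ for $s\ge2$), the telescoped potential equals $\norm{\bar x_1-x^*}_{P_1}^2 - \norm{\bar x_{t+1}-x^*}_{P_t}^2 + \sum_{s=2}^t\bigl(\norm{\bar x_s-x^*}_{P_s}^2-\norm{\bar x_s-x^*}_{P_{s-1}}^2\bigr)$; since $P_s-P_{s-1}\succeq0$ the $s$-th bracket equals $\norm{\bar x_s-x^*}_{P_s-P_{s-1}}^2$, and dropping the nonpositive term $-\norm{\bar x_{t+1}-x^*}_{P_t}^2$ leaves $\norm{\bar x_1-x^*}_{P_1}^2+\sum_{s=2}^t\norm{\bar x_s-x^*}_{P_s-P_{s-1}}^2$. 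Dividing the summed inequality by $2\eta$ gives the stated bound (in fact with constant $\tfrac{1}{2\eta}\le\tfrac{1}{\eta}$ on the first two terms).

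The only step requiring care is the telescoping under a changing norm: one adds and subtracts $\norm{\bar x_s-x^*}_{P_{s-1}}^2$ to convert the non-telescoping sum of potential differences into the increment terms, and this is exactly where the hypothesis $P_t\succeq\cdots\succeq P_1\succ0$ is used --- monotonicity so that each increment $P_s-P_{s-1}$ is PSD (hence $\norm{\cdot}_{P_s-P_{s-1}}^2$ is a bona fide squared seminorm, and the telescope does not go the wrong way), and $P_1\succ0$ so that all the inverses $P_s^{-1}$ and dual norms $\norm{\cdot}_{P_s^{-1}}$ are well defined. The per-round square expansion and the nonexpansiveness of the $P_s$-projection onto a convex set are entirely standard (cf.\ \cite{duchi2011adaptive}).
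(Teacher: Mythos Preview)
Your argument is correct and is exactly the standard projected-gradient-in-a-time-varying-norm computation; the paper itself does not give a proof of this lemma but simply cites it as a specialization of bounds in \cite{duchi2011adaptive}, so your write-up in fact supplies what the paper omits. Your observation that the analysis actually yields the sharper constant $\tfrac{1}{2\eta}$ (rather than $\tfrac{1}{\eta}$) on the first two terms is also correct.
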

Next, we need the notion of smoothing and the one point-gradient estimate, which were initially proposed by \cite{flaxman2004online} and later refined in \cite{saha2011improved,hazan2014bandit}. The following lemma due to \cite{hazan2014bandit} encapsulates the relevant results.
\begin{lemma}[Lemmas 6 and 7 in \cite{hazan2014bandit}]
\label{lemma:fSmooth}
    Let $P \in \RR[d \times d]$ be symmetric and non-singular, $b \sim \ball{d}$, and $u \sim \sphere{d}$. Define the smoothed version of $f : \fRange \to \RR$ with respect to $P$ as
    \begin{equation*}
        \bar{f}(x) = \EEBrk{f\brk*{x + P b}}.
    \end{equation*}
    Then we have that:
    \begin{enumerate}[nosep,label=(\roman*)]
        \item $\nabla \bar{f}(x) = \EEBrk[0]{d f(x + P u)P^{-1} u}$;
        \item if $f$ is $\pConvex-$strongly convex then so is $\bar{f}$;
        \item if $f$ is convex and $\pSmooth-$smooth then
        $
            0
    		\le
    		\bar{f}(x) - f(x)
    		\le
    		\frac{\pSmooth}{2} \norm{P^2}
    		,\;
    		\forall x \in \fRange
    		;
        $
        \item if $f$ is convex and $\pLip$-Lipschitz then 
        $
            0
    		\le
    		\bar{f}(x) - f(x)
    		\le
    		\pLip \norm{P}
    		,\;
    		\forall x \in \fRange
    		.
        $
    \end{enumerate}
\end{lemma}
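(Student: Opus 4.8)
The plan is to verify the four properties separately from the representation $\bar f(x) = \EE[b\sim\ball{d}]{\brk[s]*{f(x+Pb)}}$, in which $b$ is uniform on the unit ball; the only genuinely delicate part is the gradient identity~(i), which I would obtain through the divergence theorem. Writing $\bar f(x) = V_d^{-1}\int_{\ball{d}} f(x+Pb)\,db$ with $V_d$ the volume of the unit ball, the key algebraic step is the chain-rule identity $(\nabla f)(x+Pb) = P^{-1}\nabla_b\brk[s]{f(x+Pb)}$, valid since $P$ is symmetric, which pulls all the $P$-dependence outside and reduces the computation to the isotropic case. Differentiating under the integral and applying the Gauss/divergence theorem coordinate-wise then converts the ball integral into a surface integral over the unit sphere,
\begin{equation*}
    \nabla \bar f(x)
    =
    \frac{P^{-1}}{V_d}\int_{\ball{d}}\nabla_b\brk[s]{f(x+Pb)}\,db
    =
    \frac{P^{-1}}{V_d}\int_{\sphere{d}} f(x+Pu)\,u\,dS(u),
\end{equation*}
and since the surface area of the unit sphere equals $d\,V_d$, normalizing to an expectation over $u\sim\sphere{d}$ produces the factor $d$ and yields $\nabla\bar f(x) = \EE[u]{\brk[s]*{d\,f(x+Pu)P^{-1}u}}$. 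The main obstacle is exactly this step: justifying the interchange of differentiation and integration and the divergence theorem (which needs only mild regularity, ultimately guaranteed by the smoothing), and carefully tracking the sphere-to-ball surface/volume ratio; the symmetry of $P$ is what keeps the boundary normal from turning into the more cumbersome ellipsoid normal.

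For~(ii) I would use that strong convexity is preserved under averaging. Since $f$ is $\pConvex$-strongly convex iff $\phi := f-\tfrac{\pConvex}{2}\norm{\cdot}^2$ is convex, and since the ball is centrally symmetric so $\EE[b]{\brk[s]*{b}}=0$ gives $\EE[b]{\brk[s]*{\tfrac{\pConvex}{2}\norm{x+Pb}^2}} = \tfrac{\pConvex}{2}\norm{x}^2 + \text{const}$, I can write $\bar f(x)-\tfrac{\pConvex}{2}\norm{x}^2 = \EE[b]{\brk[s]*{\phi(x+Pb)}}+\text{const}$; the right-hand side is an expectation of convex functions (each $x\mapsto\phi(x+Pb)$ is convex, being the composition of $\phi$ with an affine map) plus a constant, hence convex, so $\bar f$ is $\pConvex$-strongly convex.

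For~(iii) and~(iv) the lower bound $\bar f(x)\ge f(x)$ follows in both cases from Jensen's inequality together with $\EE[b]{\brk[s]*{Pb}}=0$. For the upper bound in~(iii) I would expand smoothness, $f(x+Pb)\le f(x)+\nabla f(x)\tran Pb + \tfrac{\pSmooth}{2}\norm{Pb}^2$, take expectations to kill the linear term, and bound $\norm{Pb}^2 \le \norm{P}^2\norm{b}^2 \le \norm{P}^2 = \norm{P^2}$ since $\norm{b}\le 1$ and $P$ is symmetric, giving $\bar f(x)-f(x)\le \tfrac{\pSmooth}{2}\norm{P^2}$. For~(iv) I would instead use the Lipschitz bound $f(x+Pb)-f(x)\le \pLip\norm{Pb}\le \pLip\norm{P}\norm{b}\le \pLip\norm{P}$ and take expectations. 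All of these are short once~(i) is in place, so the bulk of the effort and the sole conceptual difficulty lie in the divergence-theorem computation for~(i).
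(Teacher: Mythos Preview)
Your proposal is correct and essentially the standard argument. Note, however, that the paper does not supply its own proof of this lemma: it is stated as a direct citation of Lemmas~6 and~7 in \cite{hazan2014bandit} and used as a black box. Your derivation of~(i) via the divergence theorem (with the chain rule $(\nabla f)(x+Pb)=P^{-1}\nabla_b[f(x+Pb)]$ exploiting the symmetry of $P$, and the surface/volume ratio $d$), the preservation of strong convexity by averaging in~(ii), and the Jensen/Taylor bounds in~(iii)--(iv) are exactly the arguments one would find in the cited reference, so there is nothing to compare against within the present paper.
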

Among other things, this lemma implies that a regret bound for a sequence $\bar{f}_t$ yields one for $f_t$. We are now ready to prove \cref{lemma:generalBaseBCO}.

\begin{proof}[of \cref{lemma:generalBaseBCO}]
    First notice that $\bar{x}_t$ in \cref{alg:zeroOMD} is indeed the expectation of $x_t$ conditioned on all past history up to (not including) the decision at time $t$ (since $u_t$ is a zero mean independent random variable). Using the projection's shrinking property we get that
    \begin{align*}
        \norm{\bar{x}_{t+1} - \bar{x}_t}
        \le
        \norm{P_t^{-1/2}}
        \norm{\bar{x}_{t+1} - \bar{x}_t}_{P_t}
        \le
        \norm{P_t^{-1/2}}\norm{\eta d \bcoFeedback P_t^{-1/2} u_t}_{P_t}
        =
        \norm{P_t^{-1/2}} \eta d \abs{\bcoFeedback}
        \le
        d \eta \bcoFeedbackBound \norm{P_t^{-1/2}}
        =
        \pMeanShift.
    \end{align*}
    Next, we have that
    $
        \norm{x_t - \bar{x}_t}^2
        =
        \norm{P_t^{-1/2} u_t}^2
        \le
        \norm{P_t^{-1}}
        =
        \pPredVar^2,
    $
    thus concluding first part of the proof.
    Moving on to the regret bound, let $x \in \fConstraint$ be fixed, and denote $g_s = d f_s(\bar{x}_s) P_s^{1/2} u_s$, the desired gradient estimate at time $s$. Recalling that $\bar{x}_s$ is independent of the adversary's random variable $\xi_s$, we use total expectation to get that
    \begin{align*}
        \EEBrk{\sum_{s=1}^{t} (g_s - \gHat[s])\tran (\bar{x}_s - x)}
        &=
        d \EEBrk{\sum_{s=1}^{t} (f_s(\bar{x}_s) - \bcoFeedback[s])u_s\tran P_s^{1/2} (\bar{x}_s - x)} \\
        &=
        d \EEBrk{\sum_{s=1}^{t} (f_s(\bar{x}_s) - \EE[\xi_s]{\brk[s]{\bcoFeedback[s]}})u_s\tran P_s^{1/2} (\bar{x}_s - x)} \\
        &\le
        d \varepsilon \EEBrk{\sum_{s=1}^{t} \norm{\bar{x}_s - x}_{P_s}}
        \le
        d \varepsilon t D_{P_t},
    \end{align*}
    where the second to last transition used the Cauchy-Schwarz inequality, and the last transition used the assumption that $P_s$ is increasing.
    Next, notice that $\bar{x}_s, P_s, \gHat[s]$ satisfy the conditions of \cref{lemma:OMDcore}, and since
    $
        \norm{\gHat[s]}_{P_s^{-1}}^2
        \le
        d^2 \bcoFeedbackBound^2
        ,
    $
    we get that
    \begin{align*}
        \sum_{s=1}^{t} \gHat[s]\tran (\bar{x}_s - x)
        \le
        \frac{D_{P_1}^{2}}{\eta}
        +
        \frac{\pConvex}{2} \sum_{s=1}^{t} \norm{\bar{x}_s - x}^2
        +
        \frac{\eta d^2 \bcoFeedbackBound^2}{2} t,
    \end{align*}
    and taking expectation, summing the last two equations, and changing sides, we get that
    \begin{align}
    \label{eq:3}
        \EEBrk{
            \sum_{s=1}^{t} \bar{f}_s(\bar{x}_s)
            -
            \sum_{s=1}^{t} \bar{f}_s(x)
        }
        \le
        \EEBrk{
            \sum_{s=1}^{t} g_s\tran (\bar{x}_s - x)
            -
            \frac{\pConvex}{2} \sum_{s=1}^{t} \norm{\bar{x}_s - x}^2
        }
        \le
        \frac{D_{P_1}^{2}}{\eta}
        +
        \frac{\eta d^2 \bcoFeedbackBound^2}{2} t
        +
        d \varepsilon t D_{P_t}
        ,
    \end{align}
    where the first transition also used \cref{lemma:fSmooth} to show that $g_s$ is an unbiased estimate of $\nabla\bar{f}_s(\bar{x}_s)$ given $\bar{x}_s$, and that $\bar{f}_s$ are $\pConvex$ strongly convex (with $\pConvex = 0$ in the weakly convex case).
    Now, let $\bar{f}_s$ be smoothed with respect to $P_s^{-1/2}$ as defined in \cref{lemma:fSmooth}.
    If $f_s$ are $\pSmooth$ smooth, we get that
    \begin{align*}
        \EEBrk{f_s(x_s)}
        \le
        \EEBrk{f_s(\bar{x}_s) + \nabla f_s(\bar{x}_s)\tran P_s^{-1/2} u_s + \frac{\pSmooth}{2}\norm{P_s^{-1}}}
        =
        \EEBrk{f_s(\bar{x}_s)} + \frac{\pSmooth}{2}\norm{P_s^{-1}}
        \le
        \EEBrk{\bar{f}_s(\bar{x}_s)} + \frac{\pSmooth}{2}\norm{P_s^{-1}},
    \end{align*}
    where the last transition used \cref{lemma:fSmooth}, which also gives us that
    $
        -f_s(x)
        \le
        -\bar{f}_s(x) + \frac{\pSmooth}{2}\norm{P_s^{-1}}
    $. We thus conclude that
    \begin{align*}
        \bcoMemRegret[1]{t}
        =
        \max_{x \in \fConstraint}\brk[c]*{\EEBrk{
            \sum_{s=1}^{t} f_s(x_s)
            -
            \sum_{s=1}^{t} f_s(x)
        }}
        \le
        \max_{x \in \fConstraint}\brk[c]*{
        \EEBrk{
            \sum_{s=1}^{t} \bar{f}_s(\bar{x}_s)
            -
            \sum_{s=1}^{t} \bar{f}_s(x)
        }}
        +
        \pSmooth \sum_{s=1}^{t} \norm{P_s^{-1}}
        ,
    \end{align*}
    and plugging in \cref{eq:3} concludes the smooth case. Finally, if $f_s$ are $\pLip$ Lipschitz then using \cref{lemma:fSmooth} we get that
    \begin{align*}
        \EEBrk{f_s(x_s) - f_s(x)}
        \le
        \EEBrk{f_s(\bar{x}_s) - f_s(x)} + \pLip\norm{P_s^{-1/2}}
        \le
        \EEBrk{\bar{f}_s(\bar{x}_s) - \bar{f}_s(x)} + 2\pLip\norm{P_s^{-1/2}},
    \end{align*}
    and summing over $s$ and plugging \cref{eq:3} concludes the non-smooth case.
\end{proof}

\section{Main Result Proofs}

\subsection{Proof of \cref{lemma:baseBCO}}
\label{sec:proofOfBaseBCOlemma}
This lemma is a direct specification of \cref{lemma:generalBaseBCO} with the appropriate choice of parameters.
For $M \in \MRange$, denote $\vStack{M} \in \RR[d k \memLen]$ the column stacking of $M$. Next, denote $I \in \RR[d k \times d k]$, the identity matrix, and $\mathrm{diag}(\rtOf{1}, \ldots, \rtOf{\memLen}) \in \RR[\memLen \times \memLen]$, the diagonal matrix with $\rt \in \RR[\memLen]$ on its diagonal. Consider \cref{alg:zeroOMD} with $\fConstraint = \MConstraint$ (column stacked), dimension $\dM = d k \memLen$, and
\begin{align*}
    P_t
    =
    \brk[s]{
        \mathrm{diag}(\rtOf{1}, \ldots, \rtOf{\memLen})
        \otimes
        I
    }^{-2},
\end{align*}
where $\otimes$ is the Kronecker product. Then, first, for $M \in \MRange$ we can write the projection as
\begin{align*}
    \Pi_{\MConstraint}^{P_t}(\vStack{M})
    =
    \argmin_{M' \in \MConstraint} 
    \norm{\vStack{M} - \vStack{M'}}_{P_t}^2
    =
    \argmin_{M' \in \MConstraint} 
    \sum_{i=1}^{\memLen} \brk{\rti}^{-2} \norm{M^{\brk[s]{i}} - {M'}^{\brk[s]{i}}}_F^2
    ,
\end{align*}
and since $\MConstraint = \MiConstraint[1] \times \ldots \times \MiConstraint[\memLen]$, each term in the sum may be minimized separately, and so we get that
\begin{align*}
    \Pi_{\MConstraint}^{P_t}(\vStack{M})
    =
    \vStack{
    \Pi_{\MiConstraint[1]}(M^{\brk[s]{1}}), \ldots, \Pi_{\MiConstraint[\memLen]}(M^{\brk[s]{\memLen}})
    }
    ,
\end{align*}
where
$
    \Pi_{\MiConstraint}(M)
    =
    \argmin_{M' \in \MiConstraint} \norm{M - M'}
    .
$
Second, we have that
$
    \gHat
    =
    \vStack{
        \gHat^{\brk[s]{1}}, \ldots, \gHat^{\brk[s]{\memLen}} 
    }
$
and thus the update rule may be rewritten as
\begin{align*}
    \vStack{\barMt[t+1]}
    =
    \vStack{\barMt[t]}
    -
    \eta P_t^{-1} \gHat[t]
    =
    \vStack{
        \barMtOf[t]{1} - \eta \brk{\rtOf[t]{1}}^2 \gHat^{\brk[s]{1}},
        \ldots,
        \barMtOf[t]{\memLen} - \eta \brk{\rtOf[t]{\memLen}}^2 \gHat^{\brk[s]{\memLen}}
    }
    .
\end{align*}
We conclude that the procedure in \cref{lemma:baseBCO} is indeed described by \cref{alg:zeroOMD}. We can now conclude the lemma using \cref{lemma:generalBaseBCO}. A simple calculation shows that
\begin{align*}
    \constraintD^2
    &=
    \max_{M_1, M_2 \in \MConstraint} \norm{M_1 - M_2}_F^2
    \le
    4 n^2 \kappa_B^2 \kappa^6 / \gamma ;
    \\
    \constraintDp^2
    &=
    \max_{M_1, M_2 \in \MConstraint} \norm{M_1 - M_2}_{P_0}^2
    \le
    \memLen n^2.
\end{align*}
Moreover, $D_{P_t}^2 = D_{P_0}^2 + \frac12 \pConvex \eta t D^2$, and $\norm{P_t^{-1}} \le \frac{2}{\pConvex \eta t}$.
Plugging this into \cref{lemma:generalBaseBCO} we get that for all $t \le T$
\begin{align*}
    \bcoMemRegret[1]{t}
    \le
    &\frac{1}{\eta} \brk*{\memLen n^2 + \frac{2\pSmoothF}{\pConvexF}(1 + \log T)}
    +
    \frac{\dM^2 \bcoFeedbackBound^2}{2} \eta T \\
    &+
    \underbrace{
    \frac{2 n^2 \kappa_B^2 \kappa^6 \pConvexF}{\gamma}
    +
    \dM \varepsilon \brk*{
        \sqrt{\memLen} n T
        +
        \sqrt{\frac{4 \pConvexF \eta n^2 \kappa_B^2 \kappa^6 T^3}{\gamma}}}
    }_{R_{\text{low}}}
    ,
\end{align*}
and for $\varepsilon \in \tilde{O}(T^{-1})$, and $\eta \in \tilde{O}(T^{-1/2})$, we indeed have that $R_{\text{low}} \in \tilde{O}(T^{-1/4})$. Finally, $\pMeanShift, \pPredVar$ translate directly between lemmas, thus concluding the proof.

\subsection{Low Order Terms in \cref{thm:main}}
\label{sec:mainproofs}
We summarize the low order terms that were omitted in the last three equations of the proof of \cref{thm:main} given in \cref{sec:mainAlgProof}.
The first of the three explicitly states the lower order term
\begin{align*}
    R_{\text{low}}^{(1)}
    =
    2 \Dxu^2 (\cG + \memLen \cC)
    ,
\end{align*}
which is later omitted in the last step.
The second equation results from invoking \cref{lemma:baseBCO} with $\bcoFeedbackBound = \cC \Dxu^2$ and horizon $T / 2(\memLen+1)$, to bound the second term of \cref{thm:BCOreduction}. Here the terms related to $\pMeanShift, \pMeanShift^2$ were omitted, and satisfy
\begin{align*}
    \sum_{t=1}^{\floor{T / 2(\memLen+1)}+1} \brk{\pLipF \pMeanShift + \pSmoothF \pMeanShift^2}
    \le
    2\pLipF d k \cC \Dxu^2 \sqrt{\frac{\eta \memLen T}{\pConvexF}}
    +
    \frac{2 \pSmoothF d^2 k^2 \memLen^2 \cC^2 \Dxu^4 \eta \log T}{\pConvexF}
    =
    R_{\text{low}}^{(2)}
    .
\end{align*}
The third equation results from plugging in the previous result as well as that of \cref{lemma:baseBCO} with horizon $T / (2(\memLen+1))$, and $\varepsilon = \cG \Dxu^2 / T$ into \cref{thm:BCOreduction}. \cref{lemma:baseBCO} yields a low order term, which is given at the end of the proof in \cref{sec:proofOfBaseBCOlemma}. Plugging in the horizon, $\varepsilon$, and $\bcoFeedbackBound$ this term is given by
\begin{align*}
    R_{\text{low}}^{(3)}
    =
    \frac{2 n^2 \kappa_B^2 \kappa^6 \pConvexF}{\gamma}
    +
    d^2 k^2 \memLen^2  \cG \Dxu^2 \brk*{
        \sqrt{\memLen} n
        +
        \sqrt{\frac{2 \pConvexF \eta n^2 \kappa_B^2 \kappa^6 T}{\gamma(\memLen+1)}}},
\end{align*}
and thus the final low order term is given by
\begin{align*}
    R_{\text{low}}
    =
    R_{\text{low}}^{(1)}
    +
    2(\memLen + 1)^2 R_{\text{low}}^{(2)}
    +
    6 (\memLen + 1) R_{\text{low}}^{(3)}.
\end{align*}
Since $\memLen$ is logarithmic in $T$, and $\eta \in \tilde{O}(T^{-1/2})$, we get that $R_{\text{low}} \in \tilde{O}(T^{1/4})$, as desired.

\subsection{Proof of \ref{item:smoothSurrogate} in \cref{lemma:SurrogateProperties}} \label{sec:fSmoothProof}

\begin{proof}
    Recall from \cref{def:surrogateCosts} that
    $
        f_t(M_{0:H})
        =
        c_t(y_t(M_{0:H}), v_t(M_{0:H}))
        ,
    $
    and denote 
    \begin{align*}
        z_t(M_{0:H})
        =
        \brk[s]{y_t(M_{0:H})\tran v_t(M_{0:H})\tran}\tran
        .
    \end{align*}
    Since $z_t(\cdot)$ is a linear mapping, its Jacobian is constant, and we denote it as $J_{z_t}$. Applying the chain rule, we get that for all $M_0, \ldots, M_H \in \MRange$
    \begin{align*}
        \norm{\nabla^2 f_t(M_{0:H})}
        =
        \norm{J_{z_t}\tran \nabla^2 c_t(y_t(M_{0:H}), v_t(M_{0:H})) J_z}
        \le
        \pSmooth \norm{J_{z_t}}^2,
    \end{align*}
    and thus bounding $\norm{J_{z_t}}$ will show that $f_t$ is smooth.
    To that end, notice that an intermediate step of Lemma 5.6 of \cite{agarwal2019online} shows that for any $M_0, \ldots, M_H, {M'}_0, \ldots, {M'}_H \in \MRange$, we have that
    \begin{align}
    \label{eq:zLipBase}
        \norm{z_t(M_{0:H}) - z_t(M_0, \ldots, {M'}_{H-k}, \ldots, M_H)}
        \le
        5 \kappa_B \kappa^3 W (1 - \gamma)^k \sum_{i=0}^H \norm{M_{H-k}^{\brk[s]{i}} - {M'}_{H-k}^{\brk[s]{i}}}.
    \end{align}
    Recalling that $\norm{\cdot} \le \norm{\cdot}_F$, and using the triangle and Cauchy-Schwarz inequalities we get that
    \begin{align*}
        \norm{z_t(M_{0:H}) - z_t({M'}_{0:H})}
        &\le
        5 \kappa_B \kappa^3 W
        \sum_{k=0}^{H} (1 - \gamma)^k \sum_{i=1}^H \norm{M_{H-k}^{\brk[s]{i}} - {M'}_{H-k}^{\brk[s]{i}}}_F \\
        &\le
        5 \kappa_B \kappa^3 W \sqrt{H}
        \sum_{k=0}^{H} (1 - \gamma)^k \norm{M_{H-k} - {M'}_{H-k}}_F \\
        &\le
        5 \kappa_B \kappa^3 W \sqrt{H} \norm{M_{0:H} - {M'}_{0:H}}_F
        \sqrt{\sum_{k=0}^{H} (1 - \gamma)^{2k}} \\
        &\le
        5 \kappa_B \kappa^3 W \sqrt{\frac{H}{\gamma}} \norm{M_{0:H} - {M'}_{0:H}}_F
        .
    \end{align*}
    Since $\MRange$ contains an open set of ${\RR[(k \times d)\times (\memLen+1)]}$, this Lipschitz property implies that
    $
        \norm{J_z}
        \le
        5 \kappa_B \kappa^3 W \sqrt{\frac{H}{\gamma}}
        \le 
        \sqrt{\pSmoothF / \pSmooth}
        ,
    $
    thus showing that $f_t$ is $\pSmoothF$ smooth. Since $F_t$ results from taking expectation of $f_t$ with respect to the random system noise, it is also $\pSmoothF$ smooth.
    
    Next, recall that $\tilde{f}_t(M) = f_t(M, \ldots, M)$, and thus defining $\tilde{z}_t = z_t(M, \ldots, M)$, and repeating the process above, it suffices to show that $\tilde{z}_t$ is $\sqrt{\pSmoothF / \pSmooth}$ Lipschitz to conclude that $\tilde{f}_t, \tilde{F}_t$ are $\pSmoothF$ smooth. Using \cref{eq:zLipBase} we get that for $M, M' \in \MRange$
    \begin{align*}
        \norm{\tilde{z}_t(M) - \tilde{z}_t({M'})}
        &\le
        5 \kappa_B \kappa^3 W
        \sum_{k=0}^{H} (1 - \gamma)^k \sum_{i=1}^H \norm{M^{\brk[s]{i}} - {M'}^{\brk[s]{i}}}_F \\
        &\le
        5 \kappa_B \kappa^3 W \sqrt{H} \norm{M - {M'}}_F
        \sum_{k=0}^{H} (1 - \gamma)^k \\
        &\le
        \frac{5 \kappa_B \kappa^3 W}{\gamma} \sqrt{\memLen} \norm{M - {M'}}_F \\
        &=
        \sqrt{\pSmoothF / \pSmooth} \norm{M - {M'}}_F
        ,
    \end{align*}
    thus establishing the Lipschitz property and concluding the proof.
\end{proof}

\section{Extensions Proofs}
\label{sec:extensionAnalysis}

We first need to extend the base BCO procedure to the weakly convex cases. Similarly to \cref{lemma:baseBCO}, this is an immediate corollary of \cref{lemma:generalBaseBCO} with appropriate choice of parameters.

\begin{lemma} \label{lemma:baseBCOextended}
    Consider the setting of \cref{sec:bcoMem} with $H=1$ and $\varepsilon \in \tilde{O}(1/T)$, against an adversary that chooses $f_t: \MRange \to \RR$ that are convex. Let $\dM, \constraintD, \rti[0]$ be as in \cref{thm:mainExtended}, and $\bcoMemRegret[1]{t}$ be the regret of a procedure that at time $t$:
    \begin{enumerate}[nosep,label=(\roman*)]
        \item Draws $\Ut[t] \sim \sphere{(k \times d) \times \memLen}$; and plays $\Mt$ where $\Mti[t] = \barMti[t] + \rti[t]\Uti[t]$ $(\forall i \in \brk[s]{\memLen})$
        \item Observes $\bcoFeedback$; and sets $\gHat^{\brk[s]{i}} = (d_{\MConstraint} / \rti[t]) \bcoFeedback \Uti[t]$ $(\forall i \in \brk[s]{\memLen})$
        \hfill (1-point gradient estimate)
        \item Updates
            $
                \barMti[t+1] 
                =
                \Pi_{\MiConstraint}\brk[s]!{\barMti[t] - \eta  \brk{\rti[t]}^2 \gHat^{\brk[s]{i}}}
                \quad
                (\forall i \in \brk[s]{\memLen})
                .
            $
            \hfill (preconditioned update)
    \end{enumerate}
    Suppose that $\abs{\bcoFeedback} \le \bcoFeedbackBound$ then for all $t \le T$:
    \begin{enumerate}
        \item if $f_t$ are $\pLip$-Lipschitz, 
        $
            \eta 
            =
            2 \brk[s]*{\frac{\pLip^2 \constraintD^2}{\dM^6 \constraintD^6 T}}^{1/4}
        $,
        and
        $
            \rti 
            =
            \brk[s]*{
            (\rti[0])^{-2}
            +
            \frac{4 \pLip \sqrt{T}}{\dM \bcoFeedbackBound \constraintD}
            }^{-1/2}
        $
        then 
        \begin{align*}
            \bcoMemRegret[1]{t}
            \le
            4 \sqrt{\dM \pLip \constraintD \bcoFeedbackBound} T^{3/4}
            +
            \tilde{O}(T^{1/4})
            ,
            \qquad
            \pMeanShift
            =
            \frac{\constraintD}{\sqrt{T}}
            ,
            \qquad
            \pPredVar^2 = \frac{\dM \constraintD \bcoFeedbackBound}{4 \pLip \sqrt{T}}
            ;
        \end{align*}
        \item if $f_t$ are $\pSmoothF$ smooth, 
        $
            \eta
            =
            \brk[s]*{
            \frac{2 \pSmoothF \constraintD^2}{\dM^4 \bcoFeedbackBound^4 T}
            }^{1/3}
        $,
        and
        $
            \rti 
            =
            \brk[s]*{
            (\rti[0])^{-2} 
            +
            \brk{\frac{4 \pSmoothF^2 T}{\dM^2 \bcoFeedbackBound^2 \constraintD^2}}^{1/3}
            }^{-1/2}
        $
        then
        \begin{align*}
            \bcoMemRegret[1]{t}
            \le
            \brk{4 \sqrt{\pSmoothF} \dM \bcoFeedbackBound \constraintD T}^{2/3} 
            +
            \tilde{O}(T^{1/3})
            ,
            \qquad
            \pMeanShift
            =
            \frac{\constraintD}{\sqrt{T}}
            \qquad
            \pPredVar^2 
            =
            \brk*{\frac{\dM^2 \bcoFeedbackBound^2 \constraintD^2}{4 \pSmoothF^2 T}}^{1/3}
            .
        \end{align*}
    \end{enumerate}
\end{lemma}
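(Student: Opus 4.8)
The plan is to derive the lemma as an immediate corollary of \cref{lemma:generalBaseBCO}, exactly as \cref{lemma:baseBCO} was, the only change being that the costs are merely (weakly) convex, so we set the strong-convexity parameter $\pConvex = 0$. First I would match the procedure in the statement with \cref{alg:zeroOMD} run over $\fConstraint = \MConstraint$ (column-stacked, ambient dimension $\dM = dk\memLen$), with the \emph{fixed} regularization matrix $P_t \equiv P_0 = \brk[s]{\mathrm{diag}(\rtOf{1},\ldots,\rtOf{\memLen}) \otimes I}^{-2}$; since the prescribed radii $\rti$ do not depend on $t$, this is precisely the form $P_t = P_0 + \tfrac12\pConvex\eta t I$ of \cref{lemma:generalBaseBCO} with $\pConvex = 0$. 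The verifications that the $P_t$-weighted projection onto $\MConstraint$ decomposes coordinate-wise into the Euclidean projections $\Pi_{\MiConstraint}$, and that $\barMt[t+1] = \Pi_{\MConstraint}^{P_t}(\barMt[t] - \eta P_t^{-1}\gHat[t])$ unfolds into $\barMti[t+1] = \Pi_{\MiConstraint}\brk[s]{\barMti[t] - \eta(\rti)^2\gHat^{\brk[s]{i}}}$ for every $i$ are verbatim the computation in the proof of \cref{lemma:baseBCO}, which I would simply reuse.

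Next I would evaluate the quantities entering \cref{lemma:generalBaseBCO}. Write $c$ for the ($t$-independent) constant added to $(\rti[0])^{-2}$ in the definition of $\rti$ — that is, $c = 4\pLip\sqrt{T}/(\dM\bcoFeedbackBound\constraintD)$ in the Lipschitz case and $c = \brk{4\pSmoothF^2 T/(\dM^2\bcoFeedbackBound^2\constraintD^2)}^{1/3}$ in the smooth case. Then $\norm{P_t^{-1/2}} = \max_i\rti \le c^{-1/2}$, $\norm{P_t^{-1}} = \max_i(\rti)^2 \le c^{-1}$, and $D_{P_1}^2 = D_{P_0}^2 \le \memLen n^2 + c\,\constraintD^2$, the last by splitting the quadratic form through $(\rti)^{-2} = (\rti[0])^{-2} + c$ and invoking the two diameter estimates already used for \cref{lemma:baseBCO} ($\max_{M_1,M_2\in\MConstraint}\sum_i (\rti[0])^{-2}\norm{M_1^{\brk[s]{i}}-M_2^{\brk[s]{i}}}_F^2 \le \memLen n^2$ and $\sum_i\norm{M_1^{\brk[s]{i}}-M_2^{\brk[s]{i}}}_F^2 \le \constraintD^2$). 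Plugging these and the prescribed $\eta$ into $\pMeanShift = \dM\eta\bcoFeedbackBound\norm{P_t^{-1/2}}$ and $\pPredVar^2 = \norm{P_t^{-1}}$ of \cref{lemma:generalBaseBCO}, a short computation shows $\dM\eta\bcoFeedbackBound\,c^{-1/2} = \constraintD/\sqrt{T}$ and $c^{-1} = \pPredVar^2$ (as stated) in both cases; hence the stated $\pMeanShift,\pPredVar$ upper-bound the choices from \cref{lemma:generalBaseBCO}, are constant (so decreasing), and meet the requirements of $\baseBCOof{T}$ in \cref{thm:BCOreduction}.

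Finally I would substitute everything into the two regret bounds of \cref{lemma:generalBaseBCO}. In the Lipschitz case, $\bcoMemRegret[1]{t} \le D_{P_1}^2/\eta + \tfrac12\eta\dM^2\bcoFeedbackBound^2 t + \dM\varepsilon t D_{P_t} + 2\pLip\sum_{s\le t}\norm{P_s^{-1/2}}$, where the last sum equals $2\pLip t\,c^{-1/2}$; bounding $t\le T$, the three dominant terms $c\constraintD^2/\eta$, $\tfrac12\eta\dM^2\bcoFeedbackBound^2 T$ and $2\pLip T c^{-1/2}$ are each of order $\sqrt{\dM\pLip\constraintD\bcoFeedbackBound}\,T^{3/4}$ and, with the prescribed $\eta$, sum to at most $4\sqrt{\dM\pLip\constraintD\bcoFeedbackBound}\,T^{3/4}$, while $\memLen n^2/\eta$ and (using $\varepsilon\in\tilde{O}(1/T)$ and $D_{P_t}=D_{P_0}=O(T^{1/4})$) the term $\dM\varepsilon t D_{P_t}$ are $\tilde{O}(T^{1/4})$. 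The smooth case is identical with $\pSmoothF\sum_{s\le t}\norm{P_s^{-1}} = \pSmoothF t\,c^{-1}$ replacing the last term, the dominant terms now of order $\brk{\sqrt{\pSmoothF}\dM\bcoFeedbackBound\constraintD T}^{2/3}$ with total leading constant $4^{2/3}$ and the remainder $\tilde{O}(T^{1/3})$. I do not expect a genuine obstacle; the only delicate part is the arithmetic bookkeeping — checking that the particular $\eta$ in the statement balances the three dominant terms up to the stated leading constants, confirming that the $\varepsilon$- and $\memLen n^2$-induced remainders are genuinely lower order, and keeping the two scales apart ($\dM = dk\memLen$ for the ambient geometry versus $n = \min\brk[c]{d,k}$ in the per-coordinate diameter bounds).
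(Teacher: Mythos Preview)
Your proposal is correct and follows essentially the same route as the paper: both treat the lemma as a direct specialization of \cref{lemma:generalBaseBCO} with $\pConvex=0$, reuse the identification of the procedure with \cref{alg:zeroOMD} carried out in the proof of \cref{lemma:baseBCO}, and then substitute the prescribed $\rti,\eta$ to evaluate $D_{P_1}^2$, $\norm{P_t^{-1/2}}$, $\norm{P_t^{-1}}$, $\pMeanShift$, $\pPredVar$ and the regret bounds. Your introduction of the constant $c$ is a convenient notational device, but the underlying computation and the way the leading and lower-order terms are separated are the same as the paper's.
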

See proof in \cref{sec:proofOfExtendedBaseBCO}.

\begin{proof}[of \cref{thm:mainExtended}]
First, unlike the proof of \cref{thm:main}, here we use $f_t$ as given in \cref{def:surrogateCosts}, without any modification.
As before we view \cref{alg:main} in the context of the BCO with memory setting presented in \cref{sec:bcoMem}. The adversary's noise $\xi_t$ is now degenerate ($w_t$ are not stochastic), the costs are given by $f_t$ and by \cref{lemma:controlToOCO}, the feedback satisfies
\begin{align*}
    \abs{
        \ct{x_t}{u_t} - f_t(M_{t-1-2\memLen:t})
    }
    \le
    \frac{\cG \Dxu^2}{T},
\end{align*}
and thus $\varepsilon = {\cG \Dxu^2}/{T}$, and the feedback $\ct{\xt}{\ut}$ is bounded by $\cC \Dxu^2$. Let $\bcoMemRegret[\memLen+1]{T}$ be the regret of \cref{alg:main} against an adversary with memory $\memLen + 1$, and notice that our choice of $\rti$, and in particular $\rti[0]$, ensures that $M_t \in \MRange$. Since the third part of \cref{lemma:controlToOCO} is, in fact, proven for $f_t$ rather than $F_t$ (the latter is an immediate corollary) and so we have that
\begin{align*}
    \controlRegret
    \le
    \bcoMemRegret[\memLen+1]{T}
    +
    2 \Dxu^2 (\cG + H \cC)
    .
\end{align*}
Since the second term is at most poly-log in $T$, it remains to bound $\bcoMemRegret[\memLen+1]{T}$ for each set of assumptions and parameter choices.
Recall that \cref{alg:main} fits the mold of our reduction procedure given in \cref{alg:bcoReduction} with base procedure as in \cref{lemma:baseBCOextended}.
Moving forward, our analysis is divided in two. Consider the first set of parameter choices (with no smoothness assumptions). By \cref{lemma:SurrogateProperties}, $f_t$ are coordinate-wise $\pLipF$ Lipschitz, and thus $\tilde{f}_t : M \mapsto f_t(M,\ldots, M)$ are $(\memLen+1)\pLipF$ Lipschitz. Invoking \cref{lemma:baseBCOextended} with 
$
\bcoFeedbackBound 
=
\cC \Dxu^2
,
\pLip
=
(\memLen+1)\pLipF
$
and horizon $T / (\memLen+1)$, the second term of \cref{thm:BCOreduction} (with no smoothness assumption) satisfies that
\begin{align*}
    \frac12 \pLipF (\memLen + 1)^2 \sum_{t=1}^{\floor{T / (\memLen+1)}} \pMeanShift +  2 \pPredVar
    &\le
    \frac12 \pLipF (\memLen + 1) T (\pMeanShift +  2 \pPredVar)
    \\
    &\le
    \frac12 \pLipF (\memLen + 1) T \brk*{\constraintD \sqrt{\frac{\memLen+1}{T}} +  \sqrt{\frac{\dM \bcoFeedbackBound \constraintD}{\pLipF}} \brk*{\frac{\memLen+1}{T}}^{1/4}}
    \\
    &\le
    \frac12 \sqrt{\dM \bcoFeedbackBound \constraintD \pLipF (\memLen+1)^{5/2}} T^{3/4}
    +
    \tilde{O}(T^{1/2})
    ,
\end{align*}
and further using \cref{lemma:baseBCOextended} to bound the first term of \cref{thm:BCOreduction}, and simplifying, we get that
\begin{align*}
    \bcoMemRegret[\memLen+1]{T}
    &\le
    13 \sqrt{\dM \bcoFeedbackBound \constraintD \pLipF (\memLen+1)^{5/2}} T^{3/4}
    +
    \tilde{O}(T^{1/2}) \\
    &\le
    13 \sqrt{2 d k n \cC \Dxu^2 \kappa_B \kappa^3 \gamma^{-1/2} \pLipF (\memLen+1)^{7/2}} T^{3/4}
    +
    \tilde{O}(T^{1/2})
    ,
\end{align*}
where the last step only plugs in the values of $\dM, \bcoFeedbackBound, \constraintD$. This concludes the proof of the non-smooth case.
Now suppose that $c_t$ are $\pSmooth$ smooth and \cref{alg:main} is run with our second choice of parameters. Notice that the proof of the smoothness in \cref{lemma:SurrogateProperties} (see \cref{sec:fSmoothProof}) actually shows that both $f_t, \tilde{f}_t$ are $\pSmoothF$ smooth, and as before, we invoke \cref{lemma:baseBCOextended} with our parameter choices to bound the second term of \cref{thm:BCOreduction} (with the smoothness assumption) by
\begin{align*}
    \frac12 (\memLen+1)^2
    \sum_{t=1}^{\floor{T / 2(\memLen+1)}+1} 
    &\brk{(\memLen+1)\pLipF \pMeanShift + \pSmoothF \pMeanShift^2 + 6 \pSmoothF \pPredVar^2}
    \le
    (\memLen + 1) T \brk{(\memLen+1)\pLipF \pMeanShift + \pSmoothF \pMeanShift^2 + 6 \pSmoothF \pPredVar^2}
    \\
    &\le
    (\memLen + 1) T \brk*{
        \pLipF \constraintD \sqrt{\frac{(\memLen+1)^3}{T}}
        + 
        \pSmoothF \constraintD^2 {\frac{\memLen+1}{T}}
        +
        6 \pSmoothF \brk*{\frac{\dM^2 \bcoFeedbackBound^2 \constraintD^2 (\memLen+1)}{4 \pSmoothF^2 T}}^{1/3}
    }
    \\
    &\le
    4 \brk*{\sqrt{\pSmoothF} \dM \bcoFeedbackBound \constraintD (\memLen+1)^2 T}^{2/3}
    +
    \tilde{O}(T^{1/2})
    ,
\end{align*}
and further using \cref{lemma:baseBCOextended} to bound the first term of \cref{thm:BCOreduction}, and simplifying, we get that
\begin{align*}
    \bcoMemRegret[\memLen+1]{T}
    &\le
    12 \brk*{\sqrt{\pSmoothF} \dM \bcoFeedbackBound \constraintD (\memLen+1)^2 T}^{2/3}
    +
    \tilde{O}(T^{1/2})
    \\
    &\le
    12 \brk*{2 d k n \cC \Dxu^2 \kappa_B \kappa^3 \sqrt{\pSmoothF / \gamma} (\memLen+1)^3 T}^{2/3}
    +
    \tilde{O}(T^{1/2})
    ,
\end{align*}
where the last step only plugs in the values of $\dM, \bcoFeedbackBound, \constraintD$.
\end{proof}

\subsection{Proof of \cref{lemma:baseBCOextended}}
\label{sec:proofOfExtendedBaseBCO}

    Recall \cref{sec:proofOfBaseBCOlemma}, where we show that \cref{lemma:baseBCO} is a direct corollary of \cref{lemma:generalBaseBCO}. As the procedure itself does not change here, the proof is concluded by plugging-in our assumptions and parameter choices into \cref{lemma:generalBaseBCO}.
    For the first case, $f_t$ are $\pLip$ Lipschitz, and our choice of parameters gives that
    \begin{align*}
        D_{P_t}^2
        =
        D_{P_1}^2
        =
        D_{P_0}^2 + \frac{4 \pLip \constraintD \sqrt{T}}{\dM \bcoFeedbackBound}
        =
        \memLen n^2 + \frac{4 \pLip \constraintD \sqrt{T}}{\dM \bcoFeedbackBound},\\
        \norm{P_s^{-1/2}}
        \le
        \rtOf[1]{1}
        \le
        \frac12 \sqrt{\frac{\constraintD \dM \bcoFeedbackBound}{\pLip \sqrt{T}}},
    \end{align*}
    and plugging into \cref{lemma:generalBaseBCO} we get that
    \begin{align*}
        \bcoMemRegret[1]{t}
        \le
        4 \sqrt{\dM \pLip \constraintD \bcoFeedbackBound} T^{3/4}
        +
        \underbrace{
            \frac{\memLen n^2}{\eta}
            +
            \dM \varepsilon T D_{P_1}
        }_{R_{\text{low}}}
        ,
    \end{align*}
    and by our assumptions we indeed get that $R_{\text{low}} \in \tilde{O}(T^{1/4})$ as desired. To conclude the non-smooth case, we further apply \cref{lemma:generalBaseBCO} to get that
    \begin{align*}
        \pMeanShift
        =
        \frac{\constraintD}{\sqrt{T}}
        \qquad
        \pPredVar^2 = \frac{\dM \constraintD \bcoFeedbackBound}{4 \pLip \sqrt{T}}
        .
    \end{align*}
    Next, for the second case, $f_t$ are $\pSmoothF$ smooth and our choice of parameters gives that
    \begin{align*}
        D_{P_t}^2
        =
        D_{P_1}^2
        =
        D_{P_0}^2 
        +
        \constraintD^2
        \brk*{\frac{4 \pSmoothF^2 T}{\dM^2 \bcoFeedbackBound^2 \constraintD^2}}^{1/3}
        \le
        \memLen n^2 
        +
        \brk*{\frac{4 \pSmoothF^2 \constraintD^4 T}{\dM^2 \bcoFeedbackBound^2}}^{1/3},
        \\
        \norm{P_s^{-1}}
        \le
        \brk{\rtOf[1]{1}}^2
        \le
        \brk*{\frac{\dM^2 \bcoFeedbackBound^2 \constraintD^2}{4 \pSmoothF^2 T}}^{1/3},
    \end{align*}
    and plugging into \cref{lemma:generalBaseBCO} we get that
    \begin{align*}
        \bcoMemRegret[1]{t}
        \le
        \brk{4 \sqrt{\pSmoothF} \dM \bcoFeedbackBound \constraintD T}^{2/3} 
        +
        \underbrace{
            \frac{\memLen n^2}{\eta}
            +
            \dM \varepsilon T D_{P_1}
        }_{R_{\text{low}}}
        ,
    \end{align*}
    and by our assumptions we indeed get that $R_{\text{low}} \in \tilde{O}(T^{1/3})$ as desired. To conclude the smooth case, and thus the proof, we further apply \cref{lemma:generalBaseBCO} to get that
    \begin{align*}
        \pMeanShift
        =
        \frac{\constraintD}{\sqrt{T}}
        \qquad
        \pPredVar^2 
        =
        \brk*{\frac{\dM^2 \bcoFeedbackBound^2 \constraintD^2}{4 \pSmoothF^2 T}}^{1/3}
        ,
    \end{align*}
    as desired.

\end{document}